\providecommand{\U}[1]{\protect\rule{.1in}{.1in}}
\begin{document}

\title{\textsl{Watersheds, waterfalls, on edge or node weighted graphs}}
\author{Fernand Meyer\\Centre de Morphologie Math\'{e}matique\\Mines-ParisTech}
\date{2012 February 29}
\maketitle

\section{Introduction}%

%TCIMACRO{\TeXButton{BeginFrame}{\begin{frame}}}%
%BeginExpansion
\begin{frame}%
%EndExpansion
%

%TCIMACRO{\QTR{frametitle}{Introduction}}%
%BeginExpansion
\frametitle{Introduction}%
%EndExpansion

The watershed transform is one of the major image segmentation tools
\cite{beucher79}, used in the community of mathematical morphology and beyond.
If the watershed is a successful concept, there is another side of the coin: a
number of definitions and algorithms coexist, claiming to construct a
wartershed line or catchment basins, although they obviously are not
equivalent. We have presented how the idea was conceptualized and implemented
as algorithms or hardware solutions in a brief note :" The watershed concept
and its use in segmentation : a brief history" (arXiv:1202.0216v1), which
contains an extensive bibliography.\ See also \cite{Roerdink01thewatershed}
for an extensive review on the watershed concepts and construction modes.%

%TCIMACRO{\TeXButton{EndFrame}{\end{frame}}}%
%BeginExpansion
\end{frame}%
%EndExpansion
%

%TCIMACRO{\TeXButton{BeginFrame}{\begin{frame}}}%
%BeginExpansion
\begin{frame}%
%EndExpansion
%

%TCIMACRO{\QTR{frametitle}{Introduction}}%
%BeginExpansion
\frametitle{Introduction}%
%EndExpansion

The present work studies the topography of a relief defined on a node or edge
weighted graph with morphological tools.\ The catchment basin of a minimum is
defined as the sets of points it is possible to reach by a non descending path
starting from this minimum.\ The watershed zone, i.e. the points linked with
two distinct minima through a non ascending path, is more and more reduced as
one considers steeper paths.%

%TCIMACRO{\TeXButton{EndFrame}{\end{frame}}}%
%BeginExpansion
\end{frame}%
%EndExpansion
%

%TCIMACRO{\TeXButton{BeginFrame}{\begin{frame}}}%
%BeginExpansion
\begin{frame}%
%EndExpansion
%

%TCIMACRO{\QTR{frametitle}{Outline}}%
%BeginExpansion
\frametitle{Outline}%
%EndExpansion

\begin{itemize}
\item Reminders on weighted graphs

\item Distances on node or edge weighted graphs

\item Adjunctions between nodes and edges

\item The flooding adjunction.\ 

\item Invariants of the flooding and closing adjunction

\item Flooding graph
\end{itemize}

%

%TCIMACRO{\TeXButton{EndFrame}{\end{frame}}}%
%BeginExpansion
\end{frame}%
%EndExpansion
%

%TCIMACRO{\TeXButton{BeginFrame}{\begin{frame}}}%
%BeginExpansion
\begin{frame}%
%EndExpansion
%

%TCIMACRO{\QTR{frametitle}{Outline}}%
%BeginExpansion
\frametitle{Outline}%
%EndExpansion

\begin{itemize}
\item Paths of steepest descent and k-steep graphs

\item The scissor operator, minimum spanning forests and watershed partitions

\item Lexicographic distances and SKIZ

\item The waterfall hierarchy.\ 

\item Emergence and role of the minimum spanning tree

\item Discussion and conclusion
\end{itemize}

%

%TCIMACRO{\TeXButton{EndFrame}{\end{frame}}}%
%BeginExpansion
\end{frame}%
%EndExpansion
%

%TCIMACRO{\TeXButton{BeginFrame}{\begin{frame}}}%
%BeginExpansion
\begin{frame}%
%EndExpansion

\begin{center}
{\Large \alert{Reminder on adjunctions}}
\end{center}

The following section contains a reminder on adjunctions linking erosions and
dilations by pairs and from which openings and closings are derived
\cite{serra88},\cite{heijmans90},\cite{heijmans94}%

%TCIMACRO{\TeXButton{EndFrame}{\end{frame}}}%
%BeginExpansion
\end{frame}%
%EndExpansion
%

%TCIMACRO{\TeXButton{BeginFrame}{\begin{frame}}}%
%BeginExpansion
\begin{frame}%
%EndExpansion
%

%TCIMACRO{\QTR{frametitle}{Preamble on adjunctions}}%
%BeginExpansion
\frametitle{Preamble on adjunctions}%
%EndExpansion

Let $\mathcal{T}$ a complete totally ordered lattice, and $\mathcal{D}%
$,$\mathcal{E}$ be arbitrary sets \newline$O:$ the smallest element and
$\Omega\,:$ the largest element of $\mathcal{T}$\newline$\operatorname*{Fun}%
(\mathcal{D}$,$\mathcal{T)}$ : the image defined on the support $\mathcal{D}$
with value in $\mathcal{T}\newline\operatorname*{Fun}(\mathcal{E}%
$,$\mathcal{T)}$ : the image defined on the support $\mathcal{E}$ with value
in $\mathcal{T}$

\bigskip

Let $f$ be a function of $\operatorname*{Fun}(\mathcal{D}$,$\mathcal{T)}$ and
$g$ be a function of $\operatorname*{Fun}(\mathcal{E}$,$\mathcal{T)}$%
\newline$\alpha$ : $\operatorname*{Fun}(\mathcal{D}$,$\mathcal{T)}%
\rightarrow\operatorname*{Fun}(\mathcal{E}$,$\mathcal{T)}$ and \newline$\beta$
: $\operatorname*{Fun}(\mathcal{E}$,$\mathcal{T)}\rightarrow
\operatorname*{Fun}(\mathcal{D}$,$\mathcal{T)}$ be two operators

\begin{definition}
$\alpha$ and $\beta$ form an adjunction if and only if :\newline for any $f$
in $\operatorname*{Fun}(\mathcal{D}$,$\mathcal{T)}$ and $g$ in
$\operatorname*{Fun}(\mathcal{E}$,$\mathcal{T)}$ : $\alpha f<g\Leftrightarrow
f<\beta g$
\end{definition}

%

%TCIMACRO{\TeXButton{EndFrame}{\end{frame}}}%
%BeginExpansion
\end{frame}%
%EndExpansion
%

%TCIMACRO{\TeXButton{BeginFrame}{\begin{frame}}}%
%BeginExpansion
\begin{frame}%
%EndExpansion
%

%TCIMACRO{\QTR{frametitle}{Erosions and dilations}}%
%BeginExpansion
\frametitle{Erosions and dilations}%
%EndExpansion

\begin{theorem}
If ($\alpha,\beta)$ form an adjunction, then $\alpha$ is a dilation (it
commutes with the supremum of functions in $\operatorname*{Fun}(\mathcal{D}%
$,$\mathcal{T)}$ )\newline and $\beta$ is an erosion (it commutes with the
infimum of functions in $\operatorname*{Fun}(\mathcal{E}$,$\mathcal{T)}$ )
\end{theorem}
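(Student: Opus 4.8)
The plan is to derive everything from the single defining equivalence $\alpha f<g\Leftrightarrow f<\beta g$ (reading the relation ``$<$'' as the order of $\mathcal{T}$, extended pointwise), together with the observation that, since $\mathcal{T}$ is a complete totally ordered lattice, both $\operatorname*{Fun}(\mathcal{D},\mathcal{T})$ and $\operatorname*{Fun}(\mathcal{E},\mathcal{T})$ are complete lattices for the pointwise order, so that arbitrary suprema and infima of families of functions exist.

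First I would record that $\alpha$ and $\beta$ are isotone. From $\alpha f\leq\alpha f$ and the adjunction we get $f\leq\beta(\alpha f)$; hence if $f_{1}\leq f_{2}$ then $f_{1}\leq f_{2}\leq\beta(\alpha f_{2})$, and applying the adjunction in the other direction gives $\alpha f_{1}\leq\alpha f_{2}$. The argument for $\beta$ is symmetric, using $g\geq\alpha(\beta g)$.

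Next comes the core computation. Let $(f_{i})_{i\in I}$ be an arbitrary family in $\operatorname*{Fun}(\mathcal{D},\mathcal{T})$, put $f=\bigvee_{i}f_{i}$ and $g=\bigvee_{i}\alpha f_{i}$ (both existing by completeness; for $I=\varnothing$ these are the least element $O$). By isotony $\alpha f\geq\alpha f_{i}$ for each $i$, hence $\alpha f\geq g$. Conversely $\alpha f_{i}\leq g$ for each $i$, so the adjunction gives $f_{i}\leq\beta g$ for each $i$, whence $f=\bigvee_{i}f_{i}\leq\beta g$, and applying the adjunction once more yields $\alpha f\leq g$. Therefore $\alpha(\bigvee_{i}f_{i})=\bigvee_{i}\alpha f_{i}$, i.e. $\alpha$ is a dilation. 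The statement for $\beta$ follows by the order-dual argument: with $(g_{i})_{i\in I}$ in $\operatorname*{Fun}(\mathcal{E},\mathcal{T})$ and $h=\bigwedge_{i}\beta g_{i}$, isotony gives $\beta(\bigwedge_{i}g_{i})\leq h$, while $h\leq\beta g_{i}$ for all $i$ gives $\alpha h\leq g_{i}$ for all $i$, so $\alpha h\leq\bigwedge_{i}g_{i}$, and the adjunction gives $h\leq\beta(\bigwedge_{i}g_{i})$; hence $\beta$ commutes with infima.

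I do not expect a genuine obstacle here. The only points that need attention are that one must invoke the adjunction twice in the key step — once to push all the $f_{i}$ below $\beta g$, once to pull the resulting bound back through $\alpha$ — and that the degenerate empty-family case should not be forgotten: it forces $\alpha O=O$ (and dually $\beta\Omega=\Omega$), which is exactly what ``commuting with the supremum of every family, including the empty one'' demands, and it falls out of the very same computation.
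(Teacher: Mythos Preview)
Your proof is correct and uses the same underlying idea as the paper --- the adjunction passes quantifiers over the index set back and forth --- but the packaging differs. The paper writes a single chain of bi-implications
\[
f<\beta\!\bigwedge_i g_i \;\Leftrightarrow\; \alpha f<\bigwedge_i g_i \;\Leftrightarrow\; \forall i:\alpha f<g_i \;\Leftrightarrow\; \forall i:f<\beta g_i \;\Leftrightarrow\; f<\bigwedge_i\beta g_i,
\]
valid for an arbitrary test function $f$, and then specializes $f$ to each side to extract the two inequalities; isotony is never invoked (it is proved as a separate lemma afterwards). You instead establish isotony first and use it for one inequality, then apply the adjunction twice for the other. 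The paper's route is a little slicker and self-contained; yours is more explicit about where each inequality comes from and makes the empty-family case ($\alpha O=O$, $\beta\Omega=\Omega$) visible, which the paper relegates to a remark.
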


\textbf{Proof: }Let $f$ be a function of $\operatorname*{Fun}(\mathcal{D}%
$,$\mathcal{T)}$ and $\left(  g\right)  _{i}$ functions of
$\operatorname*{Fun}(\mathcal{E}$,$\mathcal{T)}$\newline Then $f<\beta%
%TCIMACRO{\tbigwedge \limits_{i}}%
%BeginExpansion
{\textstyle\bigwedge\limits_{i}}
%EndExpansion
g_{i}\Leftrightarrow\alpha f<%
%TCIMACRO{\tbigwedge \limits_{i}}%
%BeginExpansion
{\textstyle\bigwedge\limits_{i}}
%EndExpansion
g_{i}\Leftrightarrow\forall i:\alpha f<g_{i}\Leftrightarrow\forall i:f<\beta
g_{i}\Leftrightarrow f<%
%TCIMACRO{\tbigwedge \limits_{i}}%
%BeginExpansion
{\textstyle\bigwedge\limits_{i}}
%EndExpansion
\beta g_{i}$\newline Reading these equivalences from left to right and
replacing $f$ by $\beta%
%TCIMACRO{\tbigwedge \limits_{i}}%
%BeginExpansion
{\textstyle\bigwedge\limits_{i}}
%EndExpansion
g_{i}$ implies that $\beta%
%TCIMACRO{\tbigwedge \limits_{i}}%
%BeginExpansion
{\textstyle\bigwedge\limits_{i}}
%EndExpansion
g_{i}<%
%TCIMACRO{\tbigwedge \limits_{i}}%
%BeginExpansion
{\textstyle\bigwedge\limits_{i}}
%EndExpansion
\beta g_{i}$.\newline Reading these equivalences from right to left and
replacing $f$ by $%
%TCIMACRO{\tbigwedge \limits_{i}}%
%BeginExpansion
{\textstyle\bigwedge\limits_{i}}
%EndExpansion
\beta g_{i}$ implies that $%
%TCIMACRO{\tbigwedge \limits_{i}}%
%BeginExpansion
{\textstyle\bigwedge\limits_{i}}
%EndExpansion
\beta g_{i}<\beta%
%TCIMACRO{\tbigwedge \limits_{i}}%
%BeginExpansion
{\textstyle\bigwedge\limits_{i}}
%EndExpansion
g_{i}$\newline establishing that $\beta%
%TCIMACRO{\tbigwedge \limits_{i}}%
%BeginExpansion
{\textstyle\bigwedge\limits_{i}}
%EndExpansion
g_{i}=%
%TCIMACRO{\tbigwedge \limits_{i}}%
%BeginExpansion
{\textstyle\bigwedge\limits_{i}}
%EndExpansion
\beta g_{i}.\ $%

%TCIMACRO{\TeXButton{EndFrame}{\end{frame}}}%
%BeginExpansion
\end{frame}%
%EndExpansion
%

%TCIMACRO{\TeXButton{BeginFrame}{\begin{frame}}}%
%BeginExpansion
\begin{frame}%
%EndExpansion
%

%TCIMACRO{\QTR{frametitle}{Erosions and dilations (2)}}%
%BeginExpansion
\frametitle{Erosions and dilations (2)}%
%EndExpansion

Similarly we show that $\delta$ is a dilation, i.e.\ commutes with the union.

\textbf{Remark: }Calling $O\ $a constant function equal to $O,$ we have for
any $f$ : $O<\beta f$ implying $\alpha O<f.\ $This relation is true for all
$f,$ indicating that $\alpha O=O.$\newline Similarly, we show that
$\beta\Omega=\Omega,$ where $\Omega$ is the constant function equal to
$\Omega.$%

%TCIMACRO{\TeXButton{EndFrame}{\end{frame}}}%
%BeginExpansion
\end{frame}%
%EndExpansion
%

%TCIMACRO{\TeXButton{BeginFrame}{\begin{frame}}}%
%BeginExpansion
\begin{frame}%
%EndExpansion
%

%TCIMACRO{\QTR{frametitle}{Increasing operators}}%
%BeginExpansion
\frametitle{Increasing operators}%
%EndExpansion

\begin{Lemma}
$\alpha $ and $\beta $ are increasing operators.
\end{Lemma}

\begin{proof}
If $f<g$ then $\beta(f\wedge g)=\beta(f)=\beta(f)\wedge\beta(g)<\beta(g)$
\end{proof}

%

%TCIMACRO{\TeXButton{EndFrame}{\end{frame}}}%
%BeginExpansion
\end{frame}%
%EndExpansion
%

%TCIMACRO{\TeXButton{BeginFrame}{\begin{frame}}}%
%BeginExpansion
\begin{frame}%
%EndExpansion
%

%TCIMACRO{\QTR{frametitle}{From one operator to the other}}%
%BeginExpansion
\frametitle{From one operator to the other}%
%EndExpansion

From the relation $\alpha f<g\Leftrightarrow f<\beta g$ one derives
expressions of one operator in terms of the other one.\ 

If $\alpha$ is known, $\beta$ may be expressed as $\beta g=%
%TCIMACRO{\tbigvee }%
%BeginExpansion
{\textstyle\bigvee}
%EndExpansion
\left\{  f\mid\alpha f<g\right\}  .$

Inversely if $\beta$ is known, then $\alpha f=%
%TCIMACRO{\tbigwedge }%
%BeginExpansion
{\textstyle\bigwedge}
%EndExpansion
\left\{  g\mid f<\beta g\right\}  $%

%TCIMACRO{\TeXButton{EndFrame}{\end{frame}}}%
%BeginExpansion
\end{frame}%
%EndExpansion
%

%TCIMACRO{\TeXButton{BeginFrame}{\begin{frame}}}%
%BeginExpansion
\begin{frame}%
%EndExpansion
%

%TCIMACRO{\QTR{frametitle}{Opening and closing}}%
%BeginExpansion
\frametitle{Opening and closing}%
%EndExpansion

\begin{lemma}
The operator $\beta\alpha$ is a closing : increasing, extensive and
idempotent.\ Similarly the operator $\alpha\beta$ is an opening : increasing,
anti-extensive and idempotent.\ 
\end{lemma}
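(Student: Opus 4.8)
The plan is to prove the two halves in parallel, establishing the three defining properties (increasing, extensivity/anti-extensivity, idempotence) of $\beta\alpha$ and $\alpha\beta$ using only the adjunction identity $\alpha f < g \Leftrightarrow f < \beta g$ together with the facts already proved: both $\alpha$ and $\beta$ are increasing, and they form a Galois-type pair. I will write out the argument for $\beta\alpha$; the statement for $\alpha\beta$ follows by the dual argument, obtained by swapping the roles of $\alpha$ and $\beta$ and reversing the order relations.

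First I would note that $\beta\alpha$ is increasing, being the composite of the two increasing operators $\alpha$ and $\beta$. Next, extensivity: for a fixed $f$, apply the adjunction to the trivial inequality $\alpha f < \alpha f$, which reads as $\alpha f < \alpha f$; taking $g = \alpha f$ in $\alpha f < g \Leftrightarrow f < \beta g$ gives immediately $f < \beta\alpha f$, so $\beta\alpha$ is extensive. Dually, applying the adjunction to $\beta g < \beta g$ with $f = \beta g$ yields $\alpha\beta g < g$, so $\alpha\beta$ is anti-extensive.

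Then I would prove idempotence. From extensivity of $\beta\alpha$ we have $f < \beta\alpha f$; applying the increasing operator $\alpha$ gives $\alpha f < \alpha\beta\alpha f$. On the other hand, anti-extensivity of $\alpha\beta$ applied to $g = \alpha f$ gives $\alpha\beta\alpha f < \alpha f$. The two inequalities force $\alpha\beta\alpha f = \alpha f$; applying $\beta$ to both sides yields $\beta\alpha\beta\alpha f = \beta\alpha f$, which is idempotence of $\beta\alpha$. The same chain with the roles reversed (extensivity of $\beta\alpha$ applied after $\beta$, anti-extensivity of $\alpha\beta$) shows $\beta\alpha\beta g = \beta g$ and hence $\alpha\beta\alpha\beta g = \alpha\beta g$, the idempotence of $\alpha\beta$.

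I do not expect a serious obstacle here: everything reduces to feeding well-chosen arguments into the adjunction equivalence and into the monotonicity of $\alpha$ and $\beta$. The only point requiring a little care is the direction of the inequalities when passing to the dual statement for $\alpha\beta$ — one must consistently reverse each $<$ and swap $\alpha \leftrightarrow \beta$, rather than mixing the two pictures. A secondary subtlety is that the key identity $\alpha\beta\alpha = \alpha$ (and dually $\beta\alpha\beta = \beta$) is really the engine behind idempotence, so I would state it explicitly as an intermediate step before concluding.
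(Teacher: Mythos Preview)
Your proposal is correct and follows essentially the same route as the paper: increasingness from composition of increasing maps, extensivity and anti-extensivity by feeding the trivial inequalities $\alpha f<\alpha f$ and $\beta g<\beta g$ into the adjunction, and idempotence via the intermediate identity $\alpha\beta\alpha=\alpha$ obtained by combining $\alpha f<\alpha\beta\alpha f$ (monotonicity on extensivity) with $\alpha\beta\alpha f<\alpha f$ (anti-extensivity at $\alpha f$), then applying $\beta$. Your explicit flagging of $\alpha\beta\alpha=\alpha$ and $\beta\alpha\beta=\beta$ as the key intermediate step matches the paper's argument exactly.
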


\textbf{Proof: }Openings and closings, obtained by composition of increasing
operators, are increasing.\ \newline By adjunction we obtain $\alpha f<\alpha
f\Rightarrow f<\beta\alpha f$ showing that the closing is extensive and
$\alpha\beta f<f\Longleftarrow\beta f<\beta f$ showing that the opening is
anti-extensive. \newline In particular, applying an opening to $\alpha f$
yields $\alpha f>\alpha\beta\alpha f$ .\newline On the other hand, $\alpha$
being increasing, and the closing extensive :\newline$f<\beta\alpha
f\Rightarrow\alpha f<\alpha\beta\alpha f$ showing that $\alpha f=\alpha
\beta\alpha f$\newline Applying $\beta$ on both sides yields $\beta\alpha
f=\beta\alpha\beta\alpha f$%

%TCIMACRO{\TeXButton{EndFrame}{\end{frame}}}%
%BeginExpansion
\end{frame}%
%EndExpansion
%

%TCIMACRO{\TeXButton{BeginFrame}{\begin{frame}}}%
%BeginExpansion
\begin{frame}%
%EndExpansion

\begin{center}
{\Large \alert{The family of invariants of an opening or a closing}}
\end{center}

\bigskip

\bigskip

We call $\operatorname*{Inv}(\gamma)$ and \ $\operatorname*{Inv}(\varphi)$ the
family of invariants of $\gamma$ and $\varphi.$%

%TCIMACRO{\TeXButton{EndFrame}{\end{frame}}}%
%BeginExpansion
\end{frame}%
%EndExpansion
%

%TCIMACRO{\TeXButton{BeginFrame}{\begin{frame}}}%
%BeginExpansion
\begin{frame}%
%EndExpansion
%

%TCIMACRO{\QTR{frametitle}{The family of invariants of an opening}}%
%BeginExpansion
\frametitle{The family of invariants of an opening}%
%EndExpansion

\begin{lemma}
The family of invariants of an opening is closed by union. And the family of
closings is closed by intersection.
\end{lemma}

\textbf{Proof: }Let us prove it for openings ; the result for closings being
obtained by duality.\ Suppose that $g_{1}$ and $g_{2}$ are invariant by the
opening $\gamma$ : $\gamma(g_{1})=g_{1}$ and $\gamma(g_{2})=g_{2}.$

Then $\gamma(g_{1}\vee g_{2})\leq g_{1}\vee g_{2}=\gamma(g_{1})\vee
\gamma(g_{2})$ by antiextensivity

And $\gamma(g_{1}\vee g_{2})\geq\gamma(g_{1})\vee\gamma(g_{2})$ as $\gamma$ is increasing.%

%TCIMACRO{\TeXButton{EndFrame}{\end{frame}}}%
%BeginExpansion
\end{frame}%
%EndExpansion
%

%TCIMACRO{\TeXButton{BeginFrame}{\begin{frame}}}%
%BeginExpansion
\begin{frame}%
%EndExpansion
%

%TCIMACRO{\QTR{frametitle}{Invariants of an opening or a closing}}%
%BeginExpansion
\frametitle{Invariants of an opening or a closing}%
%EndExpansion

\begin{itemize}
\item $\alpha f\in\operatorname*{Inv}(\gamma)$\ as $\alpha f=\alpha\beta\alpha
f=(\alpha\beta)\alpha f$

\item $\beta f\in\operatorname*{Inv}(\varphi)$\ as $\beta f=\beta\alpha\beta
f=(\beta\alpha)\beta f$
\end{itemize}

%

%TCIMACRO{\TeXButton{EndFrame}{\end{frame}}}%
%BeginExpansion
\end{frame}%
%EndExpansion
%

%TCIMACRO{\TeXButton{BeginFrame}{\begin{frame}}}%
%BeginExpansion
\begin{frame}%
%EndExpansion
%

%TCIMACRO{\QTR{frametitle}{The opening as pseudo-inverse operator of the
%erosion}}%
%BeginExpansion
\frametitle{The opening as pseudo-inverse operator of the erosion}%
%EndExpansion

* The erosion has no inverse as $\alpha\beta g\leq g.$

\bigskip

* The opening is the pseudo inverse of the erosion : $\alpha\beta g\ $is the
smallest set having the same erosion as $g.$\newline

\textbf{Proof : }Suppose that $f$ verifies $\beta f=\beta g$ which implies
$\alpha\beta f=\alpha\beta g.\ $If $f\leq\alpha\beta g,$ we have $\alpha\beta
f\leq f\leq\alpha\beta g$ showing that $f=\alpha\beta g$

\bigskip

* if If $g\in\operatorname*{Inv}(\gamma)$, we have $\gamma g=\alpha\beta g=g$
: on $\operatorname*{Inv}(\gamma),$ $\alpha=\beta^{-1}$%

%TCIMACRO{\TeXButton{EndFrame}{\end{frame}}}%
%BeginExpansion
\end{frame}%
%EndExpansion
%

%TCIMACRO{\TeXButton{BeginFrame}{\begin{frame}}}%
%BeginExpansion
\begin{frame}%
%EndExpansion
%

%TCIMACRO{\QTR{frametitle}{The closing as pseudo-inverse operator of the
%dilation}}%
%BeginExpansion
\frametitle{The closing as pseudo-inverse operator of the dilation}%
%EndExpansion

* The dilation has no inverse as $\beta\alpha g\geq g.$

\bigskip

* The closing is the pseudo inverse of the dilation : $\beta\alpha g\ $is the
largest set having the same dilation as $g.$

\bigskip

* if If $g\in\operatorname*{Inv}(\varphi)$, we have $\varphi g=\beta\alpha
g=g$ : on $\operatorname*{Inv}(\varphi),$ $\beta=\alpha^{-1}$%

%TCIMACRO{\TeXButton{EndFrame}{\end{frame}}}%
%BeginExpansion
\end{frame}%
%EndExpansion
%

%TCIMACRO{\TeXButton{BeginFrame}{\begin{frame}}}%
%BeginExpansion
\begin{frame}%
%EndExpansion

\begin{center}
{\Large \alert{Reminder on graphs}}
\end{center}

%

%TCIMACRO{\TeXButton{EndFrame}{\end{frame}}}%
%BeginExpansion
\end{frame}%
%EndExpansion
%

%TCIMACRO{\TeXButton{BeginFrame}{\begin{frame}}}%
%BeginExpansion
\begin{frame}%
%EndExpansion
%

%TCIMACRO{\QTR{frametitle}{Graphs : General definitions}}%
%BeginExpansion
\frametitle{Graphs : General definitions}%
%EndExpansion

A \textit{non oriented graph} $G=\left[  N,E\right]  $ : $N$ = vertices or
nodes ; $E$ = edges : an edge $u\in E$ = pair of vertices (see \cite{berge85}%
,\cite{gondranminoux})

\bigskip

\textit{A chain} of length $n$ is a sequence of $n$ edges $L=\left\{
u_{1},u_{2,}\ldots,u_{n}\right\}  $, such that each edge $u_{i}$ of the
sequence $\left(  2\leq i\leq n-1\right)  $ shares one extremity with the edge
$u_{i-1}$ $(u_{i-1}\neq u_{i})$, and the other extremity with $u_{i+1}$
$(u_{i+1}\neq u_{i})$.

\bigskip

A \textit{path }between two nodes $x$ and $y$ is a sequence of nodes
$(n_{1}=x,n_{2},...,n_{k}=y)$ such that two successive nodes $n_{i}$ and
$n_{i+1}$ are linked by an edge.\ 

\bigskip

\textit{A cycle} is a chain or a path whose extremities coincide.

\bigskip

\textit{A cocycle} is the set of all edges with one extremity in a subset $Y$
and the other in the complementary set $\overline{Y}.$%

%TCIMACRO{\TeXButton{EndFrame}{\end{frame}}}%
%BeginExpansion
\end{frame}%
%EndExpansion
%

%TCIMACRO{\TeXButton{BeginFrame}{\begin{frame}}}%
%BeginExpansion
\begin{frame}%
%EndExpansion
%

%TCIMACRO{\QTR{frametitle}{Graphs : partial graphs and subgraphs}}%
%BeginExpansion
\frametitle{Graphs : partial graphs and subgraphs}%
%EndExpansion

The subgraph spanning a set $A\subset N:G_{A}=[A,E_{A}]$, where $E_{A}$ are
the edges linking two nodes of $A.$

The partial graph associated to the edges $E^{\prime}\subset E$ is $G^{\prime
}=[N,E^{\prime}]$

For \textit{contracting} an edge $(i,j)$ in a graph $G$, one suppresses this
edge $u$ and its two extremities are merged into a unique node $k.\ $All edges
incident to $i$ or to $j$ become edges incident to the new node $k$.

\textit{Contraction : }

If $H$ is a subgraph of $G$, The operator $\kappa:(G,H)\rightarrow G^{\prime}$
contracts all edges of $H$ in $G$%

%TCIMACRO{\TeXButton{EndFrame}{\end{frame}}}%
%BeginExpansion
\end{frame}%
%EndExpansion
%

%TCIMACRO{\TeXButton{BeginFrame}{\begin{frame}}}%
%BeginExpansion
\begin{frame}%
%EndExpansion
%

%TCIMACRO{\QTR{frametitle}{Graphs : Connectivity}}%
%BeginExpansion
\frametitle{Graphs : Connectivity}%
%EndExpansion

\textit{A connected graph }is a graph where each pair of nodes is connected by
a path

\bigskip

\textit{A tree} is a connected graph without cycle.

\bigskip

A\textit{\ spanning tree} is a tree containing all nodes.

\bigskip

\textit{A forest } is a collection of trees.\ 

\bigskip

\textit{Labelling a graph = }extracting the maximal connected subgraphs%

%TCIMACRO{\TeXButton{EndFrame}{\end{frame}}}%
%BeginExpansion
\end{frame}%
%EndExpansion
%

%TCIMACRO{\TeXButton{BeginFrame}{\begin{frame}}}%
%BeginExpansion
\begin{frame}%
%EndExpansion
%

%TCIMACRO{\QTR{frametitle}{Weighted graphs}}%
%BeginExpansion
\frametitle{Weighted graphs}%
%EndExpansion

In a graph $\left[  N,E\right]  $, $N$ represents the nodes, $E$ represents
the edges and $\mathcal{T}$ is the set of weights of edges or nodes.\ ($O$
being the lowest weight and $\Omega$ the largest)

Edges and nodes may be weighted : $e_{ij}$ is the weight of the edge $(i,j)$
and $n_{i}$ the weight of the node $i.$

The following weight distributions are possible$:$

$(-,\diamond)$ : no weights

$(-,n)$ : weights on the nodes

$(e,\diamond)$ : weights on the edges

$(e,n)$ : weights on edges and nodes \bigskip

\bigskip

Various types of graphs are met in image processing.%

%TCIMACRO{\TeXButton{EndFrame}{\end{frame}}}%
%BeginExpansion
\end{frame}%
%EndExpansion
%

%TCIMACRO{\TeXButton{BeginFrame}{\begin{frame}}}%
%BeginExpansion
\begin{frame}%
%EndExpansion
%

%TCIMACRO{\QTR{frametitle}{Which types of graphs ?}}%
%BeginExpansion
\frametitle{Which types of graphs ?}%
%EndExpansion

In "pixel graphs" the nodes are the pixels and the edges connect neighboring
pixels. The weights of the pixels are their value and the weights of edges may
be for instance a gradient value computed between their extremities.%

%TCIMACRO{\FRAME{ftbpF}{2.2332in}{1.9539in}{0pt}{}{}{Figure}%
%{\special{ language "Scientific Word";  type "GRAPHIC";
%maintain-aspect-ratio TRUE;  display "USEDEF";  valid_file "T";
%width 2.2332in;  height 1.9539in;  depth 0pt;  original-width 4.0101in;
%original-height 3.5034in;  cropleft "0";  croptop "1";  cropright "1";
%cropbottom "0";  tempfilename '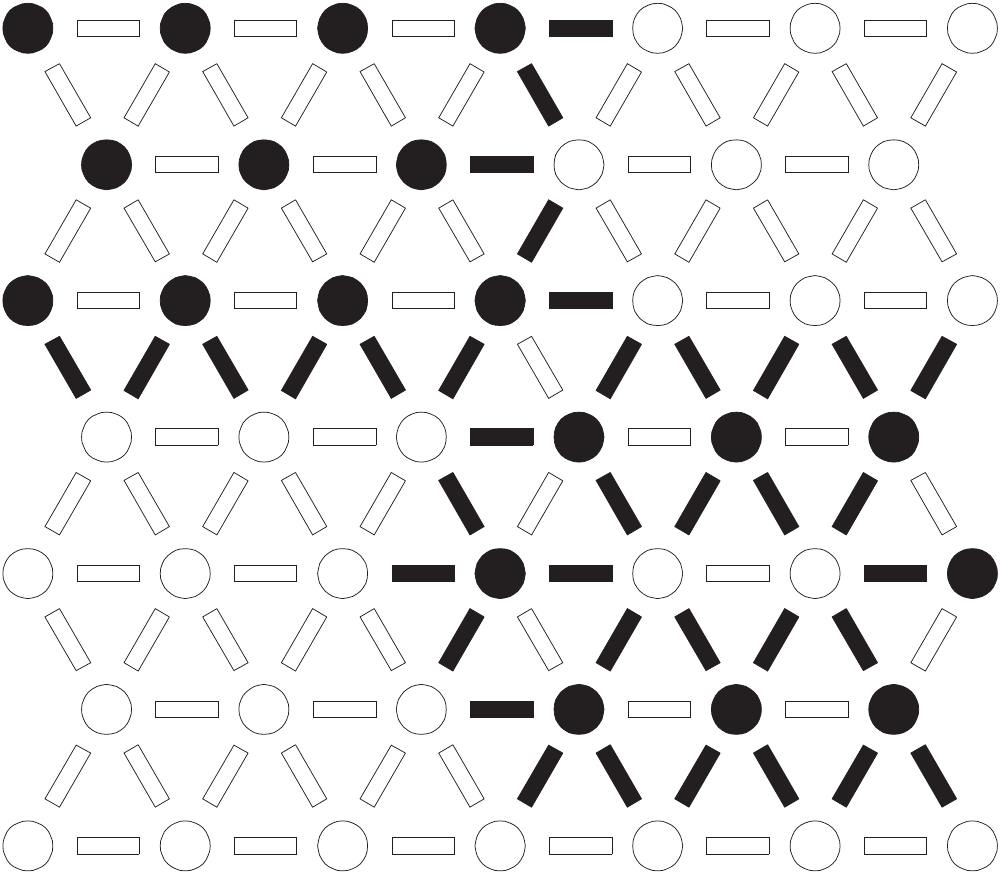';tempfile-properties "XNPR";}}
%}%
%BeginExpansion
\begin{figure}
[ptb]
\begin{center}
\includegraphics[
height=1.9539in,
width=2.2332in
]%
{pixelgraph0-eps-converted-to.pdf}%
\end{center}
\end{figure}
%EndExpansion
%

%TCIMACRO{\TeXButton{EndFrame}{\end{frame}}}%
%BeginExpansion
\end{frame}%
%EndExpansion
%

%TCIMACRO{\TeXButton{BeginFrame}{\begin{frame}}}%
%BeginExpansion
\begin{frame}%
%EndExpansion
%

%TCIMACRO{\QTR{frametitle}{Which types of graphs ?}}%
%BeginExpansion
\frametitle{Which types of graphs ?}%
%EndExpansion

We will work with "neighborhood graphs" where the nodes are the catchment
basins and the edges connect neighboring bassins.\ The edges are weighted by a
dissimilarity measure between adjacent catchment basins; the simplest being
the altitude of the path-point between two basins.

\begin{figure}[ptb]
\begin{center}
\includegraphics[
natheight=5.040100in,
natwidth=4.161500in,
height=1.8919in,
width=1.5682in
]{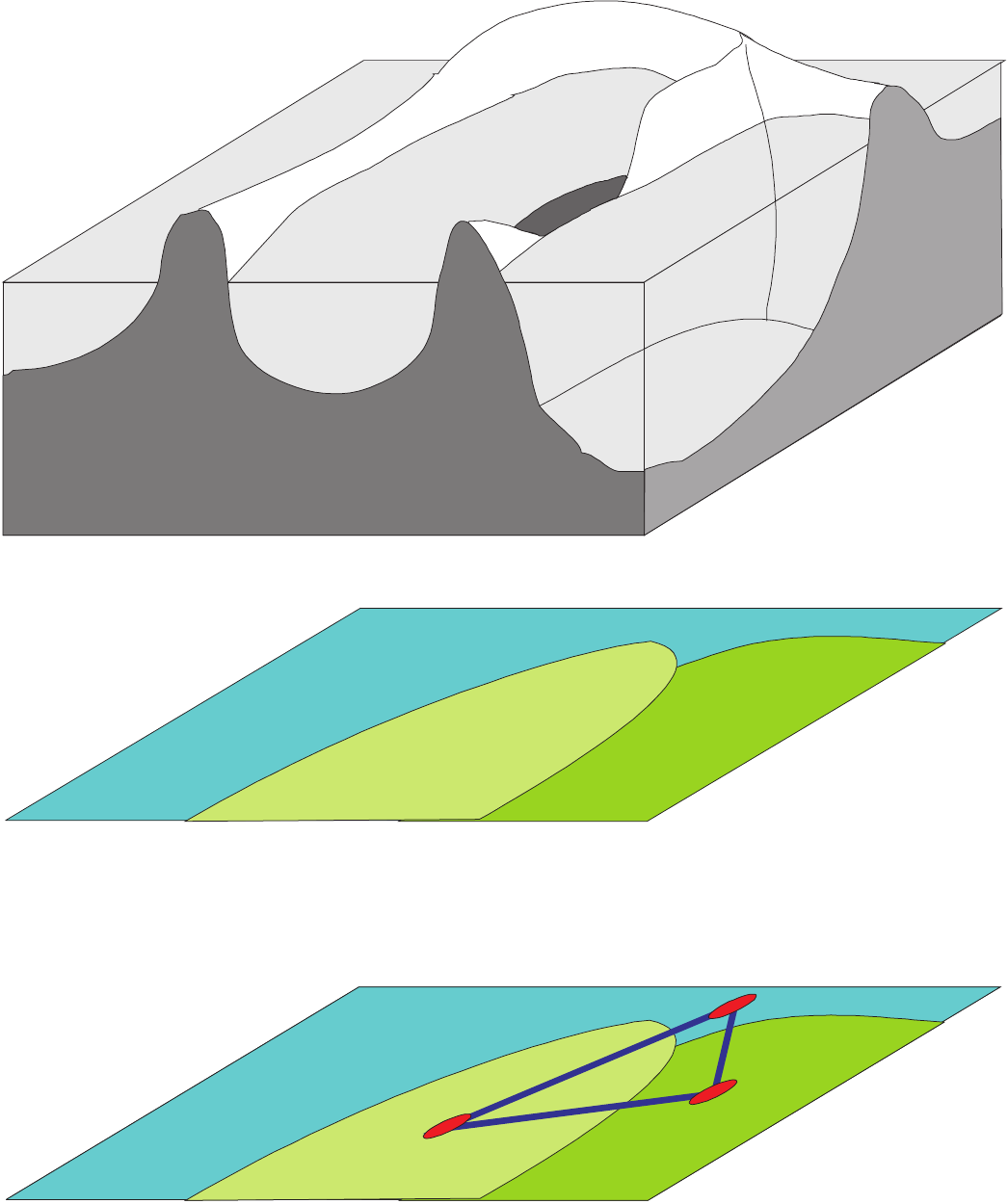}
\end{center}
\end{figure}
%EndExpansion
%

%TCIMACRO{\TeXButton{EndFrame}{\end{frame}}}%
%BeginExpansion
\end{frame}%
%EndExpansion
%

%TCIMACRO{\TeXButton{BeginFrame}{\begin{frame}}}%
%BeginExpansion
\begin{frame}%
%EndExpansion
%

%TCIMACRO{\QTR{frametitle}{The gabriel graph}}%
%BeginExpansion
\frametitle{The gabriel graph}%
%EndExpansion

Gabriel and Delaunay graphs permit to define neighborhood relations between
the sets and nodes of a population.

\begin{figure}[ptb]
\begin{center}
\includegraphics[
natheight=4.575400in,
natwidth=7.825000in,
height=1.7208in,
width=2.9217in
]{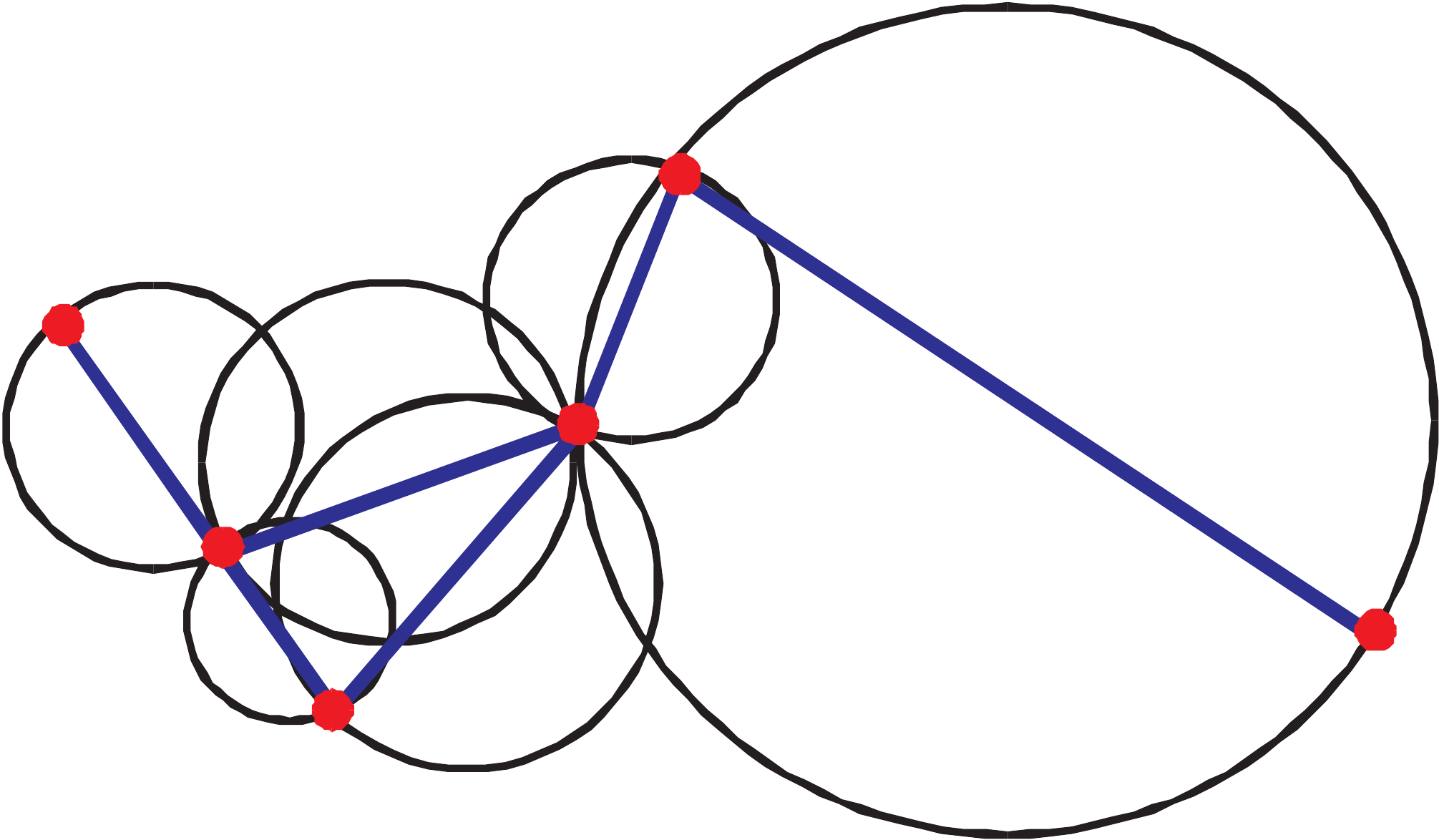}
\end{center}
\end{figure}
%EndExpansion

Two nodes x and y are linked by an edge if there exists no other node in the
disk of diameter (xy).\ Node weights will express features of the nodes, edge
weights relationships between adjacent nodes.%

%TCIMACRO{\TeXButton{EndFrame}{\end{frame}}}%
%BeginExpansion
\end{frame}%
%EndExpansion
%

%TCIMACRO{\TeXButton{BeginFrame}{\begin{frame}}}%
%BeginExpansion
\begin{frame}%
%EndExpansion
%

%TCIMACRO{\QTR{frametitle}{Gabriel graph modeling a lymphnode}}%
%BeginExpansion
\frametitle{Gabriel graph modeling a lymphnode}%
%EndExpansion
The nuclei of a lymphnode are segmented and their Gabriel graph constructed.%

%TCIMACRO{\TeXButton{BeginColumns}{\begin{columns}[5cm]}}%
%BeginExpansion
\begin{columns}[5cm]%
%EndExpansion%
%TCIMACRO{\TeXButton{Column}{\column{5cm}}}%
%BeginExpansion
\column{5cm}%
%EndExpansion%
%TCIMACRO{\FRAME{ftbpF}{2.2491in}{2.3322in}{0pt}{}{}{Figure}%
%{\special{ language "Scientific Word";  type "GRAPHIC";
%maintain-aspect-ratio TRUE;  display "USEDEF";  valid_file "T";
%width 2.2491in;  height 2.3322in;  depth 0pt;  original-width 11.6977in;
%original-height 12.1346in;  cropleft "0";  croptop "1";  cropright "1";
%cropbottom "0";  tempfilename '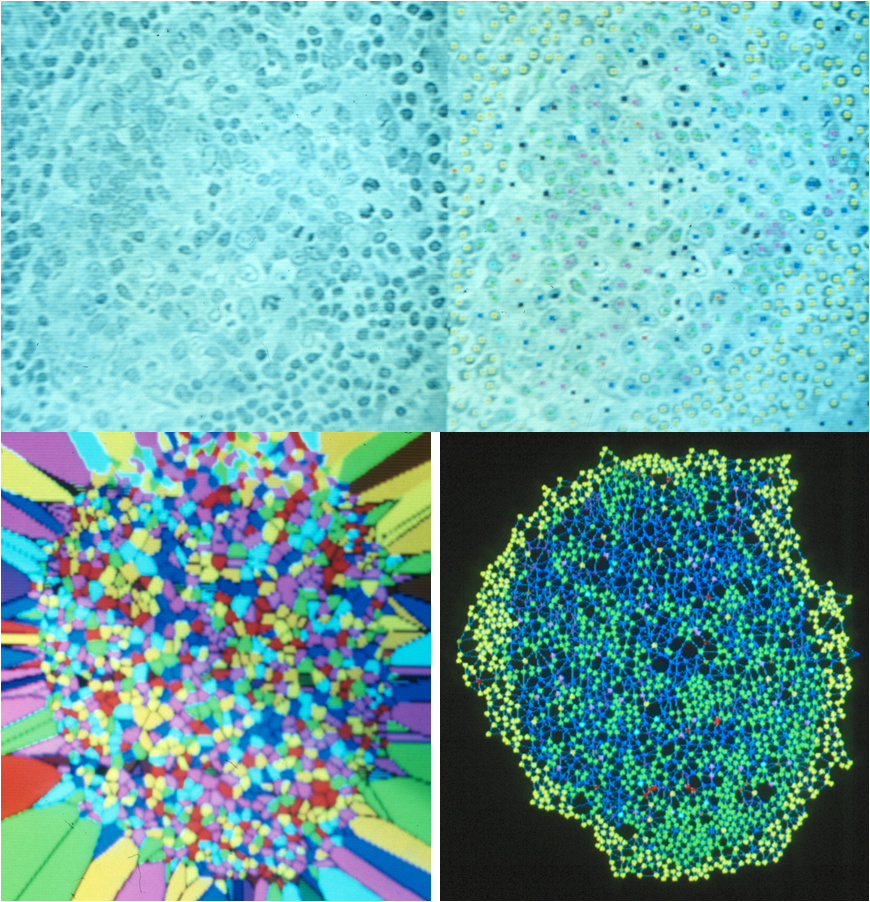';tempfile-properties "XNPR";}} }%
%BeginExpansion
\begin{figure}
[ptb]
\begin{center}
\includegraphics[
natheight=12.134600in,
natwidth=11.697700in,
height=2.3322in,
width=2.2491in
]%
{ganglion.png}%
\end{center}
\end{figure}
%EndExpansion
%

%TCIMACRO{\TeXButton{Column}{\column{5cm}}}%
%BeginExpansion
\column{5cm}%
%EndExpansion

\begin{itemize}
\item 1.1\ : histological section.

\item 1.2 : marking the nuclei.

\item 2.1 : catchment basins of the nuclei on the original image.

\item 2.2 : node weighted graph: the node weights represent the nucleus type.
\end{itemize}

%

%TCIMACRO{\TeXButton{EndColumns}{\end{columns}}}%
%BeginExpansion
\end{columns}%
%EndExpansion
%

%TCIMACRO{\TeXButton{EndFrame}{\end{frame}}}%
%BeginExpansion
\end{frame}%
%EndExpansion
%

%TCIMACRO{\TeXButton{BeginFrame}{\begin{frame}}}%
%BeginExpansion
\begin{frame}%
%EndExpansion
%

%TCIMACRO{\QTR{frametitle}{Minimum spanning trees and forests in an edge
%weighted graph}}%
%BeginExpansion
\frametitle{Minimum spanning trees and forests in an edge weighted graph}%
%EndExpansion

A \textit{minimum spanning tree} is a spanning tree for which the sum of the
edges is minimal.

A \textit{spanning forest }is \textit{a collection of trees spanning all
nodes.}

In a \textit{minimum spanning forest, }the sum of the edges is minimal.\ 

The \textit{minimum minimorum spanning forest (MSF) }has only nodes. Wih an
additional constraint, one gets particular forests:

\begin{itemize}
\item a MSF with a fixed number of trees, useful in hierarchical segmentation
\cite{forests}

\item a MSF where each tree is rooted in predefined nodes, useful in marker
based segmentation.
\end{itemize}

%

%TCIMACRO{\TeXButton{EndFrame}{\end{frame}}}%
%BeginExpansion
\end{frame}%
%EndExpansion
%

%TCIMACRO{\TeXButton{BeginFrame}{\begin{frame}}}%
%BeginExpansion
\begin{frame}%
%EndExpansion
%

%TCIMACRO{\QTR{frametitle}{Flat zones and regional minima on edge weighted
%graphs}}%
%BeginExpansion
\frametitle{Flat zones and regional minima on edge weighted graphs}%
%EndExpansion

A subgraph $G^{\prime}$ of an edge weighted graph $G$ is a flat zone, if any
two nodes of $G^{\prime}$ are connected by a chain of uniform altitude.

A subgraph $G^{\prime}$ of a graph $G$ is a regional minimum if $G^{\prime} $
is a flat zone and all edges in its cocycle have a higher altitude

$\mu_{e}:G\rightarrow\ \mu_{e}G:$ extracts all regional minima of the graph
$G$%

%TCIMACRO{\TeXButton{EndFrame}{\end{frame}}}%
%BeginExpansion
\end{frame}%
%EndExpansion
%

%TCIMACRO{\TeXButton{BeginFrame}{\begin{frame}}}%
%BeginExpansion
\begin{frame}%
%EndExpansion
%

%TCIMACRO{\QTR{frametitle}{Flat zones and regional minima on node weighted
%graphs}}%
%BeginExpansion
\frametitle{Flat zones and regional minima on node weighted graphs}%
%EndExpansion

A subgraph $G^{\prime}$ of a node weighted graph $G$ is a flat zone, if any
two nodes of $G^{\prime}$ are connected by a path where all nodes have the
same altitude.

A subgraph $G^{\prime}$ of a graph $G$ is a regional minimum if $G^{\prime} $
is a flat zone and all neighboring nodes have a higher altitude

$\mu_{n}:G\rightarrow\ \mu_{n}G:$ extracts all regional minima of the graph
$G$%

%TCIMACRO{\TeXButton{EndFrame}{\end{frame}}}%
%BeginExpansion
\end{frame}%
%EndExpansion
%

%TCIMACRO{\TeXButton{BeginFrame}{\begin{frame}}}%
%BeginExpansion
\begin{frame}%
%EndExpansion
%

%TCIMACRO{\QTR{frametitle}{Contracting or expanding graphs}}%
%BeginExpansion
\frametitle{Contracting or expanding graphs}%
%EndExpansion

\textbf{Contraction:}\textit{ }

For \textit{contracting} an edge $(i,j)$ in a graph $G$, one suppresses this
edge $u$ and its two extremities are merged into a unique node $k.\ $All edges
incident to $i$ or to $j$ become edges incident to the new node $k$ and keep
their weights \bigskip

If $H$ is a subgraph of $G$, The operator $\kappa:(G,H)\rightarrow G^{\prime}$
contracts all edges of $H$ in $G$ \bigskip

\textbf{Expansion: }

The operator $\multimap\ :(-,n)\rightarrow\ \multimap G$ creates for each
isolated regional minimum $i$ a dummy node with the same weight, linked by an
edge with $i.$%

%TCIMACRO{\TeXButton{EndFrame}{\end{frame}}}%
%BeginExpansion
\end{frame}%
%EndExpansion
%

%TCIMACRO{\TeXButton{BeginFrame}{\begin{frame}}}%
%BeginExpansion
\begin{frame}%
%EndExpansion
%

%TCIMACRO{\QTR{frametitle}{Extracting partial graphs}}%
%BeginExpansion
\frametitle{Extracting partial graphs}%
%EndExpansion

The following operators suppress a subset of the edges in a graph:

$\chi\ :(-,\diamond)\rightarrow\ \chi G$ keeps for each node only one adjacent edges.

$\downarrow\ :(e,\diamond)\rightarrow\ \downarrow G$ keeps for each node only
its lowest adjacent edges.

$\Downarrow\ :(-,n)\rightarrow\ \Downarrow G$ keeps only the edges linking a
node with its lowest adjacent nodes.\bigskip

These operators may be concatenated:

$\chi\downarrow G$ keeps for each node only one lowest adjacent edge.\ 

$\chi\Downarrow G$ keeps for each node only one edge linking it with its
lowest adjacent nodes.%

%TCIMACRO{\TeXButton{EndFrame}{\end{frame}}}%
%BeginExpansion
\end{frame}%
%EndExpansion
%

%TCIMACRO{\TeXButton{BeginFrame}{\begin{frame}}}%
%BeginExpansion
\begin{frame}%
%EndExpansion

\begin{center}
{\Large \alert{Distances on a graph}}

\bigskip

\textbf{Case of edge weighed graphs}
\end{center}

%

%TCIMACRO{\TeXButton{EndFrame}{\end{frame}}}%
%BeginExpansion
\end{frame}%
%EndExpansion
%

%TCIMACRO{\TeXButton{BeginFrame}{\begin{frame}}}%
%BeginExpansion
\begin{frame}%
%EndExpansion
%

%TCIMACRO{\QTR{frametitle}{Constructing distances on an edge weighted graph.}}%
%BeginExpansion
\frametitle{Constructing distances on an edge weighted graph.}%
%EndExpansion

Distances on an edge weighted graph have chains as support :

1) Definition of the weight of a chain, as a measure derived from the edge
weights of the chain elements (example : sum, maximum, etc.)

2) Comparison of two chains by their weight.\ The chain with the smallest
weight is called the shortest.\ 

The distance $d(x,y)$ between two nodes $x$ and $y$ of a graph is $\infty$ if
there is no chain linking these two nodes and equal to the weight of the
shortest chain if such a chain exists.

Given three nodes $(x,y,z)$ the concatenation of the shortest chain $\pi_{xy}$
between $x$ and $y$ and the shortest chain $\pi_{yz}$ between $y$ and $z$ is a
chain $\pi_{xz}$ between $x$ and $z$, whose weight is smaller or equal to the
weight of the shortest chain between $x$ and $z.$ To each distance corresponds
a particular triangular inequality : $d(x,z)\leq weight( $ $\pi_{xy}\rhd$
$\pi_{yz})$ where $\pi_{xy}\rhd$ $\pi_{yz}$ represents the concatenation of
both chains.%

%TCIMACRO{\TeXButton{EndFrame}{\end{frame}}}%
%BeginExpansion
\end{frame}%
%EndExpansion
%

%TCIMACRO{\TeXButton{BeginFrame}{\begin{frame}}}%
%BeginExpansion
\begin{frame}%
%EndExpansion
%

%TCIMACRO{\QTR{frametitle}{Distance on an edge weighted graph based a the
%length of the shortest chain}}%
%BeginExpansion
\frametitle{Distance on an edge weighted graph based a the length of the
shortest chain}%
%EndExpansion

\textbf{Length of a chain:} The length of a chain between two nodes $x$ and
$y$ is defined as the sum of the weights of its edges.

\textbf{Distance:} The distance $d(x,y)$ between two nodes $x$ and $y$ is the
minimal length of all chains between $x$ and $y$. If there is no chain between
them, the distance is equal to $\infty$.

\textbf{Triangular inequality :} For $(x,y,z):d(x,z)\leq d(x,y)+d(y,z)$%

%TCIMACRO{\TeXButton{EndFrame}{\end{frame}}}%
%BeginExpansion
\end{frame}%
%EndExpansion
%

%TCIMACRO{\TeXButton{BeginFrame}{\begin{frame}}}%
%BeginExpansion
\begin{frame}%
%EndExpansion
%

%TCIMACRO{\QTR{frametitle}{Distance on a graph based on the maximal edge weight
%along the chain}}%
%BeginExpansion
\frametitle{Distance on a graph based on the maximal edge weight along the
chain}%
%EndExpansion

The weights are assigned to the edges, and represent their altitudes.

\textbf{Altitude of a chain: }The altitude of a chain is equal to the highest
weight of the edges along the chain.

\textbf{Flooding distance between two nodes: }The flooding distance
$\operatorname*{fldist}(x,y)$ between nodes $x$ and $y$ is equal to the
minimal altitude of all chains between $x$ and $y$. During a flooding process,
in which a source is placed at location $x,$ the flood would proceed along
this chain of minimal highest altitude to reach the pixel $y$. If there is no
chain between them, the level distance is equal to $\infty$.

\textbf{Triangular inequality :} For $(x,y,z):d(x,z)\leq d(x,y)\vee d(y,z)$ :
ultrametric inequality%

%TCIMACRO{\TeXButton{EndFrame}{\end{frame}}}%
%BeginExpansion
\end{frame}%
%EndExpansion
%

%TCIMACRO{\TeXButton{BeginFrame}{\begin{frame}}}%
%BeginExpansion
\begin{frame}%
%EndExpansion
%

%TCIMACRO{\QTR{frametitle}{The flooding distance is an ultrametric distance}}%
%BeginExpansion
\frametitle{The flooding distance is an ultrametric distance}%
%EndExpansion

An ultrametric distance verifies

* reflexivity : $d(x,x)=0$

* symmetry: $d(x,y)=d(y,x)$

* ultrametric inequality: for all $x,y,z:d(x,y)\leq max\{d(x,z),d(z,y)$\} :
the lowest lake containing both $x$ and $y$ is lower or equal than the lowest
lake containing $x,$ $y$ and $z.$%

%TCIMACRO{\TeXButton{EndFrame}{\end{frame}}}%
%BeginExpansion
\end{frame}%
%EndExpansion
%

%TCIMACRO{\TeXButton{BeginFrame}{\begin{frame}}}%
%BeginExpansion
\begin{frame}%
%EndExpansion
%

%TCIMACRO{\QTR{frametitle}{Distances on a graph : sum and maximum of the edge
%weights}}%
%BeginExpansion
\frametitle{Distances on a graph : sum and maximum of the edge weights}%
%EndExpansion%
%TCIMACRO{\FRAME{ftbpF}{1.4885in}{1.0491in}{0pt}{}{}{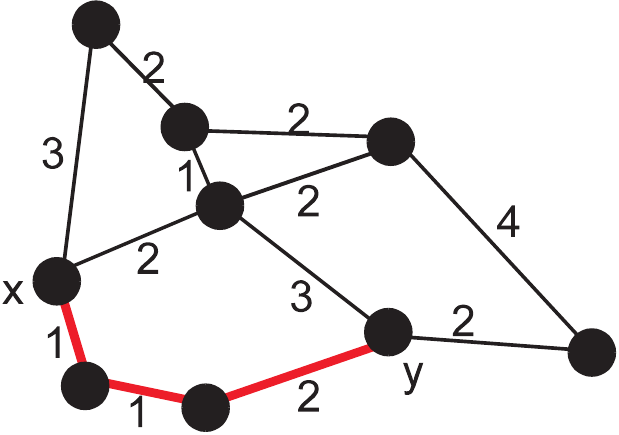}%
%{\special{ language "Scientific Word";  type "GRAPHIC";
%maintain-aspect-ratio TRUE;  display "USEDEF";  valid_file "F";
%width 1.4885in;  height 1.0491in;  depth 0pt;  original-width 2.3967in;
%original-height 1.6764in;  cropleft "0";  croptop "1";  cropright "1";
%cropbottom "0";  filename 'wfal62-eps-converted-to.pdf';file-properties "XNPEU";}} }%
%BeginExpansion
\begin{figure}
[ptb]
\begin{center}
\includegraphics[
height=1.0491in,
width=1.4885in
]%
{wfal62-eps-converted-to.pdf}%
\end{center}
\end{figure}
%EndExpansion

The shortest chain (sum of weights of the edges) between $x$ and $y$ is a red
line and has a length of $4$.

The lowest chain (maximal weight of the edges) between $x$ and $y$ is a red
line and a maximal weight of $2$. A flooding between $x$ and $y$ would follow
this chain.

For this particular example, the shortest and lowest paths are identical.%

%TCIMACRO{\TeXButton{EndFrame}{\end{frame}}}%
%BeginExpansion
\end{frame}%
%EndExpansion
%

%TCIMACRO{\TeXButton{BeginFrame}{\begin{frame}}}%
%BeginExpansion
\begin{frame}%
%EndExpansion
%

%TCIMACRO{\QTR{frametitle}{The lexicographic distance or the cumulative effort
%for passing the highest edges}}%
%BeginExpansion
\frametitle{The lexicographic distance or the cumulative effort for passing
the highest edges}%
%EndExpansion

\textbf{Toughness of a chain:} We call toughness of a chain the decreasing
list of altitudes of the highest edges met along this chain.%

%TCIMACRO{\FRAME{ftbpF}{3.1406in}{1.195in}{0pt}{}{}{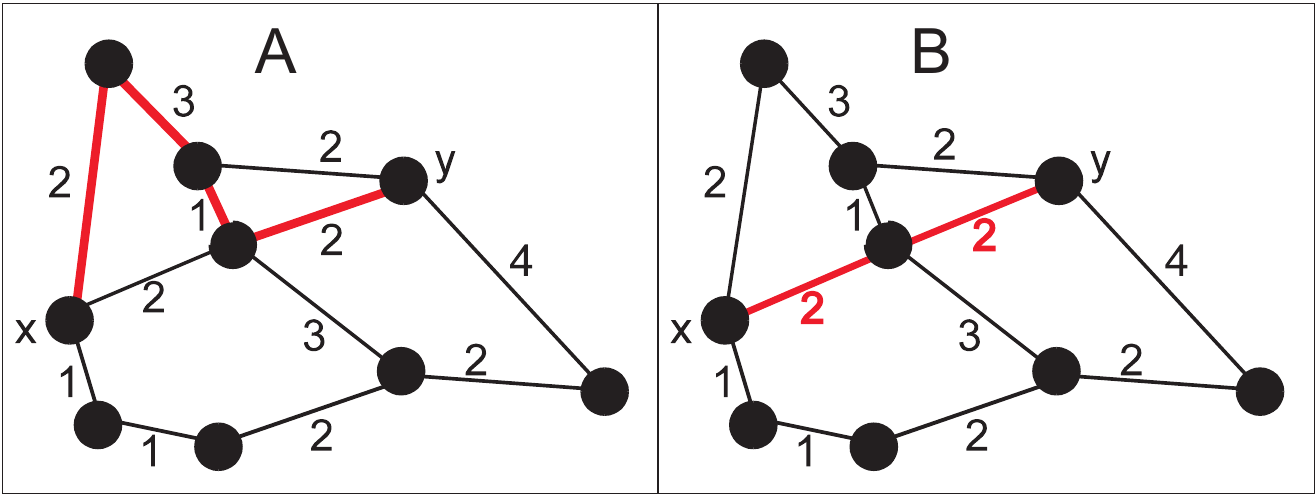}%
%{\special{ language "Scientific Word";  type "GRAPHIC";
%maintain-aspect-ratio TRUE;  display "USEDEF";  valid_file "F";
%width 3.1406in;  height 1.195in;  depth 0pt;  original-width 5.1038in;
%original-height 1.9154in;  cropleft "0";  croptop "1";  cropright "1";
%cropbottom "0";  filename 'wfal63-eps-converted-to.pdf';file-properties "XNPEU";}} }%
%BeginExpansion
\begin{figure}
[ptb]
\begin{center}
\includegraphics[
height=1.195in,
width=3.1406in
]%
{wfal63-eps-converted-to.pdf}%
\end{center}
\end{figure}
%EndExpansion

In fig.A, along the bold chain between $x$ and $y$, the highest edge rises at
the altitude $3$. After crossing it, the highest edge on the remaining chain
rises at $2$. After crossing it, one is at destination.\ Hence the toughness
of the chain from $x$ to $y$ is $[3,2].\;$Distances are compared in a
lexicographic order. The shortest chain is the red chain in fig.B, it has two
edges with weight $2$ and its toughness is $[2,2].$%

%TCIMACRO{\TeXButton{EndFrame}{\end{frame}}}%
%BeginExpansion
\end{frame}%
%EndExpansion
%

%TCIMACRO{\TeXButton{BeginFrame}{\begin{frame}}}%
%BeginExpansion
\begin{frame}%
%EndExpansion
%

%TCIMACRO{\QTR{frametitle}{The lexicographic distance : a more formal
%definition}}%
%BeginExpansion
\frametitle{The lexicographic distance : a more formal definition}%
%EndExpansion

The lexicographic length $\Lambda(A)$ of a chain $A=e_{12}e_{23}...e_{n-1n}$
is constructed as follows.\ Following the chain from the origin towards its
end, one records the highest valuation of the chain, let it be $\lambda_{1}$,
then again the highest valuation $\lambda_{2}$ on the remaining part of the
chain and so on until the end is reached.\ One gets like that a series of
decreasing values: $\lambda_{1}\geq\lambda_{2}\geq$....$\geq\lambda_{n}$.\ 

The lexicographic distance between two nodes $x$ and $y$ will be equal to the
shortest lexicographic length of all chains between these two nodes and is
written $\operatorname*{lexdist}(x,y)$.\ \bigskip

\textbf{Remark: }The lexicographic length of a never increasing path (NAP), is
simply equal to the series of weights of its edges, as it is the highest edge
along the lowest path between $x$ and $y.$ Later we introduce shortest path
algorithms whose geodesics are NAPs.%

%TCIMACRO{\TeXButton{EndFrame}{\end{frame}}}%
%BeginExpansion
\end{frame}%
%EndExpansion
%

%TCIMACRO{\TeXButton{BeginFrame}{\begin{frame}}}%
%BeginExpansion
\begin{frame}%
%EndExpansion
%

%TCIMACRO{\QTR{frametitle}{The lexicographic distance of depth k}}%
%BeginExpansion
\frametitle{The lexicographic distance of depth k}%
%EndExpansion

The lexicographic distance of depth $k$ between two nodes $x$ and $y$ is
written $\operatorname*{lexdist}_{k}(x,y)$ and obtained by retaining only the
$k$ first edges.in $\operatorname*{lexdist}(x,y).\ $

$\bigskip$

\textbf{Remark: }$\operatorname*{lexdist}_{1}(x,y)$ is the same as the
ultrametric flooding distance as it is the highest edge on the lowest path
between $x$ and $y.$\ %

%TCIMACRO{\TeXButton{EndFrame}{\end{frame}}}%
%BeginExpansion
\end{frame}%
%EndExpansion
%

%TCIMACRO{\TeXButton{BeginFrame}{\begin{frame}}}%
%BeginExpansion
\begin{frame}%
%EndExpansion

\begin{center}
{\Large \alert{Distances on a graph}}

\bigskip

\textbf{Case of node weighed graphs}
\end{center}

%

%TCIMACRO{\TeXButton{EndFrame}{\end{frame}}}%
%BeginExpansion
\end{frame}%
%EndExpansion
%

%TCIMACRO{\TeXButton{BeginFrame}{\begin{frame}}}%
%BeginExpansion
\begin{frame}%
%EndExpansion
%

%TCIMACRO{\QTR{frametitle}{Constructing distances on a node weighted graph.}}%
%BeginExpansion
\frametitle{Constructing distances on a node weighted graph.}%
%EndExpansion

Distances on a node weighted graph have paths as support :

1) Definition of the "length" of a path, as a measure derived from the node
weights of the path elements (example : sum, maximum, etc.)

2) Comparison of two paths by their length.\ The path with the smallest length
is called the shortest.\ 

The distance $d(x,y)$ between two nodes $x$ and $y$ of a graph is $\infty$ if
there is no path linking these two nodes and equal to the length of the
shortest path if such a path exists.

Given three nodes $(x,y,z)$ the concatenation of the shortest path $\pi_{xy} $
between $x$ and $y$ and the shortest path $\pi_{yz}$ between $y$ and $z$ is a
path $\pi_{xz}$ between $x$ and $z$, whose length is smaller or equal to the
length of the shortest path between $x$ and $z.$ To each distance corresponds
a particular triangular inequality : $d(x,z)\leq length( $ $\pi_{xy}\rhd$
$\pi_{yz})$ where $\pi_{xy}\rhd$ $\pi_{yz}$ represents the concatenation of
both paths.%

%TCIMACRO{\TeXButton{EndFrame}{\end{frame}}}%
%BeginExpansion
\end{frame}%
%EndExpansion
%

%TCIMACRO{\TeXButton{BeginFrame}{\begin{frame}}}%
%BeginExpansion
\begin{frame}%
%EndExpansion
%

%TCIMACRO{\QTR{frametitle}{Distance on a node weighted graph based on the
%maximal node weight along the path}}%
%BeginExpansion
\frametitle{Distance on a node weighted graph based on the maximal node weight
along the path}%
%EndExpansion

The weights are assigned to the nodes, and represent their altitudes.

\textbf{Altitude of a path: }The altitude of a path is equal to the highest
weight of the nodes along the path.

\textbf{Flooding distance between two nodes: }The flooding distance
$\operatorname*{fldist}(x,y)$ between nodes $x$ and $y$ is equal to the
minimal altitude of all paths between $x$ and $y$.%

%TCIMACRO{\TeXButton{EndFrame}{\end{frame}}}%
%BeginExpansion
\end{frame}%
%EndExpansion
%

%TCIMACRO{\TeXButton{BeginFrame}{\begin{frame}}}%
%BeginExpansion
\begin{frame}%
%EndExpansion
%

%TCIMACRO{\QTR{frametitle}{Distance on a node weighted graph based on the cost
%for travelling along the cheapest path}}%
%BeginExpansion
\frametitle{Distance on a node weighted graph based on the cost for travelling
along the cheapest path}%
%EndExpansion

The weights are assigned to the nodes and not to the edges. Each node may be
considered as a town where a toll has to be paid.

\textbf{Cost of a path:} The cost of a path is equal to the sum of the tolls
to be paid in all towns encountered along the path (including or not one or
both ends).

\textbf{Cost between two nodes:} The cost for reaching node $y$ from node $x$
is equal to the minimal cost of all paths between $x$ and $y$. We write
$\operatorname*{tolldist}(x,y).\ $If there is no path between them, the cost
is equal to $\infty$.%

%TCIMACRO{\TeXButton{EndFrame}{\end{frame}}}%
%BeginExpansion
\end{frame}%
%EndExpansion
%

%TCIMACRO{\TeXButton{BeginFrame}{\begin{frame}}}%
%BeginExpansion
\begin{frame}%
%EndExpansion
%

%TCIMACRO{\QTR{frametitle}{Illustration of the cheapest path}}%
%BeginExpansion
\frametitle{Illustration of the cheapest path}%
%EndExpansion

\textbf{%
%TCIMACRO{\FRAME{ftbpF}{1.8952in}{1.4173in}{0pt}{}{}{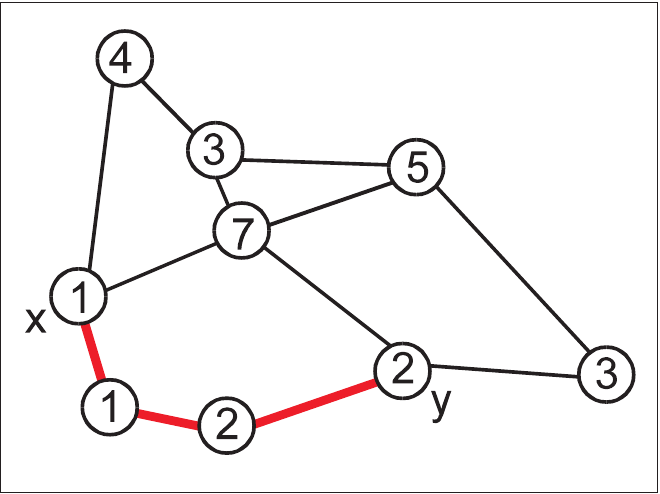}%
%{\special{ language "Scientific Word";  type "GRAPHIC";
%maintain-aspect-ratio TRUE;  display "USEDEF";  valid_file "F";
%width 1.8952in;  height 1.4173in;  depth 0pt;  original-width 2.5586in;
%original-height 1.9053in;  cropleft "0";  croptop "1";  cropright "1";
%cropbottom "0";  filename 'wfal64-eps-converted-to.pdf';file-properties "XNPEU";}} }%
%BeginExpansion
\begin{figure}
[ptb]
\begin{center}
\includegraphics[
height=1.4173in,
width=1.8952in
]%
{wfal64-eps-converted-to.pdf}%
\end{center}
\end{figure}
%EndExpansion
}

In this figure, the cheapest chain between $x$ and $y$ is in red and the total
toll to pay is $1+1+2+2=6$%

%TCIMACRO{\TeXButton{EndFrame}{\end{frame}}}%
%BeginExpansion
\end{frame}%
%EndExpansion
%

%TCIMACRO{\TeXButton{BeginFrame}{\begin{frame}}}%
%BeginExpansion
\begin{frame}%
%EndExpansion

\begin{center}
{\Large \alert{Two adjunctions between edges and nodes on weighted graphs}}
\end{center}

%

%TCIMACRO{\TeXButton{EndFrame}{\end{frame}}}%
%BeginExpansion
\end{frame}%
%EndExpansion
%

%TCIMACRO{\TeXButton{BeginFrame}{\begin{frame}}}%
%BeginExpansion
\begin{frame}%
%EndExpansion
%

%TCIMACRO{\QTR{frametitle}{Two adjunctions between edges and nodes}}%
%BeginExpansion
\frametitle{Two adjunctions between edges and nodes}%
%EndExpansion

\begin{definition}
\medskip We define two operators between edges and nodes :\newline\medskip- an
erosion $\left[  \varepsilon_{en}n\right]  _{ij}=n_{i}\wedge n_{j}$ and its
adjunct dilation $\left[  \delta_{ne}e\right]  _{i}=\underset
{(k\text{\thinspace neighbors\thinspace of\thinspace}\,i)}{\bigvee e_{ik}}%
$\newline- a dilation $\left[  \delta_{en}n\right]  _{ij}=n_{i}\vee n_{j}$ and
its adjunct erosion $\left[  \varepsilon_{ne}e\right]  _{i}=\underset
{(k\text{\thinspace neighbors\thinspace of\thinspace}\,i)}{\bigwedge e_{ik}}$
\end{definition}

\begin{lemma}
The operators we defined are pairwise adjunct or dual operators:\newline-
$\varepsilon_{ne}$ and $\delta_{en}$ are adjunct operators\newline-
$\varepsilon_{en}$ and $\delta_{ne}$ are adjunct operators\newline-
$\varepsilon_{ne}$ and $\delta_{ne}$ are dual operators \newline-
$\varepsilon_{en}$ and $\delta_{en}$ are dual operators
\end{lemma}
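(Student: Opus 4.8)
The plan is to verify each of the four assertions directly from the definitions, using the adjunction criterion $\alpha f < g \Leftrightarrow f < \beta g$ (with $<$ read as $\leq$ in this complete lattice) established in the preamble, and the fact that supremum and infimum of functions on a graph are computed pointwise. Throughout, $e$ denotes an edge-valued function with components $e_{ij}$ and $n$ a node-valued function with components $n_i$, and I recall that an edge $(i,j)$ belongs to the graph iff $i$ and $j$ are neighbors.

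First I would prove that $\varepsilon_{ne}$ and $\delta_{en}$ are adjunct, i.e.\ that $\delta_{en} n \leq e \Leftrightarrow n \leq \varepsilon_{ne} e$. Expanding the left side: $\delta_{en} n \leq e$ means $n_i \vee n_j \leq e_{ij}$ for every edge $(i,j)$, which is equivalent to saying that for every node $i$ and every neighbor $j$ of $i$ we have $n_i \leq e_{ij}$. That in turn is equivalent to $n_i \leq \bigwedge_{k \text{ neighbor of } i} e_{ik} = [\varepsilon_{ne} e]_i$ for every $i$, i.e.\ $n \leq \varepsilon_{ne} e$. This is just a reindexing of the same family of scalar inequalities $\{n_i \leq e_{ij}\}$, grouped once by edges and once by nodes; the equivalence is immediate. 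The dual adjunction — $\varepsilon_{en}$ and $\delta_{ne}$ adjunct, i.e.\ $\delta_{ne} e \leq n \Leftrightarrow e \leq \varepsilon_{en} n$ — is handled identically: $[\delta_{ne} e]_i = \bigvee_k e_{ik} \leq n_i$ for all $i$ iff $e_{ik} \leq n_i$ for all $i$ and all neighbors $k$, iff $e_{ij} \leq n_i \wedge n_j = [\varepsilon_{en} n]_{ij}$ for every edge, by symmetry of the edge $(i,j)=(j,i)$.

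Next I would check the two duality statements, where duality means $\psi(f) = \overline{\varphi(\overline{f})}$ for the order-reversing complementation $f \mapsto \overline{f} = \Omega - f$ on $\mathcal{T}$ (equivalently $f \mapsto$ its reflection, since $\mathcal{T}$ is a complete totally ordered lattice). For $\varepsilon_{ne}$ and $\delta_{ne}$: complementation turns $\bigwedge$ into $\bigvee$, so $\overline{[\varepsilon_{ne} e]_i} = \overline{\bigwedge_k e_{ik}} = \bigvee_k \overline{e_{ik}} = [\delta_{ne}\overline{e}]_i$, which is exactly the duality relation. The same computation with the roles of $\bigwedge$ and $\bigvee$ interchanged shows $\varepsilon_{en}$ and $\delta_{en}$ are dual, since $\overline{n_i \wedge n_j} = \overline{n_i} \vee \overline{n_j}$.

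I do not expect any genuine obstacle here: all four claims reduce to the commutation of pointwise lattice operations with reindexing (for the adjunctions) or with complementation (for the dualities), together with the De Morgan laws in the totally ordered lattice $\mathcal{T}$. The only point requiring a word of care is making explicit that the ``neighbor'' relation is symmetric, so that summing/meeting over neighbors of $i$ and simultaneously over neighbors of $j$ correctly recovers the edge-indexed condition; once that is noted, each equivalence is a one-line verification. If anything, the mild subtlety is purely notational — keeping straight which of the two adjoints sits on which side of $\leq$ — rather than mathematical.
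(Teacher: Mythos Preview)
Your proposal is correct and follows essentially the same approach as the paper. The paper proves only one representative adjunction ($\delta_{en}$ and $\varepsilon_{ne}$, via exactly your chain of equivalences $n_i\vee n_j\le e_{ij}\Leftrightarrow n_i\le e_{ij}$ for all neighbors $\Leftrightarrow n_i\le\bigwedge_k e_{ik}$) and one representative duality ($\varepsilon_{en}$ and $\delta_{en}$, via $-n_i\wedge -n_j=-(n_i\vee n_j)$, writing complementation as $-n$ rather than your $\Omega-f$); you simply spell out all four cases and make the symmetry of the neighbor relation explicit, which is a harmless elaboration.
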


%

%TCIMACRO{\TeXButton{EndFrame}{\end{frame}}}%
%BeginExpansion
\end{frame}%
%EndExpansion
%

%TCIMACRO{\TeXButton{BeginFrame}{\begin{frame}}}%
%BeginExpansion
\begin{frame}%
%EndExpansion

\textbf{Let us prove that }$\delta_{en}$\textbf{\ and }$\varepsilon_{ne}%
$\textbf{\ are adjunct operators}

If $G=\left[  e,n\right]  $ and $\overline{G}=\left[  \overline{e}%
,\overline{n}\right]  $ are two graphs with the same nodes and edges, but with
different valuations on the edges and the nodes then\newline$\delta_{en}%
n\leq\overline{e}\Leftrightarrow\forall i,j:n_{i}\vee n_{j}\leq\overline
{e}_{ij}\Leftrightarrow\forall i,j:n_{i}\leq\overline{e}_{ij}\Leftrightarrow
\forall i,j:n_{i}\leq\underset{(j\text{\thinspace neighbors\thinspace
of\thinspace}\,i)}{\bigwedge\overline{e}_{ij}}=\left[  \varepsilon
_{ne}\overline{e}\right]  _{i}\Leftrightarrow n\leq\varepsilon_{ne}%
\overline{e}$\newline which establishes that $\delta_{en}$ and $\varepsilon
_{en}$ are adjunct operators.

\bigskip

\textbf{Let us prove that }$\varepsilon_{en}$\textbf{\ and }$\delta_{en}%
$\textbf{\ are dual operators\newline}$\left[  \varepsilon_{en}(-n)\right]
_{ij}=-n_{i}\wedge-n_{j}=-(n_{i}\vee n_{j})=-\left[  \delta_{en}n\right]
_{ij}$\newline Hence $\delta_{en}n=-\left[  \varepsilon_{en}\left(  -n\right)
\right]  $%

%TCIMACRO{\TeXButton{EndFrame}{\end{frame}}}%
%BeginExpansion
\end{frame}%
%EndExpansion
%

%TCIMACRO{\TeXButton{BeginFrame}{\begin{frame}}}%
%BeginExpansion
\begin{frame}%
%EndExpansion
%

%TCIMACRO{\QTR{frametitle}{The flooding adjunction}}%
%BeginExpansion
\frametitle{The flooding adjunction}%
%EndExpansion

The dilation $\left[  \delta_{en}n\right]  _{ij}=n_{i}\vee n_{j}$ and its
adjunct erosion $\left[  \varepsilon_{ne}e\right]  _{i}=\underset
{(k\text{\thinspace neighbors\thinspace of\thinspace}\,i)}{\bigwedge e_{ik}}$
have a particular meaning in terms of flooding.

\bigskip

If $n_{i}$ and $n_{j}$ represent the altitudes of the nodes $i$ and $j,$ the
lowest flood covering $i$ and $j$ has the altitude $\left[  \delta
_{en}n\right]  _{ij}=n_{i}\vee n_{j}$

\bigskip

If $i$ represents a catchment basin, $e_{ik}$ the altitude of the pass points
with the neighboring basin $k,$ then the highest level of flooding without
overflow through an adjacent edge is $\left[  \varepsilon_{ne}e\right]
_{i}=\underset{(k\text{\thinspace neighbors\thinspace of\thinspace}%
\,i)}{\bigwedge e_{ik}}.$%

%TCIMACRO{\TeXButton{EndFrame}{\end{frame}}}%
%BeginExpansion
\end{frame}%
%EndExpansion
%

%TCIMACRO{\TeXButton{BeginFrame}{\begin{frame}}}%
%BeginExpansion
\begin{frame}%
%EndExpansion
%

%TCIMACRO{\QTR{frametitle}{Waterfall flooding on graphs : an erosion}}%
%BeginExpansion
\frametitle{Waterfall flooding on graphs : an erosion}%
%EndExpansion

The waterfall flooding is completely specified if one knows the level of flood
in each catchment basin%

%TCIMACRO{\TeXButton{BeginColumns}{\begin{columns}[5cm]}}%
%BeginExpansion
\begin{columns}[5cm]%
%EndExpansion%
%TCIMACRO{\TeXButton{Column}{\column{5cm}}}%
%BeginExpansion
\column{5cm}%
%EndExpansion%
%TCIMACRO{\FRAME{dtbpF}{2.4856in}{1.5355in}{0pt}{}{}{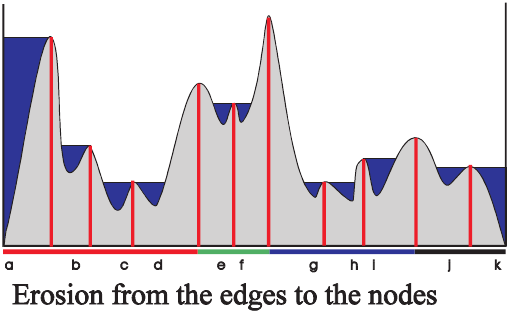}%
%{\special{ language "Scientific Word";  type "GRAPHIC";
%maintain-aspect-ratio TRUE;  display "USEDEF";  valid_file "F";
%width 2.4856in;  height 1.5355in;  depth 0pt;  original-width 1.964in;
%original-height 1.2051in;  cropleft "0";  croptop "1";  cropright "1";
%cropbottom "0";  filename 'wfal5-eps-converted-to.pdf';file-properties "XNPEU";}} }%
%BeginExpansion
\begin{center}
\includegraphics[
height=1.5355in,
width=2.4856in
]%
{wfal5-eps-converted-to.pdf}%
\end{center}
%EndExpansion
%

%TCIMACRO{\TeXButton{Column}{\column{5cm}}}%
%BeginExpansion
\column{5cm}%
%EndExpansion

\begin{itemize}
\item The waterfall flooding fills each catchment basin up to its lowest pass point.

\item In terms of graphs : the flooding in a node is equal to the weight of
its lowest adjacent edge.

\item It is an erosion between node weights and edge weights : $\left[
\varepsilon_{ne}e\right]  _{i}=\underset{(k\text{\thinspace
neighbors\thinspace of\thinspace}\,i)}{\bigwedge e_{ik}}$
\end{itemize}

%

%TCIMACRO{\TeXButton{EndColumns}{\end{columns}}}%
%BeginExpansion
\end{columns}%
%EndExpansion
%

%TCIMACRO{\TeXButton{EndFrame}{\end{frame}}}%
%BeginExpansion
\end{frame}%
%EndExpansion
%

%TCIMACRO{\TeXButton{BeginFrame}{\begin{frame}}}%
%BeginExpansion
\begin{frame}%
%EndExpansion
%

%TCIMACRO{\QTR{frametitle}{Waterfall flooding on graphs : an erosion}}%
%BeginExpansion
\frametitle{Waterfall flooding on graphs : an erosion}%
%EndExpansion

The waterfall flooding is completely specified if one knows the level of flood
in each catchment basin%

%TCIMACRO{\TeXButton{BeginColumns}{\begin{columns}[5cm]}}%
%BeginExpansion
\begin{columns}[5cm]%
%EndExpansion%
%TCIMACRO{\TeXButton{Column}{\column{5cm}}}%
%BeginExpansion
\column{5cm}%
%EndExpansion%
%TCIMACRO{\FRAME{ftbpF}{1.9741in}{1.9095in}{0pt}{}{}{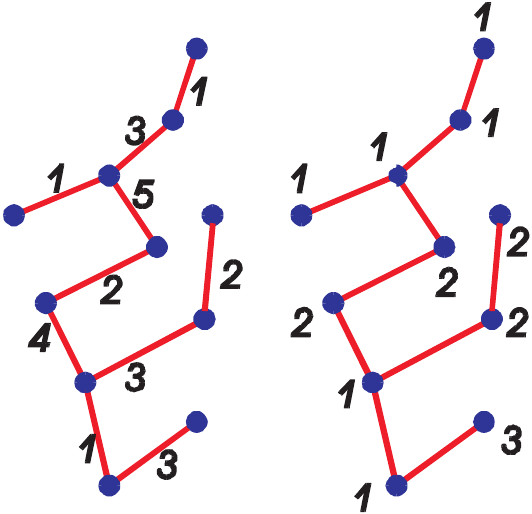}%
%{\special{ language "Scientific Word";  type "GRAPHIC";
%maintain-aspect-ratio TRUE;  display "USEDEF";  valid_file "F";
%width 1.9741in;  height 1.9095in;  depth 0pt;  original-width 2.0579in;
%original-height 1.9908in;  cropleft "0";  croptop "1";  cropright "1";
%cropbottom "0";  filename 'wfal4-eps-converted-to.pdf';file-properties "XNPEU";}} }%
%BeginExpansion
\begin{figure}
[ptb]
\begin{center}
\includegraphics[
height=1.9095in,
width=1.9741in
]%
{wfal4-eps-converted-to.pdf}%
\end{center}
\end{figure}
%EndExpansion
%

%TCIMACRO{\TeXButton{Column}{\column{5cm}}}%
%BeginExpansion
\column{5cm}%
%EndExpansion

\begin{itemize}
\item The waterfall flooding fills each catchment basin up to its lowest pass point

\item In terms of graphs : the flooding in a node is equal to the weight of
its lowest adjacent edge

\item It is an erosion between node weights and edge weights : $\left[
\varepsilon_{ne}e\right]  _{i}=\underset{(k\text{\thinspace
neighbors\thinspace of\thinspace}\,i)}{\bigwedge e_{ik}}$
\end{itemize}

%

%TCIMACRO{\TeXButton{EndColumns}{\end{columns}}}%
%BeginExpansion
\end{columns}%
%EndExpansion
%

%TCIMACRO{\TeXButton{EndFrame}{\end{frame}}}%
%BeginExpansion
\end{frame}%
%EndExpansion
%

%TCIMACRO{\TeXButton{BeginFrame}{\begin{frame}}}%
%BeginExpansion
\begin{frame}%
%EndExpansion
%

%TCIMACRO{\QTR{frametitle}{Erosion between edges and nodes}}%
%BeginExpansion
\frametitle{Erosion between edges and nodes}%
%EndExpansion
%

%TCIMACRO{\FRAME{ftbpF}{2.4605in}{2.3909in}{0pt}{}{}{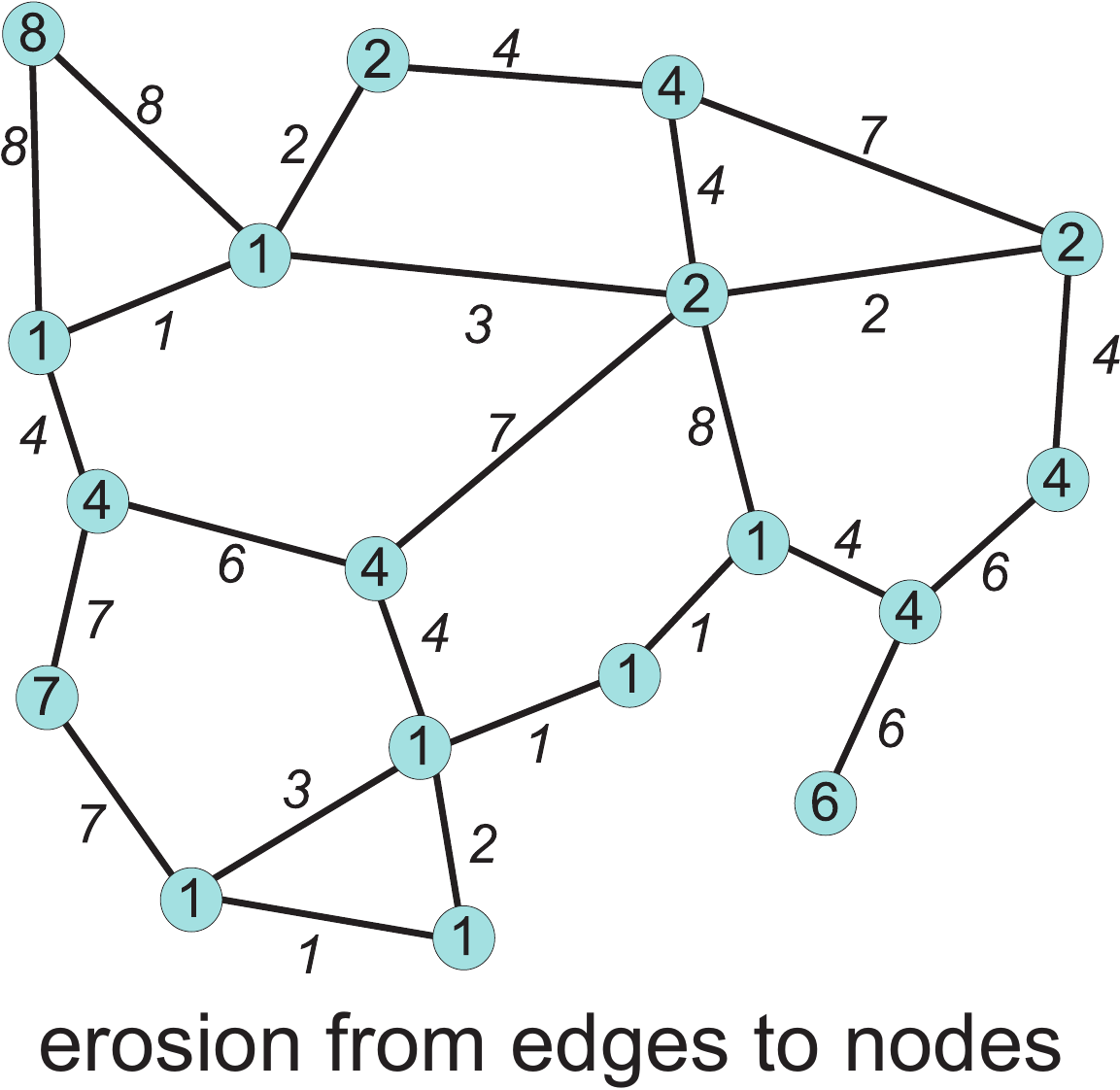}%
%{\special{ language "Scientific Word";  type "GRAPHIC";
%maintain-aspect-ratio TRUE;  display "USEDEF";  valid_file "F";
%width 2.4605in;  height 2.3909in;  depth 0pt;  original-width 4.4715in;
%original-height 4.3448in;  cropleft "0";  croptop "1";  cropright "1";
%cropbottom "0";  filename 'wfal46-eps-converted-to.pdf';file-properties "XNPEU";}} }%
%BeginExpansion
\begin{figure}
[ptb]
\begin{center}
\includegraphics[
height=2.3909in,
width=2.4605in
]%
{wfal46-eps-converted-to.pdf}%
\end{center}
\end{figure}
%EndExpansion
%

%TCIMACRO{\TeXButton{EndFrame}{\end{frame}}}%
%BeginExpansion
\end{frame}%
%EndExpansion
%

%TCIMACRO{\TeXButton{BeginFrame}{\begin{frame}}}%
%BeginExpansion
\begin{frame}%
%EndExpansion
%

%TCIMACRO{\QTR{frametitle}{Dilation between nodes and edges}}%
%BeginExpansion
\frametitle{Dilation between nodes and edges}%
%EndExpansion

If $n_{i}$ and $n_{j}$ represent the altitudes of the nodes $i$ and $j,$ the
lowest flood covering $i$ and $j$ has the altitude $\left[  \delta
_{en}n\right]  _{ij}=n_{i}\vee n_{j}$%

%TCIMACRO{\FRAME{ftbpF}{2.4605in}{2.4303in}{0pt}{}{}{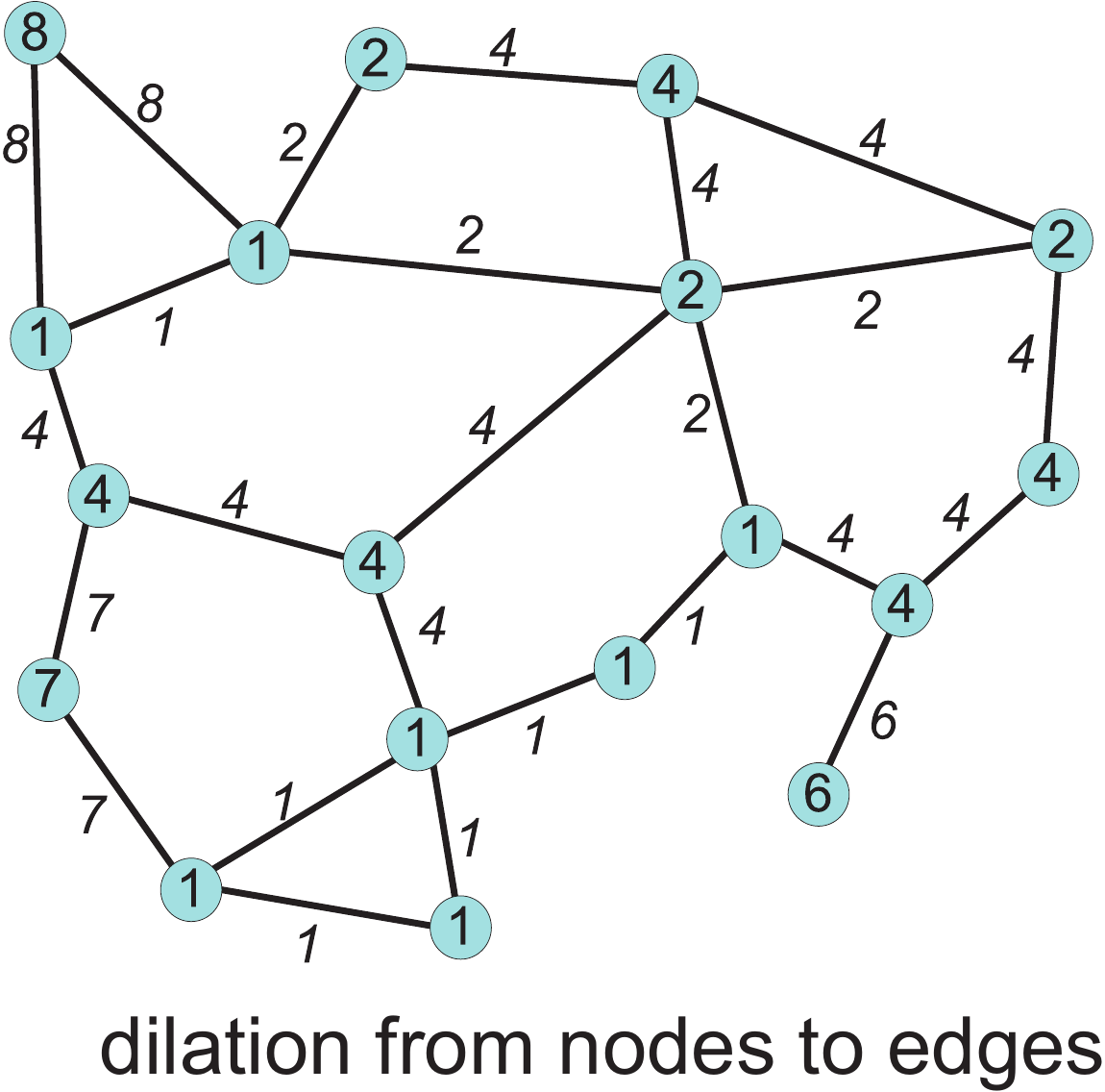}%
%{\special{ language "Scientific Word";  type "GRAPHIC";
%maintain-aspect-ratio TRUE;  display "USEDEF";  valid_file "F";
%width 2.4605in;  height 2.4303in;  depth 0pt;  original-width 4.4572in;
%original-height 4.4018in;  cropleft "0";  croptop "1";  cropright "1";
%cropbottom "0";  filename 'wfal45-eps-converted-to.pdf';file-properties "XNPEU";}} }%
%BeginExpansion
\begin{figure}
[ptb]
\begin{center}
\includegraphics[
height=2.4303in,
width=2.4605in
]%
{wfal45-eps-converted-to.pdf}%
\end{center}
\end{figure}
%EndExpansion
%

%TCIMACRO{\TeXButton{EndFrame}{\end{frame}}}%
%BeginExpansion
\end{frame}%
%EndExpansion
%

%TCIMACRO{\TeXButton{BeginFrame}{\begin{frame}}}%
%BeginExpansion
\begin{frame}%
%EndExpansion
%

%TCIMACRO{\QTR{frametitle}{Erosion between edges and edges /\ between nodes and
%nodes}}%
%BeginExpansion
\frametitle{Erosion between edges and edges /\ between nodes and nodes}%
%EndExpansion

By concatenation of operators between edges and nodes we obtain :

\begin{itemize}
\item an adjunction between nodes and nodes : $(\varepsilon_{ne}%
\varepsilon_{en},\delta_{ne}\delta_{en})=(\varepsilon_{n},\delta_{n})$

\item an adjunction between edges and edges : $(\varepsilon_{en}%
\varepsilon_{ne},\delta_{en}\delta_{ne})=(\varepsilon_{e},\delta_{e})$
\end{itemize}

%

%TCIMACRO{\TeXButton{EndFrame}{\end{frame}}}%
%BeginExpansion
\end{frame}%
%EndExpansion
%

%TCIMACRO{\TeXButton{BeginFrame}{\begin{frame}}}%
%BeginExpansion
\begin{frame}%
%EndExpansion

\begin{center}
{\Large \alert{Opening and closing}}
\end{center}

%

%TCIMACRO{\TeXButton{EndFrame}{\end{frame}}}%
%BeginExpansion
\end{frame}%
%EndExpansion
%

%TCIMACRO{\TeXButton{BeginFrame}{\begin{frame}}}%
%BeginExpansion
\begin{frame}%
%EndExpansion
%

%TCIMACRO{\QTR{frametitle}{Opening and closing}}%
%BeginExpansion
\frametitle{Opening and closing}%
%EndExpansion

As $\varepsilon_{ne}$ and $\delta_{en}$ are adjunct operators, the operator
$\varphi_{n}=\varepsilon_{ne}\delta_{en}$ is a closing on $n$ and $\gamma
_{e}=\delta_{en}\varepsilon_{ne}$ is an opening on $e$\ 

\bigskip

Similarly the operator $\varphi_{e}=\varepsilon_{en}\delta_{ne}$ is a closing
on $e$ and $\gamma_{n}=\delta_{ne}\varepsilon_{en}$ is an opening on $n$\ 

\bigskip

In the sequel, the flooding adjunction will play a key role, in particular the
associated opening $\gamma_{e}$ and closing $\varphi_{n}.$%

%TCIMACRO{\TeXButton{EndFrame}{\end{frame}}}%
%BeginExpansion
\end{frame}%
%EndExpansion
%

%TCIMACRO{\TeXButton{BeginFrame}{\begin{frame}}}%
%BeginExpansion
\begin{frame}%
%EndExpansion

\begin{center}
{\Large \alert{The opening $\gamma _{e}$}}
\end{center}

%

%TCIMACRO{\TeXButton{EndFrame}{\end{frame}}}%
%BeginExpansion
\end{frame}%
%EndExpansion
%

%TCIMACRO{\TeXButton{BeginFrame}{\begin{frame}}}%
%BeginExpansion
\begin{frame}%
%EndExpansion
%

%TCIMACRO{\QTR{frametitle}{Illustration in one dimension of the opening
%$\gamma_{e}$}}%
%BeginExpansion
\frametitle{Illustration in one dimension of the opening $\gamma_{e}$}%
%EndExpansion%
%TCIMACRO{\FRAME{ftbpF}{4.3473in}{1.2579in}{0pt}{}{}{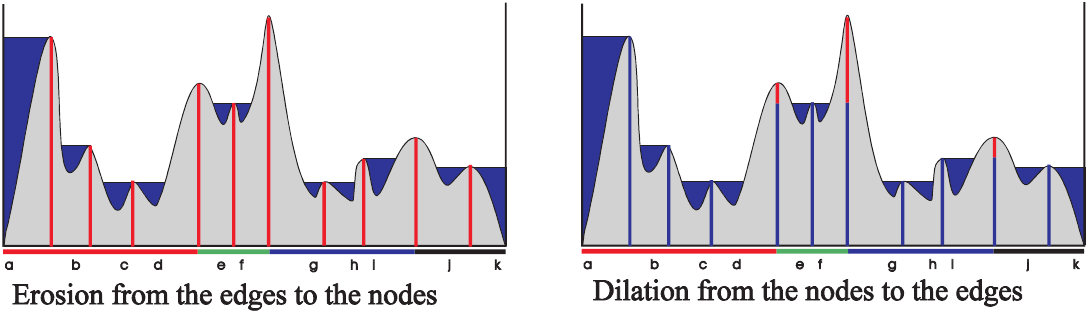}%
%{\special{ language "Scientific Word";  type "GRAPHIC";
%maintain-aspect-ratio TRUE;  display "USEDEF";  valid_file "F";
%width 4.3473in;  height 1.2579in;  depth 0pt;  original-width 4.214in;
%original-height 1.2in;  cropleft "0";  croptop "1";  cropright "1";
%cropbottom "0";  filename 'wfal19-eps-converted-to.pdf';file-properties "XNPEU";}} }%
%BeginExpansion
\begin{figure}
[ptb]
\begin{center}
\includegraphics[
height=1.2579in,
width=4.3473in
]%
{wfal19-eps-converted-to.pdf}%
\end{center}
\end{figure}
%EndExpansion

$\gamma_{e}=\delta_{en}\varepsilon_{ne}$ : on the left, the result of the
erosion, filling each basin to its lowest pass point. On the right the
subsequent dilation. Some pass points have a reduced altitude : the amount of
reduction is indicated in red. These passpoints are those which are not the
lowest pass points of a catchment basin.%

%TCIMACRO{\TeXButton{EndFrame}{\end{frame}}}%
%BeginExpansion
\end{frame}%
%EndExpansion
%

%TCIMACRO{\TeXButton{BeginFrame}{\begin{frame}}}%
%BeginExpansion
\begin{frame}%
%EndExpansion
%

%TCIMACRO{\QTR{frametitle}{Illustration on an edge weighted tree of the opening
%$\gamma_{e}$}}%
%BeginExpansion
\frametitle{Illustration on an edge weighted tree of the opening $\gamma_{e}$}%
%EndExpansion
%

%TCIMACRO{\FRAME{dtbpF}{3.2714in}{2.0672in}{0pt}{}{}{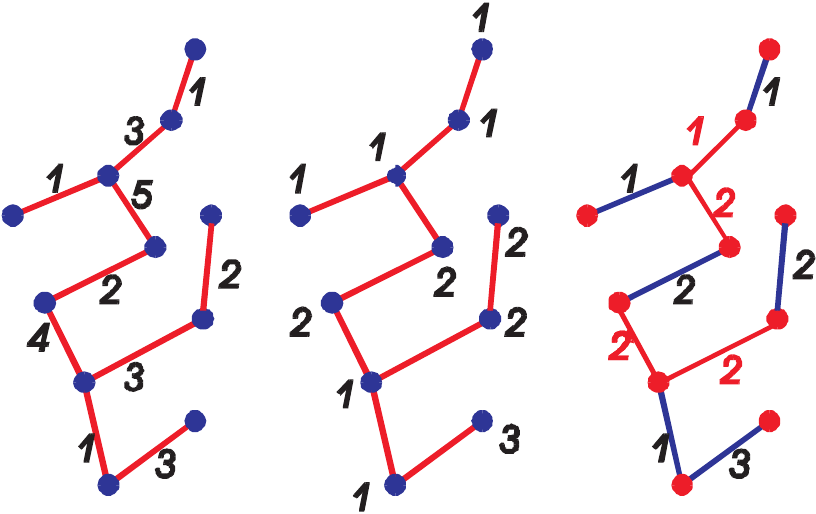}%
%{\special{ language "Scientific Word";  type "GRAPHIC";
%maintain-aspect-ratio TRUE;  display "USEDEF";  valid_file "F";
%width 3.2714in;  height 2.0672in;  depth 0pt;  original-width 3.1641in;
%original-height 1.9908in;  cropleft "0";  croptop "1";  cropright "1";
%cropbottom "0";  filename 'wfal22-eps-converted-to.pdf';file-properties "XNPEU";}} }%
%BeginExpansion
\begin{center}
\includegraphics[
height=2.0672in,
width=3.2714in
]%
{wfal22-eps-converted-to.pdf}%
\end{center}
%EndExpansion

$\gamma_{e}=\delta_{en}\varepsilon_{ne}$ : From left to right: 1) an edge
weighted graph, in the centre, 2) the result of the erosion $\varepsilon_{ne}%
$, 3) the subsequent dilation produces an opening. The edges in red are those
whose weight has been reduced by the opening.\ These edges are not the lowest
edges of one or their extremities.%

%TCIMACRO{\TeXButton{EndFrame}{\end{frame}}}%
%BeginExpansion
\end{frame}%
%EndExpansion
%

%TCIMACRO{\TeXButton{BeginFrame}{\begin{frame}}}%
%BeginExpansion
\begin{frame}%
%EndExpansion
%

%TCIMACRO{\QTR{frametitle}{The invariants of the opening $\gamma_{e}$}}%
%BeginExpansion
\frametitle{The invariants of the opening $\gamma_{e}$}%
%EndExpansion

Two possibilities exist for an edge $(i,j)$ with a weight $\lambda:$

\begin{itemize}
\item the edge $(i,j)$ has lower neighboring edges at each extremity. Hence
$\varepsilon_{ne}(i)<\lambda$ and $\varepsilon_{ne}(j)<\lambda$ ; hence
$\delta_{en}\varepsilon_{ne}(i,j)=\varepsilon_{en}(i)\vee\varepsilon
_{en}(j)<\lambda$

\item the edge $(i,j)$ is the lowest edge of the extremity $i.$ Then
$\varepsilon_{ne}(i)=\lambda$ and $\varepsilon_{ne}(j)\leq\lambda$ ; hence
$\delta_{en}\varepsilon_{ne}(i,j)=\varepsilon_{en}(i)\vee\varepsilon
_{en}(j)=\lambda$
\end{itemize}

\bigskip

\textbf{Conclusion: }the edges invariant by the opening\textbf{\ }$\gamma_{e}$
are the edges which are the lowest edge of one of their extremities. All edges
with lower adjacent edges at their extremities have their weight lowered by
the opening $\gamma_{e}$%

%TCIMACRO{\TeXButton{EndFrame}{\end{frame}}}%
%BeginExpansion
\end{frame}%
%EndExpansion
%

%TCIMACRO{\TeXButton{BeginFrame}{\begin{frame}}}%
%BeginExpansion
\begin{frame}%
%EndExpansion
%

%TCIMACRO{\QTR{frametitle}{Extracting from an edge weighted graph a partial
%graph invariant by the opening $\gamma_{e}$}}%
%BeginExpansion
\frametitle{Extracting from an edge weighted graph a partial graph invariant
by the opening $\gamma_{e}$}%
%EndExpansion

The relation $\varepsilon_{ne}=\varepsilon_{ne}(\delta_{en}\varepsilon_{ne})$
shows that all edges which are not invariant by the opening $\gamma_{e}%
=\delta_{en}\varepsilon_{ne}$ play no role in the erosion. As a matter of
fact, if the opening lowers the valuation of an edge $(i,j)$ and if the
subsequent erosion $\varepsilon_{ne}$ is not modified, it means that the
presence of this edge with its weight plays no role in the erosion
$\varepsilon_{ne}$.\ %

%TCIMACRO{\TeXButton{EndFrame}{\end{frame}}}%
%BeginExpansion
\end{frame}%
%EndExpansion
%

%TCIMACRO{\TeXButton{BeginFrame}{\begin{frame}}}%
%BeginExpansion
\begin{frame}%
%EndExpansion
%

%TCIMACRO{\QTR{frametitle}{The waterfall graph, partial graph of the lowest
%adjacent edges}}%
%BeginExpansion
\frametitle{The waterfall graph, partial graph of the lowest adjacent edges}%
%EndExpansion

Suppressing in an arbitrary graph $g$ all edges which are not invariant by the
opening $\gamma_{e}$ produces a graph $g^{\prime},$ invariant for the opening
$\gamma_{e}$ and called \textbf{waterfall graph}.\ The operator keeping for
each node only its lowest adjacent edges is written $\downarrow\ :(e,\diamond
)\rightarrow\ \downarrow G$

\textbf{Properties:}

\begin{itemize}
\item $\downarrow G$ spans all the nodes (each node has at least one lowest
neighboring edge ; such edges are invariant by $\gamma_{e}$)

\item And $\varepsilon_{ne}(G)=\varepsilon_{ne}(\downarrow G)$ ($\varepsilon
_{ne}$ assigns to each node the weight of its lowest adjacent edge, which is
the same in $G$ as in $\downarrow G)$
\end{itemize}

%

%TCIMACRO{\TeXButton{EndFrame}{\end{frame}}}%
%BeginExpansion
\end{frame}%
%EndExpansion
%

%TCIMACRO{\TeXButton{BeginFrame}{\begin{frame}}}%
%BeginExpansion
\begin{frame}%
%EndExpansion
%

%TCIMACRO{\QTR{frametitle}{Illustration on a planar edge weighted graph of the
%opening $\gamma_{e}$.\ }}%
%BeginExpansion
\frametitle{Illustration on a planar edge weighted graph of the opening
$\gamma_{e}$.\ }%
%EndExpansion
%

%TCIMACRO{\FRAME{ftbpF}{4.6987in}{1.5397in}{0pt}{}{}{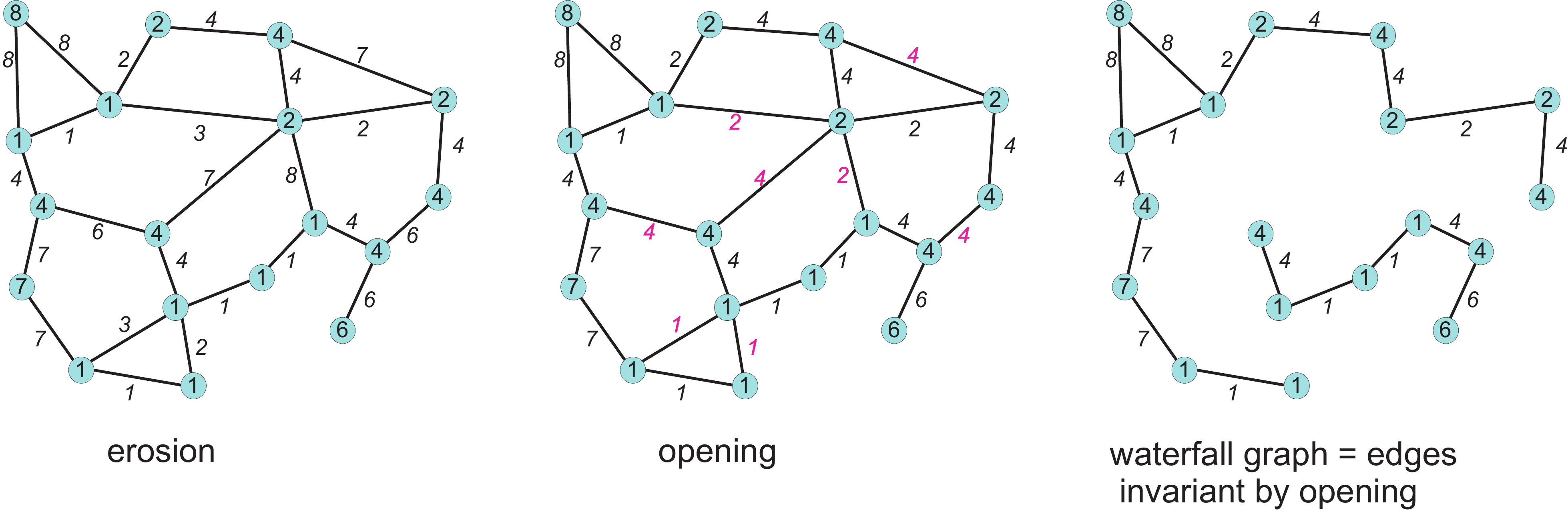}%
%{\special{ language "Scientific Word";  type "GRAPHIC";
%maintain-aspect-ratio TRUE;  display "USEDEF";  valid_file "F";
%width 4.6987in;  height 1.5397in;  depth 0pt;  original-width 15.1972in;
%original-height 4.9218in;  cropleft "0";  croptop "1";  cropright "1";
%cropbottom "0";  filename 'wfal26-eps-converted-to.pdf';file-properties "XNPEU";}} }%
%BeginExpansion
\begin{figure}
[ptb]
\begin{center}
\includegraphics[
height=1.5397in,
width=4.6987in
]%
{wfal26-eps-converted-to.pdf}%
\end{center}
\end{figure}
%EndExpansion
%

%TCIMACRO{\TeXButton{EndFrame}{\end{frame}}}%
%BeginExpansion
\end{frame}%
%EndExpansion
%

%TCIMACRO{\TeXButton{BeginFrame}{\begin{frame}}}%
%BeginExpansion
\begin{frame}%
%EndExpansion
%

%TCIMACRO{\QTR{frametitle}{Erosion from edges to nodes on the graph $\downarrow
%G$}}%
%BeginExpansion
\frametitle{Erosion from edges to nodes on the graph $\downarrow G$}%
%EndExpansion
%

%TCIMACRO{\FRAME{ftbpF}{4.0907in}{2.0705in}{0pt}{}{}{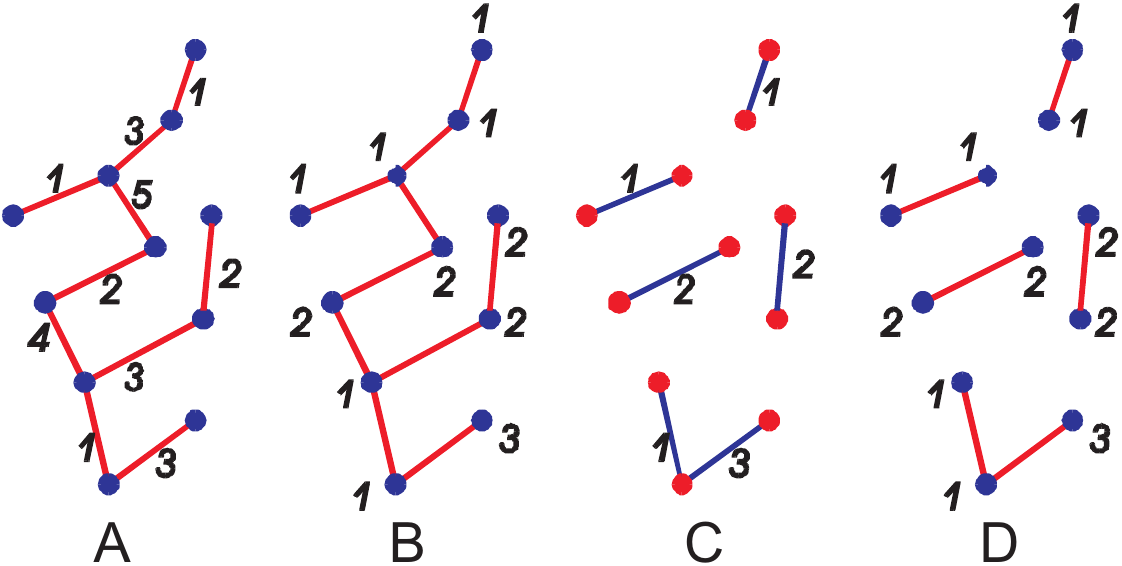}%
%{\special{ language "Scientific Word";  type "GRAPHIC";
%maintain-aspect-ratio TRUE;  display "USEDEF";  valid_file "F";
%width 4.0907in;  height 2.0705in;  depth 0pt;  original-width 4.3473in;
%original-height 2.1871in;  cropleft "0";  croptop "1";  cropright "1";
%cropbottom "0";  filename 'wfal23-eps-converted-to.pdf';file-properties "XNPEU";}} }%
%BeginExpansion
\begin{figure}
[ptb]
\begin{center}
\includegraphics[
height=2.0705in,
width=4.0907in
]%
{wfal23-eps-converted-to.pdf}%
\end{center}
\end{figure}
%EndExpansion

A: Initial graph $G$

B: Erosion $\varepsilon_{ne}$ from the edges to the nodes on $G$

C: The graph $\downarrow G$

D: Erosion $\varepsilon_{ne}$ from the edges to the nodes on $\downarrow G$ :
$\varepsilon_{ne}(G)=\varepsilon_{ne}(\downarrow G)$%

%TCIMACRO{\TeXButton{EndFrame}{\end{frame}}}%
%BeginExpansion
\end{frame}%
%EndExpansion
%

%TCIMACRO{\TeXButton{BeginFrame}{\begin{frame}}}%
%BeginExpansion
\begin{frame}%
%EndExpansion
%

%TCIMACRO{\QTR{frametitle}{Erosion from edges to nodes on the graph $\downarrow
%G$}}%
%BeginExpansion
\frametitle{Erosion from edges to nodes on the graph $\downarrow G$}%
%EndExpansion

Another illustration of $\varepsilon_{ne}(G)=\varepsilon_{ne}(\downarrow G) $%

%TCIMACRO{\FRAME{ftbpF}{4.7331in}{2.2173in}{0pt}{}{}{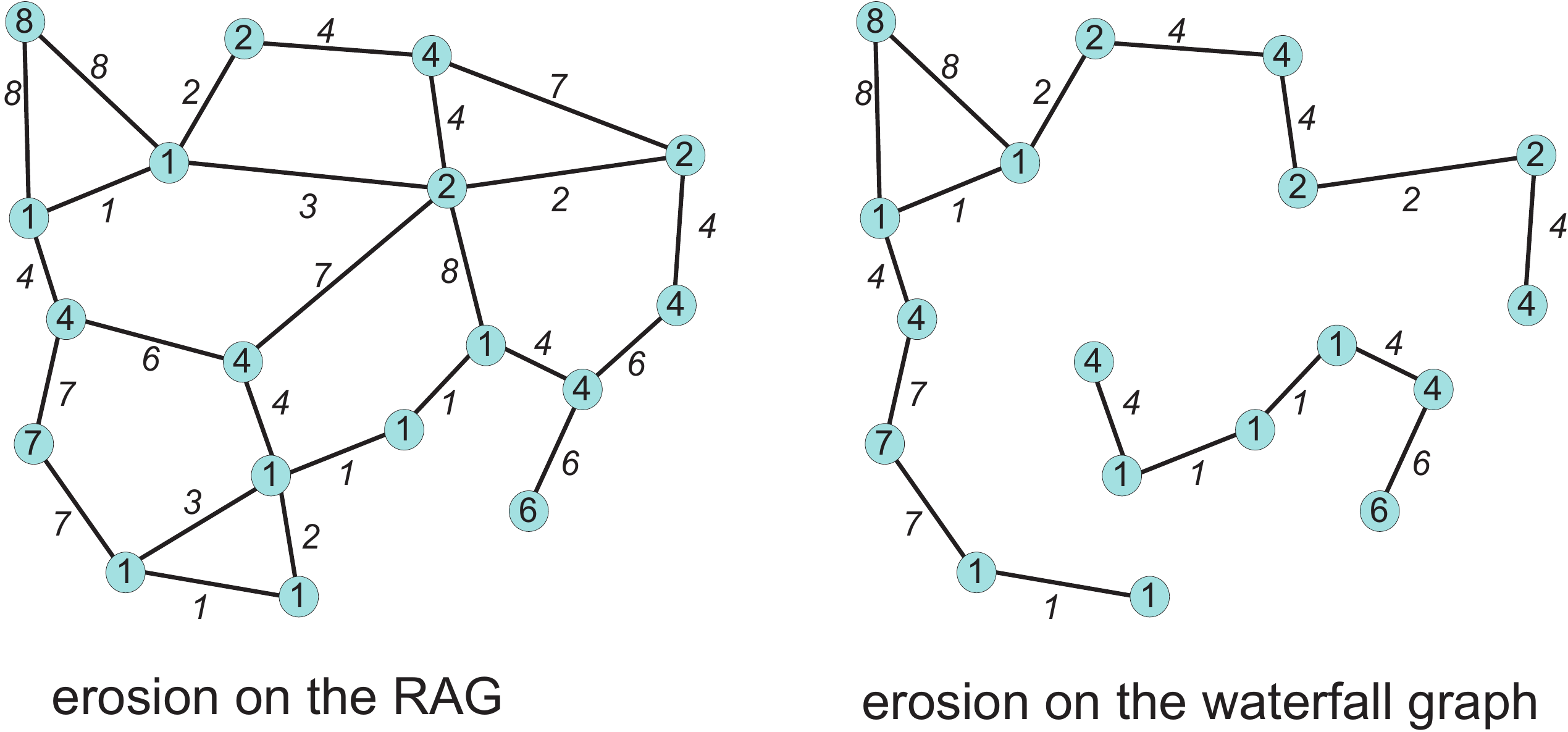}%
%{\special{ language "Scientific Word";  type "GRAPHIC";
%maintain-aspect-ratio TRUE;  display "USEDEF";  valid_file "F";
%width 4.7331in;  height 2.2173in;  depth 0pt;  original-width 9.8452in;
%original-height 4.5805in;  cropleft "0";  croptop "1";  cropright "1";
%cropbottom "0";  filename 'wfal25-eps-converted-to.pdf';file-properties "XNPEU";}} }%
%BeginExpansion
\begin{figure}
[ptb]
\begin{center}
\includegraphics[
height=2.2173in,
width=4.7331in
]%
{wfal25-eps-converted-to.pdf}%
\end{center}
\end{figure}
%EndExpansion
%

%TCIMACRO{\TeXButton{EndFrame}{\end{frame}}}%
%BeginExpansion
\end{frame}%
%EndExpansion
%

%TCIMACRO{\TeXButton{BeginFrame}{\begin{frame}}}%
%BeginExpansion
\begin{frame}%
%EndExpansion

\begin{center}
{\Large \alert{The invariants of the opening $\gamma _{e}$}}
\end{center}

%

%TCIMACRO{\TeXButton{EndFrame}{\end{frame}}}%
%BeginExpansion
\end{frame}%
%EndExpansion
%

%TCIMACRO{\TeXButton{BeginFrame}{\begin{frame}}}%
%BeginExpansion
\begin{frame}%
%EndExpansion
%

%TCIMACRO{\QTR{frametitle}{The invariants of the opening $\gamma_{e}$}}%
%BeginExpansion
\frametitle{The invariants of the opening $\gamma_{e}$}%
%EndExpansion

If $(e_{1},\diamond)$ and $(e_{2},\diamond)$ are two edge weight distributions
which are invariant for $\gamma_{e},$ then $(e_{1}\vee e_{2},\diamond)$ also
is invariant for $\gamma_{e}.$

\bigskip

An arbitrary graph may be transformed into a flooding graph:

* For an arbitrary edge weight distribution $(e,\diamond):(\gamma
_{e}e,\diamond)\in\operatorname*{Inv}(\gamma_{e})$

* For an arbitrary node weight distribution $(-,n):(\delta_{en}n,n)\in
\operatorname*{Inv}(\gamma_{e})$ as $\gamma_{e}\delta_{en}n=\delta
_{en}\varepsilon_{ne}\delta_{en}n=\delta_{en}n$%

%TCIMACRO{\TeXButton{EndFrame}{\end{frame}}}%
%BeginExpansion
\end{frame}%
%EndExpansion
%

%TCIMACRO{\TeXButton{BeginFrame}{\begin{frame}}}%
%BeginExpansion
\begin{frame}%
%EndExpansion
%

%TCIMACRO{\QTR{frametitle}{The invariants of the opening $\gamma_{e}$}}%
%BeginExpansion
\frametitle{The invariants of the opening $\gamma_{e}$}%
%EndExpansion

For a connected graph $g=(e,\diamond)\in\operatorname*{Inv}(\gamma_{e})$:

\begin{itemize}
\item any partial spanning graph belongs to $\operatorname*{Inv}(\gamma_{e})$

\item any subgraph belongs to $\operatorname*{Inv}(\gamma_{e})$
\end{itemize}

\bigskip

In particular $\downarrow\ :G=(e,\diamond)\rightarrow\ \downarrow G$
containing for each node only its lowest adjacent edges

and $\chi\downarrow G$ keeping only one lowest adjacent edge for each node,

both belong to $\operatorname*{Inv}(\gamma_{e})$%

%TCIMACRO{\TeXButton{EndFrame}{\end{frame}}}%
%BeginExpansion
\end{frame}%
%EndExpansion
%

%TCIMACRO{\TeXButton{BeginFrame}{\begin{frame}}}%
%BeginExpansion
\begin{frame}%
%EndExpansion
%

%TCIMACRO{\QTR{frametitle}{The regional minima of the opening $\gamma_{e}$}}%
%BeginExpansion
\frametitle{The regional minima of the opening $\gamma_{e}$}%
%EndExpansion

\begin{theorem}
If $G=(e,\diamond)\in\operatorname*{Inv}(\gamma_{e})$ and $m=(e,\diamond)$ is
the subgraph of its regional minima, then $\varepsilon_{ne}m=(-,\varepsilon
_{ne}e)$ is the subgraph of the regional minima of the graph $\varepsilon
_{ne}G=(-,\varepsilon_{ne}e)$
\end{theorem}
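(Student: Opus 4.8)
The plan is to reduce the whole statement to a single pointwise identity that the hypothesis $G\in\operatorname*{Inv}(\gamma_{e})$ makes available. Write $n=\varepsilon_{ne}e$ for the node weights carried by $\varepsilon_{ne}G$, so $n_{i}=\bigwedge_{k\,\text{neighbor of}\,i}e_{ik}$. First I would note that $\gamma_{e}e=e$ means, edge by edge, $\left[\delta_{en}\varepsilon_{ne}e\right]_{ij}=e_{ij}$, that is
\[ e_{ij}=n_{i}\vee n_{j}\qquad\text{for every edge }(i,j), \]
since $n_{i}\leq e_{ij}$ and $n_{j}\leq e_{ij}$ hold in any graph and $\gamma_{e}$-invariance is precisely the reverse inequality $e_{ij}\leq n_{i}\vee n_{j}$. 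Thus $G=\delta_{en}(\varepsilon_{ne}G)$: the edge relief is recovered entirely from the node relief $\varepsilon_{ne}G$, and the assertion becomes a dictionary between the flat zones and regional minima of these two reliefs. I expect this reformulation to be the conceptual step; the two inclusions below are then routine, the identity doing all the work. It is also exactly what the invariance hypothesis is for: without it, a high edge both of whose endpoints have a strictly lower incident edge would wrongly merge two minima after erosion (take a path with edge weights $1,3,1$).

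\emph{From the regional minima of $G$ to those of $\varepsilon_{ne}G$.} Let $M$ be a regional minimum of $G$: a connected subgraph all of whose edges have a common weight $\lambda$ and all of whose cocycle edges exceed $\lambda$. For a node $i$ of $M$, every edge incident to $i$ is either interior to $M$ (weight $\lambda$) or in its cocycle (weight $>\lambda$), so $n_{i}=\lambda$; hence all nodes of $M$ receive the weight $\lambda$ in $\varepsilon_{ne}G$ and stay connected there, $\varepsilon_{ne}G$ having the same nodes and edges as $G$. If $j$ is a node adjacent to $M$ but outside it, the edge $(i,j)$ joining it to some $i\in M$ lies in the cocycle, so $\lambda<e_{ij}=n_{i}\vee n_{j}=\lambda\vee n_{j}$, which forces $n_{j}>\lambda$. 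Therefore the node set of $M$ is a flat zone of $\varepsilon_{ne}G$ at altitude $\lambda$ all of whose neighbouring nodes are strictly higher, i.e. a regional minimum of $\varepsilon_{ne}G$; and $\varepsilon_{ne}m$ assigns it the same value $\lambda=\varepsilon_{ne}e$.

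\emph{Conversely.} Let $Z$ be a regional minimum of the node weighted graph $\varepsilon_{ne}G$ at altitude $\lambda$: $Z$ connected, $n_{i}=\lambda$ on $Z$, and $n_{j}>\lambda$ for every $j$ adjacent to $Z$. An interior edge $(i,j)$ with $i,j\in Z$ has $e_{ij}=n_{i}\vee n_{j}=\lambda$, so $Z$ together with its interior edges is a connected flat subgraph of $G$ at altitude $\lambda$; a cocycle edge $(i,j)$ with $i\in Z$, $j\notin Z$ has $e_{ij}\geq n_{j}>\lambda$. Hence $Z$ is a regional minimum of $G$ at altitude $\lambda$. The two inclusions match $\mu_{n}(\varepsilon_{ne}G)$ with $\varepsilon_{ne}(\mu_{e}G)$ node set by node set and weight by weight, which is the claim. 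The only point I would flag is the degenerate regional minimum reduced to a single node: it carries no interior edge, hence no intrinsic altitude on the edge side, and the equality $n_{i}=\lambda$ must then be read through the convention attached to the expansion operator $\multimap$; either one assumes regional minima carry at least one edge, or one applies $\multimap$ first, and away from that case no genuine obstacle remains.
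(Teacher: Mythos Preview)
Your proof is correct and follows essentially the same line as the paper: both exploit that $\gamma_{e}$-invariance means every edge is the lowest edge of one of its endpoints (your identity $e_{ij}=n_{i}\vee n_{j}$ is exactly this, stated algebraically), and deduce that a cocycle neighbour $j$ of a minimum must satisfy $n_{j}>\lambda$. You actually go further than the paper, which proves only the forward inclusion; your converse direction and your caveat about singleton minima are sound additions that the paper's proof omits.
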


\textbf{Proof: }A regional minimum $m_{k}$ of the graph $G=(e,\diamond
)\in\operatorname*{Inv}(\gamma_{e})$ is a plateau of edges with altitude
$\lambda$, with all external edges having a weight $>\lambda.\ $If a node $i$
belongs to this regional minimum, its adjacent edges have a weight
$\geq\lambda$ but it has at least one neighboring edge with weight $\lambda$ :
hence $\varepsilon_{ne}e(i)=\lambda.$ Consider now an edge $(s,t)$ outside the
regional minimum, with the node $s$ inside and the node $t$ outside the
minimum.\ Then $e_{st}>\lambda$.\ As $G=(e,\diamond)\in\operatorname*{Inv}%
(\gamma_{e}),$ the edge $(s,t)$ is then one of the lowest edges of the nodes
$t$ : thus $\varepsilon_{ne}(t)=e_{st}>\lambda.\ $This shows that the nodes
spanned by the regional minimum $m_{k}$ form a regional minimum of the graph
$\varepsilon_{ne}G=(-,\varepsilon_{ne}e)$%

%TCIMACRO{\TeXButton{EndFrame}{\end{frame}}}%
%BeginExpansion
\end{frame}%
%EndExpansion
%

%TCIMACRO{\TeXButton{BeginFrame}{\begin{frame}}}%
%BeginExpansion
\begin{frame}%
%EndExpansion
%

%TCIMACRO{\QTR{frametitle}{Inverse of $\varepsilon_{ne}$ on the invariants of
%the opening $\gamma_{e}$}}%
%BeginExpansion
\frametitle{Inverse of $\varepsilon_{ne}$ on the invariants of the opening
$\gamma_{e}$}%
%EndExpansion

On $\operatorname*{Inv}(\gamma_{e}):\delta_{en}\varepsilon_{ne}=Identity$
showing that on $\operatorname*{Inv}(\gamma_{e}):=\delta_{en}=\varepsilon
_{ne}^{-1}$%

%TCIMACRO{\TeXButton{EndFrame}{\end{frame}}}%
%BeginExpansion
\end{frame}%
%EndExpansion%

%TCIMACRO{\TeXButton{BeginFrame}{\begin{frame}}}%
%BeginExpansion
\begin{frame}%
%EndExpansion

\begin{center}
{\Large \alert{The closing $\phi _{n}$}}
\end{center}

%

%TCIMACRO{\TeXButton{EndFrame}{\end{frame}}}%
%BeginExpansion
\end{frame}%
%EndExpansion
%

%TCIMACRO{\TeXButton{BeginFrame}{\begin{frame}}}%
%BeginExpansion
\begin{frame}%
%EndExpansion
%

%TCIMACRO{\QTR{frametitle}{The closing $\varphi_{n}$}}%
%BeginExpansion
\frametitle{The closing $\varphi_{n}$}%
%EndExpansion

The closing $\varphi_{n}$ is obtained by a dilation $\delta_{en}$ of the node
weights followed by an erosion $\varepsilon_{ne}.\ $One remarks on the
following figure that the node weights remain the same, except the isolated
regional minima, which take the weight of their lowest neighboring node.\ We
give the proof in the next slide.%

%TCIMACRO{\FRAME{ftbpF}{2.489in}{1.8349in}{0pt}{}{}{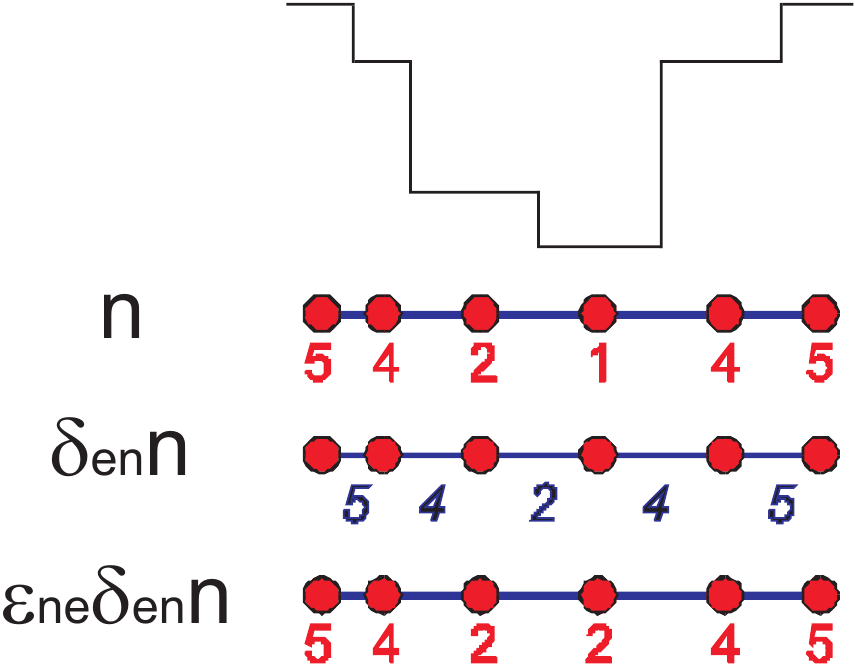}%
%{\special{ language "Scientific Word";  type "GRAPHIC";
%maintain-aspect-ratio TRUE;  display "USEDEF";  valid_file "F";
%width 2.489in;  height 1.8349in;  depth 0pt;  original-width 6.8262in;
%original-height 3.1397in;  cropleft "0";  croptop "1";  cropright "1";
%cropbottom "0";  filename 'wfal47-eps-converted-to.pdf';file-properties "XNPEU";}} }%
%BeginExpansion
\begin{figure}
[ptb]
\begin{center}
\includegraphics[
height=1.8349in,
width=2.489in
]%
{wfal47-eps-converted-to.pdf}%
\end{center}
\end{figure}
%EndExpansion
%

%TCIMACRO{\TeXButton{EndFrame}{\end{frame}}}%
%BeginExpansion
\end{frame}%
%EndExpansion
%

%TCIMACRO{\TeXButton{BeginFrame}{\begin{frame}}}%
%BeginExpansion
\begin{frame}%
%EndExpansion
%

%TCIMACRO{\QTR{frametitle}{The invariants of the closing $\varphi_{n}$}}%
%BeginExpansion
\frametitle{The invariants of the closing $\varphi_{n}$}%
%EndExpansion

Two possibilities exist for a node $i$ with a weight $\lambda:$

\begin{itemize}
\item the node $i$ is an isolated regional minimum.\ Then $\delta_{en}$
assigns to all edges adjacent to $i$ a weight bigger than $\lambda.\ $The
subsequent erosion $\varepsilon_{ne}$ assigns to $i$ the smallest of these weights.\ 

\item the node $i$ has a neighbor $j$ with a weight $\mu\leq\lambda.$ Then
$\delta_{en}$ assigns to the edge $(i,j)$ the weight $\lambda;$ whatever the
weight of the other adjacent edges$.\ $The subsequent erosion $\varepsilon
_{ne}$ assigns to $i$ the smallest of these weights, that is $\lambda.$
\end{itemize}

\bigskip

The closing $\varphi_{n}$ replaces each isolated node constituting a regional
minimum by its lowest neighboring node and leaves all other nodes unchanged.%

%TCIMACRO{\TeXButton{EndFrame}{\end{frame}}}%
%BeginExpansion
\end{frame}%
%EndExpansion
%

%TCIMACRO{\TeXButton{BeginFrame}{\begin{frame}}}%
%BeginExpansion
\begin{frame}%
%EndExpansion

\begin{center}
{\Large \alert{The invariants of the closing $\phi _{n}$}}
\end{center}

%

%TCIMACRO{\TeXButton{EndFrame}{\end{frame}}}%
%BeginExpansion
\end{frame}%
%EndExpansion
%

%TCIMACRO{\TeXButton{BeginFrame}{\begin{frame}}}%
%BeginExpansion
\begin{frame}%
%EndExpansion
%

%TCIMACRO{\QTR{frametitle}{The invariants of the closing $\varphi_{n}$}}%
%BeginExpansion
\frametitle{The invariants of the closing $\varphi_{n}$}%
%EndExpansion

A\ node weighted graph is invariant for the closing $\varphi_{n}$ iff it does
not contain isolated regional minima.

\bigskip

For an arbitrary node weight distribution $g=(-,n):$ $\multimap g$ creates for
each isolated regional minimum $i$ a dummy node with the same weight linked by
an edge with $i.$ Hence $\multimap g\in\operatorname*{Inv}(\varphi_{n}) $%

%TCIMACRO{\TeXButton{EndFrame}{\end{frame}}}%
%BeginExpansion
\end{frame}%
%EndExpansion
%

%TCIMACRO{\TeXButton{BeginFrame}{\begin{frame}}}%
%BeginExpansion
\begin{frame}%
%EndExpansion
%

%TCIMACRO{\QTR{frametitle}{The invariants of the closing $\varphi_{n}$}}%
%BeginExpansion
\frametitle{The invariants of the closing $\varphi_{n}$}%
%EndExpansion

If $(-,n_{1})$ and $(-,n_{1})$ are two node weight distributions which are
invariant for $\varphi_{n},$ then $(-,n_{1}\wedge n_{2})$ also is invariant
for $\varphi_{n}.$

For an arbitrary node weight distribution $(-,n):(-,\varphi_{n}n)\in
\operatorname*{Inv}(\varphi_{n})$

For an arbitrary edge weight distribution $(e,\diamond):(e,\varepsilon
_{ne}e)\in\operatorname*{Inv}(\varphi_{n})$ as $\varphi_{n}\varepsilon
_{ne}e=\varepsilon_{ne}\delta_{en}\varepsilon_{ne}e=\varepsilon_{ne}e$%

%TCIMACRO{\TeXButton{EndFrame}{\end{frame}}}%
%BeginExpansion
\end{frame}%
%EndExpansion
%

%TCIMACRO{\TeXButton{BeginFrame}{\begin{frame}}}%
%BeginExpansion
\begin{frame}%
%EndExpansion
%

%TCIMACRO{\QTR{frametitle}{The invariants of the closing $\varphi_{n}$}}%
%BeginExpansion
\frametitle{The invariants of the closing $\varphi_{n}$}%
%EndExpansion

For a graph $g=(e,\diamond)\in\operatorname*{Inv}(\varphi_{n})$: any partial
or subgraph which does not create an isolated regional minimum also belongs to
$\operatorname*{Inv}(\varphi_{n})$\ \bigskip

Each node $i$ belonging to a regional minimum of $g$ is linked with at least
another node $j$ in this minimum through an edge with the same weight. This
edge is one of the lowest adjacent edges of $i$ and links $i$ with one of its
lowest neighboring nodes.\bigskip

For this reason after pruning, the graphs $\downarrow g$ and $\Downarrow g$
still belong to $\operatorname*{Inv}(\varphi_{n}).$%

%TCIMACRO{\TeXButton{EndFrame}{\end{frame}}}%
%BeginExpansion
\end{frame}%
%EndExpansion
%

%TCIMACRO{\TeXButton{BeginFrame}{\begin{frame}}}%
%BeginExpansion
\begin{frame}%
%EndExpansion
%

%TCIMACRO{\QTR{frametitle}{The regional minima of the closing $\varphi_{n}$}}%
%BeginExpansion
\frametitle{The regional minima of the closing $\varphi_{n}$}%
%EndExpansion

\begin{theorem}
If $G=(-,n)\in\operatorname*{Inv}(\varphi_{n})$ and $m=(-,n)$ is the subgraph
of its regional minima, then $\delta_{en}m=(\delta_{en}n,\diamond)$ is the
subgraph of the regional minima of the graph $\delta_{en}G=(\delta
_{en}n,\diamond)$
\end{theorem}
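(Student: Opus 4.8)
The statement is the exact dual, on node weighted graphs, of the earlier theorem that $\varepsilon_{ne}$ carries the regional minima of a $\gamma_{e}$-invariant graph onto the regional minima of its erosion, so I would reproduce that argument mutatis mutandis, the clause ``no isolated regional minimum'' (which characterises $\operatorname*{Inv}(\varphi_{n})$) taking over the structural role that membership in $\operatorname*{Inv}(\gamma_{e})$ played there. Concretely I would prove the two inclusions between ``regional minima of $\delta_{en}G$'' and ``$\delta_{en}$ of the regional minima of $G$'' in turn. For the easy inclusion, let $m_{k}$ be a regional minimum of $G=(-,n)$: a connected flat zone all of whose nodes have some altitude $\lambda$, every node adjacent to it from outside having altitude $>\lambda$. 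Since $G\in\operatorname*{Inv}(\varphi_{n})$ contains no isolated regional minimum, $m_{k}$ has at least two nodes and hence at least one internal edge. Every internal edge $(i,j)$ of $m_{k}$ gets weight $\left[\delta_{en}n\right]_{ij}=n_{i}\vee n_{j}=\lambda$, so the edges of $\delta_{en}m_{k}$ form a connected plateau at altitude $\lambda$ in $\delta_{en}G$ (connectedness being inherited from the flat zone $m_{k}$), while every cocycle edge $(s,t)$ of $m_{k}$, with $s\in m_{k}$ and $t\notin m_{k}$, gets weight $n_{s}\vee n_{t}=\lambda\vee n_{t}=n_{t}>\lambda$; thus $\delta_{en}m_{k}$ is a regional minimum of $\delta_{en}G$.

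For the converse, let $M$ be a regional minimum of $\delta_{en}G$: a connected plateau of edges of a common altitude $\mu$ whose cocycle edges all exceed $\mu$, and let $A$ be the set of nodes it spans. Each $i\in A$ is an extremity of some edge of $M$, say $(i,j)$, so $n_{i}\vee n_{j}=\mu$ and therefore $n_{i}\le\mu$. The crux is to show that in fact $n_{i}=\mu$ for every $i\in A$. Suppose $n_{i}<\mu$ for some $i\in A$. Since $\operatorname*{Inv}(\varphi_{n})$ forbids isolated regional minima, $i$ cannot have all its neighbours strictly above it, so some neighbour $k$ satisfies $n_{k}\le n_{i}$. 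Then the edge $(i,k)$ has altitude $n_{i}\vee n_{k}=n_{i}<\mu$, so $(i,k)\notin M$; but if $k\notin A$ this makes $(i,k)$ a cocycle edge of $M$ of altitude below $\mu$, and if $k\in A$ it makes $(i,k)$ an internal edge of $M$ whose altitude is not $\mu$ — both impossible. Hence $n_{i}=\mu$ throughout $A$. It follows at once that every internal edge of $A$ has altitude $n_{i}\vee n_{j}=\mu$, so, the connectedness of $M$ passing to $A$, the set $A$ is a flat zone of $G$ at altitude $\mu$; and for any node $t\notin A$ adjacent to some $s\in A$ the edge $(s,t)$ is a cocycle edge of $M$, hence $n_{s}\vee n_{t}>\mu$ and thus $n_{t}>\mu$ since $n_{s}=\mu$. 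Therefore $A$ is a regional minimum of $G$, and $M$, whose edges are exactly the internal edges of $A$ all carrying altitude $\mu$, is precisely $\delta_{en}$ of it.

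The only genuinely delicate point is the highlighted step of the converse — excluding a node of $A$ lying strictly below the plateau level $\mu$ — and it is the single place where the hypothesis $G\in\operatorname*{Inv}(\varphi_{n})$ is really used. Its engine is the elementary dichotomy attached to $M$ (``an internal edge of $M$ sits at altitude $\mu$, a cocycle edge of $M$ sits strictly above $\mu$'') combined with the observation that a node $i$ which is not an isolated regional minimum always possesses an adjacent edge of altitude $\le n_{i}$, which would land on the wrong side of that dichotomy. Once this is in place, the matching of node sets, of plateau altitudes, and of edge sets is routine bookkeeping, and running the two inclusions together identifies $\delta_{en}m$ with the subgraph of all regional minima of $\delta_{en}G$.
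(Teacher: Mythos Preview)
Your forward inclusion is essentially identical to the paper's argument: a regional minimum $m_{k}$ of $G$, having at least two nodes since $G\in\operatorname*{Inv}(\varphi_{n})$, dilates to a plateau of edges at altitude $\lambda$ whose cocycle lies strictly above $\lambda$. The paper stops there.

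You go further and supply the converse inclusion, which the paper asserts in the theorem statement but does not prove. Your argument for it is sound: the crucial observation is that any node $i$ in a graph without isolated regional minima possesses a neighbour $k$ with $n_{k}\le n_{i}$, whence the edge $(i,k)$ carries weight $n_{i}$ under $\delta_{en}$; if $n_{i}<\mu$ this edge would sit strictly below the plateau level of $M$ while touching it, which is impossible whether $k$ lies inside or outside $A$. One small caution on the case $k\in A$: the paper's formal definition of a flat zone only demands that nodes be joined by chains of uniform altitude, so it is not entirely automatic that every internal edge of a regional minimum carries the plateau weight. In practice the paper treats regional minima as plateaus of edges at a common altitude (see the proof of the dual theorem on $\gamma_{e}$), and under that reading your contradiction is immediate; alternatively one can argue directly that an edge of altitude $<\mu$ adjacent to an edge of $M$ already violates regional minimality, which covers both cases uniformly and sidesteps the definitional ambiguity. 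Either way, your proof is correct and strictly more complete than the paper's.
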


\textbf{Proof:} A regional minimum $m_{i}$ of a graph $G=(-,n)\in
\operatorname*{Inv}(\varphi_{n})$ is a plateau of pixels with altitude
$\lambda$, containing at least two nodes (there are no isolated regional
minima in $\operatorname*{Inv}(\varphi_{n})$).\ All internal edges of the
plateau get the valuation $\lambda$ by $\delta_{en}n.$ If an edge $(i,j)$ has
the extremity $i$ in the minimum and the extremity $j$ outside, then
$\delta_{en}n(i,j)>\lambda.\ $Hence, for the graph $(\delta_{en}n,\diamond),$
the edges spanning the nodes of $m_{i}$ form a regional minimum.\ %

%TCIMACRO{\TeXButton{EndFrame}{\end{frame}}}%
%BeginExpansion
\end{frame}%
%EndExpansion
%

%TCIMACRO{\TeXButton{BeginFrame}{\begin{frame}}}%
%BeginExpansion
\begin{frame}%
%EndExpansion
%

%TCIMACRO{\QTR{frametitle}{Inverse of $\delta_{en}$ on the invariants of the
%closing $\varphi_{n}$}}%
%BeginExpansion
\frametitle{Inverse of $\delta_{en}$ on the invariants of the closing
$\varphi_{n}$}%
%EndExpansion

On $\operatorname*{Inv}(\varphi_{n}):\varepsilon_{ne}\delta_{en}=Identity$
showing that on $\operatorname*{Inv}(\varphi_{n}):\varepsilon_{ne}=\delta
_{en}^{-1}$%

%TCIMACRO{\TeXButton{EndFrame}{\end{frame}}}%
%BeginExpansion
\end{frame}%
%EndExpansion
%

%TCIMACRO{\TeXButton{BeginFrame}{\begin{frame}}}%
%BeginExpansion
\begin{frame}%
%EndExpansion

\begin{center}
{\Large \alert{The flooding graphs}}
\end{center}

%

%TCIMACRO{\TeXButton{EndFrame}{\end{frame}}}%
%BeginExpansion
\end{frame}%
%EndExpansion
%

%TCIMACRO{\TeXButton{BeginFrame}{\begin{frame}}}%
%BeginExpansion
\begin{frame}%
%EndExpansion
%

%TCIMACRO{\QTR{frametitle}{The flooding graph}}%
%BeginExpansion
\frametitle{The flooding graph}%
%EndExpansion

\begin{definition}
An edge and node weighted spanning graph $G=[N,E]$ is a flooding graph iff its
weight distribution $(n,e)$ verify the relations:\newline- $\delta_{en}%
n=e$\newline- $\varepsilon_{ne}e=n$
\end{definition}

\begin{corollary}
For a flooding graph weight distribution $(n,e):$\newline- $n\in
\operatorname*{Inv}(\varphi_{n})$\newline- $e\in\operatorname*{Inv}(\gamma
_{e})$
\end{corollary}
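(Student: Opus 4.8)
The plan is to read the corollary directly off the factorizations of the closing and the opening established earlier in the excerpt. Recall that, since $(\delta_{en},\varepsilon_{ne})$ is an adjunct pair (the flooding adjunction), the operator $\varphi_{n}=\varepsilon_{ne}\delta_{en}$ is a closing on the node weights and $\gamma_{e}=\delta_{en}\varepsilon_{ne}$ is an opening on the edge weights. A flooding graph is by definition one whose weight distribution $(n,e)$ satisfies the two coupling relations $\delta_{en}n=e$ and $\varepsilon_{ne}e=n$, so the proof is just a matter of substituting these relations into the factorizations.

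First I would show $n\in\operatorname*{Inv}(\varphi_{n})$. Applying $\varphi_{n}$ to $n$ gives $\varphi_{n}n=\varepsilon_{ne}\delta_{en}n$; by the first defining relation $\delta_{en}n=e$ this equals $\varepsilon_{ne}e$, and by the second defining relation $\varepsilon_{ne}e=n$ we obtain $\varphi_{n}n=n$, i.e. $n$ is a fixed point of $\varphi_{n}$.

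Symmetrically, to show $e\in\operatorname*{Inv}(\gamma_{e})$, apply $\gamma_{e}$ to $e$: $\gamma_{e}e=\delta_{en}\varepsilon_{ne}e=\delta_{en}n=e$, again using the second and then the first coupling relation. Thus both weight components of a flooding graph are invariant, which is exactly the assertion of the corollary.

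I do not expect any real obstacle here: the statement is an immediate corollary of the two defining equalities together with $\varphi_{n}=\varepsilon_{ne}\delta_{en}$ and $\gamma_{e}=\delta_{en}\varepsilon_{ne}$. The only point worth keeping in mind is that $\varphi_{n}$ and $\gamma_{e}$ act on the node- and edge-weight components respectively while leaving the underlying spanning structure $[N,E]$ untouched, so that ``invariant'' here means precisely that the weight distribution is fixed; this matches the convention used in the preceding slides on $\operatorname*{Inv}(\gamma_{e})$ and $\operatorname*{Inv}(\varphi_{n})$. It may also be worth remarking that this is the property that links the two invariant families and so justifies the terminology ``flooding graph''.
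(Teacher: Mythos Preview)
Your proof is correct and is essentially identical to the paper's: the paper writes the chains $e=\delta_{en}n=\delta_{en}\varepsilon_{ne}e=\gamma_{e}e$ and $n=\varepsilon_{ne}e=\varepsilon_{ne}\delta_{en}n=\varphi_{n}n$, which are exactly your substitutions read from right to left.
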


\begin{proof}
$e=\delta_{en}n=\delta_{en}\varepsilon_{ne}e=\gamma_{e}e$ and $n=\varepsilon
_{ne}e=\varepsilon_{ne}\delta_{en}n=\varphi_{n}n$
\end{proof}

%

%TCIMACRO{\TeXButton{EndFrame}{\end{frame}}}%
%BeginExpansion
\end{frame}%
%EndExpansion
%

%TCIMACRO{\TeXButton{BeginFrame}{\begin{frame}}}%
%BeginExpansion
\begin{frame}%
%EndExpansion
%

%TCIMACRO{\QTR{frametitle}{Properties of the flooding graph}}%
%BeginExpansion
\frametitle{Properties of the flooding graph}%
%EndExpansion

As $G$ is invariant by $\gamma_{e},$ all its edges are the lowest edge of one
of their extremities.

As $G$ is invariant by $\varphi_{n},$ it has no isolated regional minimum.%

%TCIMACRO{\TeXButton{EndFrame}{\end{frame}}}%
%BeginExpansion
\end{frame}%
%EndExpansion
%

%TCIMACRO{\TeXButton{BeginFrame}{\begin{frame}}}%
%BeginExpansion
\begin{frame}%
%EndExpansion
%

%TCIMACRO{\QTR{frametitle}{The lowest adjacent edges of each node in a flooding
%graph}}%
%BeginExpansion
\frametitle{The lowest adjacent edges of each node in a flooding graph}%
%EndExpansion

In a flooding graph, $\varepsilon_{ne}e=n,$ hence all edges adjacent to a node
have weights which are higher or equal than this node and at least one of them
has the same weight.\ \bigskip

On the other hand, each node $i$ has at least one neighbor $j$ which is lower
or equal (otherwise it would be an isolated regional minimum).\ The weight
$e_{ij}$ verifies $e_{ij}=\delta_{en}n(i,j)=n_{i}\vee n_{j}=n_{i}.$ This shows
that the edges linking a node with lower or equal nodes have the same weight
than this node.\ \bigskip

In particular, the edges linking a node $i$ to its lowest neighboring nodes
belong to the lowest adjacent edges of this node and have the same weight:
$\Downarrow G\subset\ \downarrow G$ and $\downarrow\Downarrow G=\Downarrow
\downarrow G=\ \Downarrow G$%

%TCIMACRO{\TeXButton{EndFrame}{\end{frame}}}%
%BeginExpansion
\end{frame}%
%EndExpansion%
%TCIMACRO{\TeXButton{BeginFrame}{\begin{frame}}}%
%BeginExpansion
\begin{frame}%
%EndExpansion
%

%TCIMACRO{\QTR{frametitle}{Constructing flooding graphs.\ }}%
%BeginExpansion
\frametitle{Constructing flooding graphs.\ }%
%EndExpansion

If $G=(e,\diamond)\in\operatorname*{Inv}(\gamma_{e})$ then $(e,\varepsilon
_{ne}e)$ is a flooding graph (since $e=\gamma_{e}e=\delta_{en}\varepsilon
_{ne}e=\delta_{en}n)$

\bigskip

If $G=(-,n)\in\operatorname*{Inv}(\varphi_{n})$ then $(\delta_{en}n,n)$ is a
flooding graph (since $n=\varphi_{n}n=\varepsilon_{ne}\delta_{en}%
n=\varepsilon_{ne}e)$%

%TCIMACRO{\TeXButton{EndFrame}{\end{frame}}}%
%BeginExpansion
\end{frame}%
%EndExpansion
%

%TCIMACRO{\TeXButton{BeginFrame}{\begin{frame}}}%
%BeginExpansion
\begin{frame}%
%EndExpansion
%

%TCIMACRO{\QTR{frametitle}{Deriving flooding graphs from ordinary graphs}}%
%BeginExpansion
\frametitle{Deriving flooding graphs from ordinary graphs}%
%EndExpansion

If $G=(e,\diamond)$ is an arbitrary edge weighted graph, $\downarrow
(e,\diamond)\in\operatorname*{Inv}(\gamma_{e}),$ as the edges lowered by
$\gamma_{e},$ have been suppressed, the other edges keeping their weights.
Recall that $\varepsilon_{ne}e=\varepsilon_{ne}\downarrow e.\ $ The derived
flooding graph simply is $(\downarrow e,\varepsilon_{ne}\downarrow e)$

\bigskip

If $G=(-,n)$ is an arbitrary node weighted graph,$(-,\multimap n)\in
\operatorname*{Inv}(\varphi_{n}),$ as isolated regional minima, if any, have
been duplicated.\ The derived flooding graph simply is $(\delta_{en}\multimap
n,\multimap n).$%

%TCIMACRO{\TeXButton{EndFrame}{\end{frame}}}%
%BeginExpansion
\end{frame}%
%EndExpansion
%

%TCIMACRO{\TeXButton{BeginFrame}{\begin{frame}}}%
%BeginExpansion
\begin{frame}%
%EndExpansion
%

%TCIMACRO{\QTR{frametitle}{Partial graph of a flooding graph}}%
%BeginExpansion
\frametitle{Partial graph of a flooding graph}%
%EndExpansion

Suppressing edges in a flooding graph, but leaving at least one lower
neighboring edge for each node (like that, no isolated regional minima are
created) produces a partial graph which also is a flooding graph, with the
same distribution of weights on the nodes and on the remaining edges.\ %

%TCIMACRO{\TeXButton{EndFrame}{\end{frame}}}%
%BeginExpansion
\end{frame}%
%EndExpansion
%

%TCIMACRO{\TeXButton{BeginFrame}{\begin{frame}}}%
%BeginExpansion
\begin{frame}%
%EndExpansion

\begin{center}
{\Large \alert{Regional minima of flooding graphs}}
\end{center}

%

%TCIMACRO{\TeXButton{EndFrame}{\end{frame}}}%
%BeginExpansion
\end{frame}%
%EndExpansion
%

%TCIMACRO{\TeXButton{BeginFrame}{\begin{frame}}}%
%BeginExpansion
\begin{frame}%
%EndExpansion
%

%TCIMACRO{\QTR{frametitle}{Regional minima of a flooding graph}}%
%BeginExpansion
\frametitle{Regional minima of a flooding graph}%
%EndExpansion

We proved earlier these theorems:

\begin{itemize}
\item If $G=(e,\diamond)\in\operatorname*{Inv}(\gamma_{e})$ and $m=(e,\diamond
)$ is the subgraph of its regional minima, then $\varepsilon_{ne}%
m=(-,\varepsilon_{ne}e)$ is the subgraph of the regional minima of the graph
$\varepsilon_{ne}G=(-,\varepsilon_{ne}e).$

\item If $G=(-,n)\in\operatorname*{Inv}(\varphi_{n})$ and $m=(-,n)$ is the
subgraph of its regional minima, then $\delta_{en}m=(\delta_{en}n,\diamond)$
is the subgraph of the regional minima of the graph $\delta_{en}G=(\delta
_{en}n,\diamond).$
\end{itemize}

As in a flooding graph $n=\varepsilon_{ne}e$ and $e=\delta_{en}n$ we derive:

\begin{theorem}
If $G$ is a flooding graph with the weight distribution $(e,n)$, then the node
weighted graph $(-,n)$ and the edge weighted graph $(e,\diamond)$ have the
same regional minima subgraph
\end{theorem}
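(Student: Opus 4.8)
The plan is to deduce the statement directly from the two theorems recalled just above, the only extra ingredients being the two defining identities of a flooding graph, $n=\varepsilon_{ne}e$ and $e=\delta_{en}n$, together with the Corollary, which guarantees $e\in\operatorname*{Inv}(\gamma_{e})$ and $n\in\operatorname*{Inv}(\varphi_{n})$.

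First I would feed the edge weighted graph $(e,\diamond)$ into the first theorem. Since $e\in\operatorname*{Inv}(\gamma_{e})$ the theorem applies: if $m$ denotes the subgraph gathering all regional minima of $(e,\diamond)$, then $\varepsilon_{ne}m$ is the subgraph gathering all regional minima of $\varepsilon_{ne}(e,\diamond)=(-,\varepsilon_{ne}e)=(-,n)$. But $\varepsilon_{ne}$ merely relabels: it keeps the underlying node set and edge set of $m$ and only moves weights from edges onto nodes. Hence the node set and the edge set of the regional minima subgraph of $(-,n)$ coincide with those of the regional minima subgraph of $(e,\diamond)$. For symmetry, and as a cross-check, I would run the dual argument: since $n\in\operatorname*{Inv}(\varphi_{n})$, the second theorem carries the subgraph of regional minima of $(-,n)$ by $\delta_{en}$ onto the subgraph of regional minima of $\delta_{en}(-,n)=(\delta_{en}n,\diamond)=(e,\diamond)$, and $\delta_{en}$ likewise leaves the underlying nodes and edges untouched.

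The one point that needs care — and which I expect to be the real obstacle — is checking that ``the subgraph of regional minima'' denotes the same underlying subgraph on both sides, i.e. that edges, not merely nodes, are matched. Here the flooding-graph relation $e=\delta_{en}n$ does the work: for every edge $(i,j)$ of $G$ one has $e_{ij}=n_{i}\vee n_{j}$, so inside a node plateau all sitting at altitude $\lambda$ every internal edge has weight $\lambda$ and forms an edge plateau at $\lambda$, while along the cocycle a strictly higher neighbour $j$ forces $e_{ij}=n_{j}>\lambda$; conversely, by the proof of the first theorem the nodes touched by an edge plateau at $\lambda$ all receive weight $\varepsilon_{ne}e=\lambda$ and (invariance by $\gamma_{e}$) their outside neighbours strictly exceed $\lambda$. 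Spelling out this two-sided inclusion, and using that $\operatorname*{Inv}(\varphi_{n})$ contains no isolated minimum so that each node plateau genuinely carries internal edges, one concludes that a subgraph of $G$ is a regional minimum of $(-,n)$ iff it is a regional minimum of $(e,\diamond)$ — the claimed equality of regional minima subgraphs. Everything else is a direct substitution of the flooding-graph identities into the hypotheses of the two theorems.
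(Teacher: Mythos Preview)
Your proposal is correct and follows essentially the same route as the paper: invoke the two previously established theorems about how $\varepsilon_{ne}$ and $\delta_{en}$ transport regional minima subgraphs, then substitute the flooding-graph identities $n=\varepsilon_{ne}e$ and $e=\delta_{en}n$ (together with the corollary $e\in\operatorname*{Inv}(\gamma_{e})$, $n\in\operatorname*{Inv}(\varphi_{n})$) to conclude. The paper's own argument is in fact nothing more than this one-line substitution; your additional paragraph verifying that the underlying nodes and edges of the minima subgraphs match is a welcome elaboration of what the paper leaves implicit, not a different method.
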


%

%TCIMACRO{\TeXButton{EndFrame}{\end{frame}}}%
%BeginExpansion
\end{frame}%
%EndExpansion
%

%TCIMACRO{\TeXButton{BeginFrame}{\begin{frame}}}%
%BeginExpansion
\begin{frame}%
%EndExpansion
%

%TCIMACRO{\QTR{frametitle}{The regional minima on the edge or node graph within
%the flooding graph}}%
%BeginExpansion
\frametitle{The regional minima on the edge or node graph within the flooding
graph}%
%EndExpansion
%

%TCIMACRO{\FRAME{ftbpF}{3.9196in}{2.234in}{0pt}{}{}{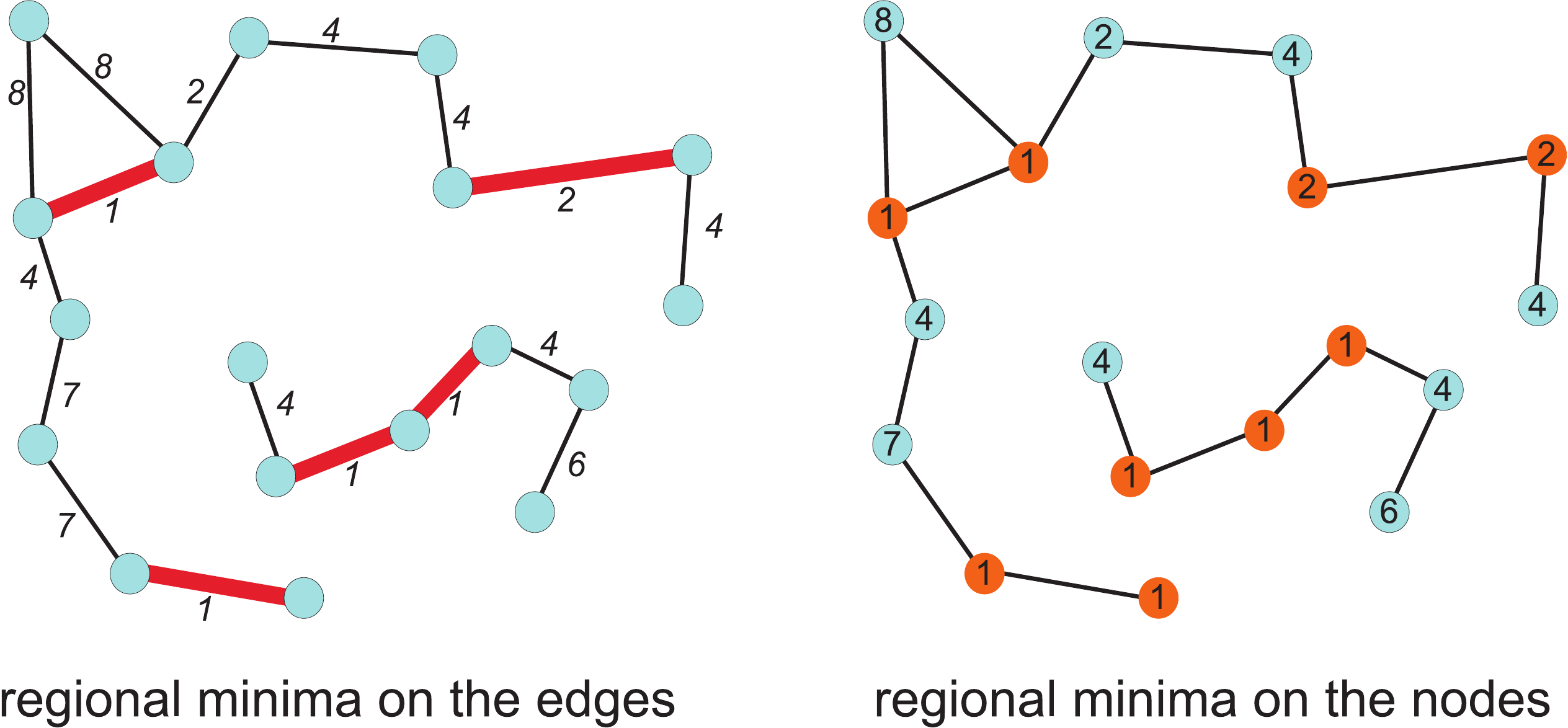}%
%{\special{ language "Scientific Word";  type "GRAPHIC";
%maintain-aspect-ratio TRUE;  display "USEDEF";  valid_file "F";
%width 3.9196in;  height 2.234in;  depth 0pt;  original-width 9.8184in;
%original-height 5.5658in;  cropleft "0";  croptop "1";  cropright "1";
%cropbottom "0";  filename 'wfal28-eps-converted-to.pdf';file-properties "XNPEU";}} }%
%BeginExpansion
\begin{figure}
[ptb]
\begin{center}
\includegraphics[
height=2.234in,
width=3.9196in
]%
{wfal28-eps-converted-to.pdf}%
\end{center}
\end{figure}
%EndExpansion
%

%TCIMACRO{\TeXButton{EndFrame}{\end{frame}}}%
%BeginExpansion
\end{frame}%
%EndExpansion
%

%TCIMACRO{\TeXButton{BeginFrame}{\begin{frame}}}%
%BeginExpansion
\begin{frame}%
%EndExpansion
%

%TCIMACRO{\QTR{frametitle}{Labeling the regional minima on the edge or node
%graph within the flooding graph}}%
%BeginExpansion
\frametitle{Labeling the regional minima on the edge or node graph within the
flooding graph}%
%EndExpansion

As the minima are identical on the nodes or the edges of a flooding graph, it
is possible to assign the same labels to nodes or to edges.%

%TCIMACRO{\FRAME{ftbpF}{2.0244in}{2.2173in}{0pt}{}{}{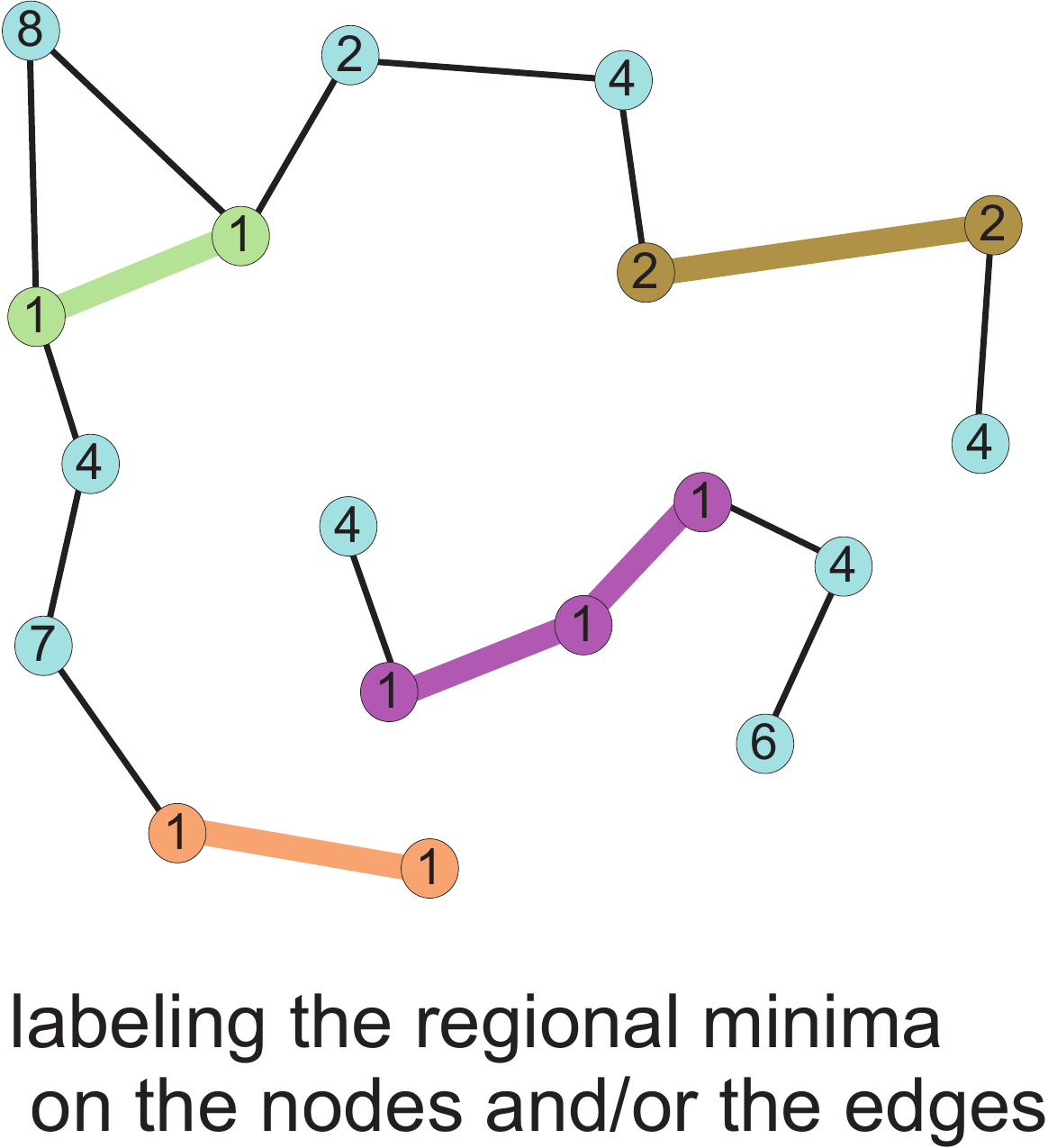}%
%{\special{ language "Scientific Word";  type "GRAPHIC";
%maintain-aspect-ratio TRUE;  display "USEDEF";  valid_file "F";
%width 2.0244in;  height 2.2173in;  depth 0pt;  original-width 4.5083in;
%original-height 4.9436in;  cropleft "0";  croptop "1";  cropright "1";
%cropbottom "0";  filename 'wfal29-eps-converted-to.pdf';file-properties "XNPEU";}} }%
%BeginExpansion
\begin{figure}
[ptb]
\begin{center}
\includegraphics[
height=2.2173in,
width=2.0244in
]%
{wfal29-eps-converted-to.pdf}%
\end{center}
\end{figure}
%EndExpansion
%

%TCIMACRO{\TeXButton{EndFrame}{\end{frame}}}%
%BeginExpansion
\end{frame}%
%EndExpansion
%

%TCIMACRO{\TeXButton{BeginFrame}{\begin{frame}}}%
%BeginExpansion
\begin{frame}%
%EndExpansion

\begin{center}
{\Large \alert{Paths of steepest descent and catchment basins}}
\end{center}

%

%TCIMACRO{\TeXButton{EndFrame}{\end{frame}}}%
%BeginExpansion
\end{frame}%
%EndExpansion
%

%TCIMACRO{\TeXButton{BeginFrame}{\begin{frame}}}%
%BeginExpansion
\begin{frame}%
%EndExpansion
%

%TCIMACRO{\QTR{frametitle}{The catchment basins of the minima}}%
%BeginExpansion
\frametitle{The catchment basins of the minima}%
%EndExpansion

\begin{lemma}
From each node outside a regional minimum starts a never ascending path to a
regional minimum in a flooding graph.
\end{lemma}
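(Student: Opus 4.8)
The plan is to argue by a straightforward descent/well-foundedness argument, exploiting the two defining properties of a flooding graph established above: that $\varepsilon_{ne}e=n$ (so every node is lower than or equal to all its adjacent edges, with equality for at least one of them) and that $\delta_{en}n=e$ together with the absence of isolated regional minima (every node that is a local minimum lies in a regional minimum containing at least one other node linked to it by an equal-weight edge). Fix a node $i$ not belonging to any regional minimum. I want to produce a never ascending path (NAP) $i=n_1,n_2,\ldots$ whose node weights never increase and which eventually enters a regional minimum.

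First I would describe the step of the construction. Given a node $x$ not in a regional minimum, $x$ is not a regional minimum plateau by itself and, since the graph is a flooding graph, $x$ cannot be an isolated regional minimum; so either $x$ has a strictly lower neighbor, or every neighbor of $x$ has weight $\geq n_x$ and $x$ sits on a flat plateau that is not a regional minimum. In the first case pick a strictly lower neighbor $y$; the edge $e_{xy}=n_x\vee n_y=n_x$ by the flooding-graph identity, so the move is legitimate and strictly decreasing. In the second case, the plateau through $x$ is a flat zone that is not a regional minimum, hence its cocycle contains an edge whose other extremity is strictly lower; walking across the plateau (each internal edge has weight $n_x$, hence the path is flat and never ascending) reaches a node adjacent to something strictly lower, and from there one strictly descending step is available. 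Thus from any node outside a regional minimum one can always extend the path by finitely many flat steps followed by at least one strictly descending step, reaching a node with strictly smaller weight — unless one has meanwhile entered a regional minimum, in which case we stop.

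Next I would close the induction by well-foundedness. Since $\mathcal{T}$ is totally ordered and the node set is such that weights along the constructed path form a sequence in which, grouped into "flat runs," successive runs have strictly decreasing values, the process cannot continue forever: there can be only finitely many distinct values taken, and within a single flat run we never revisit a node (we walk across a plateau toward its boundary, so with a little care — e.g. choosing a shortest flat subpath to the plateau boundary — no node repeats). Hence the path terminates, and it can only terminate by entering a regional minimum. Concatenating all the pieces yields the desired never ascending path from $i$ to a regional minimum.

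The main obstacle I anticipate is the plateau bookkeeping: ensuring the flat portions of the path are genuinely finite and do not loop, and handling the case where $x$ lies on a large flat plateau that is not a regional minimum — one must invoke that a flat zone which is not a regional minimum has, by definition, a strictly lower edge in its cocycle, and then argue that a finite flat path inside the plateau reaches the corresponding boundary node. This is exactly the place where the hypothesis "flooding graph" (no isolated regional minima, and $e=\delta_{en}n$ so internal plateau edges carry the plateau weight) is doing real work, and where I would be most careful to spell out that the concatenation of flat and strictly descending segments is globally never ascending.
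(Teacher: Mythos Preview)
Your argument is correct and follows essentially the same route as the paper's own proof: the same case split (strictly lower neighbor versus non-minimal plateau), the same walk across the plateau to a boundary node with a lower exterior neighbor, and the same iterated descent to a regional minimum. You add more care about termination and explicitly invoke the flooding-graph identities to justify that the flat and descending edges have the right weights, but the core idea is identical.
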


\textbf{Proof: }Any node $i$ outside a regional minimum has a lower
neighboring node, if it does not belong to a plateau. Otherwise it belongs to
a plateau, containing somewhere a node $j$ with a lower neighboring node
outside, as the plateau is not a regional minimum.\ The plateau being
connected, there exists a path of constant altitude in the plateau between $i$
and $j$.\ Following this path, it is possible, starting at node $i$ to reach
the lower node $k.$ This shows that for each node there exists a never
ascending path to a lower neighboring node. Taking this new node as starting
node, a still lower node may be reached. The process may be repeated until a
regional minimum is reached.

Thanks to this lemma, it is possible to define the catchment basins of the minima.\ %

%TCIMACRO{\TeXButton{EndFrame}{\end{frame}}}%
%BeginExpansion
\end{frame}%
%EndExpansion
%

%TCIMACRO{\TeXButton{BeginFrame}{\begin{frame}}}%
%BeginExpansion
\begin{frame}%
%EndExpansion
%

%TCIMACRO{\QTR{frametitle}{The catchment basins of the minima}}%
%BeginExpansion
\frametitle{The catchment basins of the minima}%
%EndExpansion

\begin{definition}
The catchment basin of a minimum $m$ is the set of all nodes from which starts
a never ascending path towards $m.$
\end{definition}

As in a flooding graph, each node and its lower neighboring edges have the
same weight, each node in such a never ascending path is followed by an edge
with the same weight except the last one belonging to a regional minimum.\ For
this reason the catchment basins based on the node weights or on the edge
weights are identical.

\textbf{The watershed zones }are the nodes belonging to more than one
catchment basin.

\textbf{The restricted catchment basins} are the nodes which belong to only
one catchment basin : it is the difference between the catchment basin of a
minimum $m$ and the union of catchment basins of all other minima. From a node
in a restricted catchment basin, there exists a unique non ascending path
towards a unique regional minimum.\ %

%TCIMACRO{\TeXButton{EndFrame}{\end{frame}}}%
%BeginExpansion
\end{frame}%
%EndExpansion
%

%TCIMACRO{\TeXButton{BeginFrame}{\begin{frame}}}%
%BeginExpansion
\begin{frame}%
%EndExpansion
%

%TCIMACRO{\QTR{frametitle}{M-flooding graphs}}%
%BeginExpansion
\frametitle{M-flooding graphs}%
%EndExpansion

The catchment basins rely entirely on the non ascending paths of the graph
reaching a regional minimum.\ The altitude of the regional minimum has no
importance.\ If we consider only the end points of such a path, the fact that
the minimum is an isolated node or not has no importance either. For this
reason, we may relax the definition of the flooding graphs for which
$\delta_{en}n=e$ and $\varepsilon_{ne}e=n$ for all edges and nodes.\ 

Consider a flooding graph where all non regional minima nodes have a positive
weight.\ Assigning to all minima a weight $0$ does not invalidate the relation
$\delta_{en}n=e.\ $The relation $\varepsilon_{ne}e=n$ also remains true,
except for isolated regional minima.\ We call M-flooding graphs, the node and
edge weighted graphs verifying $\delta_{en}n=e$ everywhere and $\varepsilon
_{ne}e=n$ for all nodes which are not isolated regional minima.\ 

In what follows we consider NAP which end with a node in a regional
minimum.\ Whether this minimum is isolated or not has no importance.\ %

%TCIMACRO{\TeXButton{EndFrame}{\end{frame}}}%
%BeginExpansion
\end{frame}%
%EndExpansion
%

%TCIMACRO{\TeXButton{BeginFrame}{\begin{frame}}}%
%BeginExpansion
\begin{frame}%
%EndExpansion

\begin{center}
{\Large \alert{Extending the restricted cathment basins and reducing the
watershed zone : steep, steeper, steepest flooding graphs}}
\end{center}

%

%TCIMACRO{\TeXButton{EndFrame}{\end{frame}}}%
%BeginExpansion
\end{frame}%
%EndExpansion
%

%TCIMACRO{\TeXButton{BeginFrame}{\begin{frame}}}%
%BeginExpansion
\begin{frame}%
%EndExpansion
%

%TCIMACRO{\QTR{frametitle}{Catchment basins and segmentation.\ }}%
%BeginExpansion
\frametitle{Catchment basins and segmentation.\ }%
%EndExpansion

The watershed transform is mainly used for segmentation, with the aim to
create a partition representing precisely the extension of each object. Large
watershed zones are ambiguous as they separate restricted catchment basins
without precise localisation of the contour separating them. \ 

For obtaining precise segmentations, it is important to reduce these zones and
even suppress them completely if possible. Reducing the number of never
ascending paths from each node to a regional minimum would help constraining
the construction of the watershed partition.\ Ideally, if only one such path
remains for each node outside the regional minima, the solution would be
unique, the restricted catchment occupy the whole space and the watershed
zones be empty.\ 

In this section we show how to reduce the number of paths and keep only those
which have some degree of steepness. Increasing the steepness reduces the
number of paths.%

%TCIMACRO{\TeXButton{EndFrame}{\end{frame}}}%
%BeginExpansion
\end{frame}%
%EndExpansion
%

%TCIMACRO{\TeXButton{BeginFrame}{\begin{frame}}}%
%BeginExpansion
\begin{frame}%
%EndExpansion
%

%TCIMACRO{\QTR{frametitle}{Flooding tracks}}%
%BeginExpansion
\frametitle{Flooding tracks}%
%EndExpansion

In a flooding graph each edge is the lowest edge of one of its extremities and
has the same weigh: we call such a pair made of a node and adjacent edge with
the same weight \textbf{flooding pair}.\ 

Two couples $(i,ij)$ and $(j,jk)$ are chained if the node in the second couple
is an extremity of the edge of the first couple.

A \textbf{flooding track} is a list of chained couples of never increasing weight.

The \textbf{lexicographic weight} of a flooding track is the list of never
increasing weights of its pairs.

Two flooding tracks may then be compared by comparing their lexicographic
weights using the \textbf{lexicographic order relation}.%

%TCIMACRO{\TeXButton{EndFrame}{\end{frame}}}%
%BeginExpansion
\end{frame}%
%EndExpansion
%

%TCIMACRO{\TeXButton{BeginFrame}{\begin{frame}}}%
%BeginExpansion
\begin{frame}%
%EndExpansion
%

%TCIMACRO{\QTR{frametitle}{Pruning the flooding graph to get steeper paths.}}%
%BeginExpansion
\frametitle{Pruning the flooding graph to get steeper paths.}%
%EndExpansion

We define a pruning operator $\downarrow^{k}$ operating on a flooding graph
$G.$ The pruning $\downarrow^{k}$ considers each node outside the regional
minima and suppresses its adjacent edges, if they are not the highest edge of
a flooding track verifying:

\begin{itemize}
\item their lexicographic weight is minimal.

\item their length is $k.$ It may be shorter if its last couple belongs to a
regional minimum
\end{itemize}

After pruning, each node outside the regional minima is the origin of one or
several k-steep flooding tracks (remark that the pruning only suppresses the
highest edge of the track).\ If there are several of them, they have the same weights.\ 

We call them the k-steep adjacent paths to the node $i.$ We say that the graph
$\downarrow^{k}G$ has a k-steepness or is k-steep.%

%TCIMACRO{\TeXButton{EndFrame}{\end{frame}}}%
%BeginExpansion
\end{frame}%
%EndExpansion
%

%TCIMACRO{\TeXButton{BeginFrame}{\begin{frame}}}%
%BeginExpansion
\begin{frame}%
%EndExpansion
%

%TCIMACRO{\QTR{frametitle}{Nested k-steep graphs}}%
%BeginExpansion
\frametitle{Nested k-steep graphs}%
%EndExpansion

As the steepness degree increases, the pruning becomes more and more severe,
producing a decreasing series of partial graphs, hence for $k>l:\ \downarrow
^{k}G\subset\downarrow^{l}G.\ $Furthermore $\downarrow^{k}\downarrow
^{l}G=\downarrow^{l}\downarrow^{k}G=\downarrow^{k\vee l}G.$\bigskip

The pruning $\downarrow^{1}$ does nothing as each edge is the lowest edge of
one of its extremities in any graph invariant by $\gamma_{e}.$ On the
contrary, $\downarrow^{1}=\downarrow$ transforms an arbitrary graph into a
graph invariant by $\gamma_{e}.$\bigskip

The pruning $\downarrow^{2}$ keeps for each node $i$ the adjacent edges which
are followed by a second couple of minimal weight. These edges are those
linking $i$ with one of its lowest neighboring nodes.%

%TCIMACRO{\TeXButton{EndFrame}{\end{frame}}}%
%BeginExpansion
\end{frame}%
%EndExpansion
%

%TCIMACRO{\TeXButton{BeginFrame}{\begin{frame}}}%
%BeginExpansion
\begin{frame}%
%EndExpansion

\begin{center}
{\Large \alert{A morphological characterization of k-steep graphs}}
\end{center}

%

%TCIMACRO{\TeXButton{EndFrame}{\end{frame}}}%
%BeginExpansion
\end{frame}%
%EndExpansion
%

%TCIMACRO{\TeXButton{BeginFrame}{\begin{frame}}}%
%BeginExpansion
\begin{frame}%
%EndExpansion
%

%TCIMACRO{\QTR{frametitle}{Eroding k-steep graphs}}%
%BeginExpansion
\frametitle{Eroding k-steep graphs}%
%EndExpansion

Consider a k-steep graph $\downarrow^{k}G=G=(e,n),$ with $k\geq2.$ We define
the erosion $\varepsilon G=(\varepsilon_{e}e,\varepsilon_{n}n),$ where
$\varepsilon_{e}=\varepsilon_{en}\varepsilon_{ne}$ and $\varepsilon
_{n}=\varepsilon_{ne}\varepsilon_{en}.$

$\varepsilon^{(1)}G=\varepsilon G$ and $\varepsilon^{(m)}G=\varepsilon
\varepsilon^{(m-1)}G.$\bigskip

It does not change the catchment basins if we assign to the regional minima
the weight $0$, whereas all other nodes and edges have weights $>0$ (like that
as soon the erosion assigns the value $0\ $to a node or an edge, they remain
equal to $0$ for all subsequent erosions).%

%TCIMACRO{\TeXButton{EndFrame}{\end{frame}}}%
%BeginExpansion
\end{frame}%
%EndExpansion
%

%TCIMACRO{\TeXButton{BeginFrame}{\begin{frame}}}%
%BeginExpansion
\begin{frame}%
%EndExpansion
%

%TCIMACRO{\QTR{frametitle}{Eroding k-steep graphs}}%
%BeginExpansion
\frametitle{Eroding k-steep graphs}%
%EndExpansion

Consider a k-steep lexicographic track $\tau$ of a flooding graph $G$ made of
a series of non increasing flooding pairs $(i_{1},e_{1}),(i_{2},e_{2}%
),\ldots,(i_{k},e_{k}).$ Node and edge of each flooding pair have the same
weights.\ As $\tau$ has a minimal lexicographic weight, each of its flooding
pairs, except the first one constitutes one of the lowest adjacent flooding
pair of the previous flooding pair.\ For this reason the erosion $\varepsilon
G$ assigns to the edge $e_{h}$ the weight of the adjacent edge $e_{h+1}$ and
to the node $i_{h}$ the weight of the adjacent node $i_{h+1}$.\ In other
workds successive erosions $(\varepsilon_{e}e,\varepsilon_{n}n)$ let glide the
value of each pair upwards in the track. If this track is of length $k,$ after
$k-1$ erosion, the weight of the ultimate pair will have reached the first one.

If such a track is of length $l<k,$ it ends with a pair in a regional minimum,
with the value $0.\ $Successive erosions $(\varepsilon_{e}e,\varepsilon_{n}n)$
also let glide the value of each pair upwards in the track and the last pair,
with value $0$ also moves upwards and reaches the pair $(i,ij)$ after $l-1$
erosions.\ During the next erosions, the value of the pair $(i_{1},e_{1})$
remains stable and equal to $0.$%

%TCIMACRO{\TeXButton{EndFrame}{\end{frame}}}%
%BeginExpansion
\end{frame}%
%EndExpansion
%

%TCIMACRO{\TeXButton{BeginFrame}{\begin{frame}}}%
%BeginExpansion
\begin{frame}%
%EndExpansion
%

%TCIMACRO{\QTR{frametitle}{Eroding k-steep graphs}}%
%BeginExpansion
\frametitle{Eroding k-steep graphs}%
%EndExpansion

Consider a k-steep flooding graph $G=\downarrow^{k}G.\ $During the $k-1$
successive erosions, the values of the flooding pairs glide upwards along the
k-steep lexicographic path.\ For the erosion $l<k,$ the $(l+1)th$ pair
$(s,st)$ has reached the pair $(i,ij)$. Hence the edge $ij$ remains one of the
lowest adjacent edge of the node $i$ and both share identical weights.\ 

But $(s,st)$ belongs to the flooding graph and verifies $n_{s}=\varepsilon
_{ne}e(s)$ and $n_{s}=\varepsilon_{ne}e(s)$ and so does $(i,ij).\ $

This shows that the graph $\varepsilon^{(l)}G$ still is a flooding graph.\ 

\begin{theorem}
For an ordinary flooding graph $G,$ $\downarrow^{k}G$ is a k-steep flooding
graph and for $l<k,$ $\varepsilon^{(l)}\downarrow^{k}G$ still is a flooding graph.
\end{theorem}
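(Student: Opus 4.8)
The plan is to assemble into a clean two-step argument the material already sketched in the three preceding "Eroding k-steep graphs" slides. \textbf{Step 1: $\downarrow^{k}G$ is a k-steep flooding graph.} That it is k-steep is read off the definition of the pruning operator: $\downarrow^{k}$ retains, at every node outside the regional minima, exactly those adjacent edges that head a flooding track of minimal lexicographic weight and length $k$ (or shorter, ending in a minimum), so after pruning each such node is the origin of one or more k-steep flooding tracks, all of the same lexicographic weight. To see $\downarrow^{k}G$ is still a flooding graph I invoke the partial-graph principle established earlier: suppressing edges of a flooding graph while leaving at each node at least one of its lowest adjacent edges yields again a flooding graph with unchanged weights. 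It therefore suffices to check that $\downarrow^{k}$ never removes a node's last low edge, and for $k\geq 2$ this is immediate, since the head couple $(i,ij)$ of any minimal k-steep track from $i$ is a flooding pair, so $e_{ij}=n_{i}$ and $ij$ is one of the lowest adjacent edges of $i$, linking it to a lower-or-equal node; such an edge survives the pruning. In particular no isolated regional minimum is created, as each node of a minimum keeps an edge to an equal-weight node of the same minimum.

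\textbf{Step 2: $\varepsilon^{(l)}\downarrow^{k}G$ is a flooding graph for $l<k$.} First I would normalise, giving every regional minimum the weight $0$ and every other node and edge a strictly positive weight; by the remarks preceding the theorem this changes neither the catchment basins nor, because there are no isolated minima, the flooding-graph property. The core is the gliding lemma already described: along a minimal k-steep flooding track $\tau=(i_{1},e_{1}),(i_{2},e_{2}),\dots$, lexicographic minimality forces each couple $(i_{h+1},e_{h+1})$ to be one of the lowest adjacent flooding couples of $(i_{h},e_{h})$, so one application of $\varepsilon=(\varepsilon_{en}\varepsilon_{ne},\varepsilon_{ne}\varepsilon_{en})$ replaces the weight at position $h$ by the weight at position $h+1$; iterating, after $l$ erosions the couple originally at position $l+1$ has reached the head of the track. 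A track of length $<k$ ends in a minimum, so the value $0$ glides up it and, once at the head, stays $0$ under all further erosions. Hence for $l<k$ the head edge $ij$ of each non-minimal node's k-steep track carries, in $\varepsilon^{(l)}\downarrow^{k}G$, the weight $e_{l+1}$, and so does the node $i$.

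It then remains to conclude and to identify the obstacle. The couple $(i_{l+1},e_{l+1})$ of the original flooding graph satisfied $n_{i_{l+1}}=[\varepsilon_{ne}e]_{i_{l+1}}=e_{l+1}$ and $e_{l+1}=[\delta_{en}n]_{i_{l+1}i_{l+2}}$ (or sat in a minimum with value $0$), so transplanting these identities to the head of the track shows that in $\varepsilon^{(l)}\downarrow^{k}G$ the edge $ij$ is still one of the lowest adjacent edges of $i$, that $[\varepsilon_{ne}e]_{i}=n_{i}$, and that $[\delta_{en}n]_{ij}=e_{ij}$; since every node and every edge of $\downarrow^{k}G$ lies on some k-steep track — every non-minimal node being the origin of one, and the minima keeping weight $0$ — the relations $\delta_{en}n=e$ and $\varepsilon_{ne}e=n$ hold everywhere, so $\varepsilon^{(l)}\downarrow^{k}G$ is a flooding graph. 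The main obstacle is making the gliding lemma airtight: one must verify that $[\varepsilon_{ne}e]_{i_{h}}$ equals \emph{exactly} $e_{h+1}$, not something smaller coming from a competing surviving edge at $i_{h}$, and symmetrically on the node side. This is precisely where the k-steepness won in Step 1 is used: in $\downarrow^{k}G$ the only edges left at $i_{h}$ head minimal k-steep tracks, which all carry the common weight sequence $e_{h},e_{h+1},\dots$, so the erosion reads off exactly $e_{h+1}$. One must also check that the value assigned is well defined when several tracks pass through the same couple — they agree, having equal lexicographic weight — and handle tracks that reach a minimum before step $l+1$; none of this is deep, but it is where the bookkeeping must be done carefully.
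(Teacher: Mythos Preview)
Your proposal is correct and follows essentially the same route as the paper: the argument in the three ``Eroding k-steep graphs'' slides \emph{is} the paper's proof, and you have reorganised it into the same two pieces (k-steepness plus preservation of the flooding-graph structure, then the gliding of weights along minimal tracks under iterated erosion). Your version is in fact tidier than the paper's, since you make explicit the appeal to the partial-graph principle in Step~1 and you flag the one genuine verification needed in Step~2 --- that competing surviving edges at $i_{h}$ cannot produce a value strictly below $e_{h+1}$ --- which the paper leaves implicit; just be sure, when you write it out, to cover the upstream edges at $i_{h}$ as well (they head minimal tracks from the node above, and one checks their eroded weight stays $\geq$ that of $i_{h}$).
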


%

%TCIMACRO{\TeXButton{EndFrame}{\end{frame}}}%
%BeginExpansion
\end{frame}%
%EndExpansion
%

%TCIMACRO{\TeXButton{BeginFrame}{\begin{frame}}}%
%BeginExpansion
\begin{frame}%
%EndExpansion
%

%TCIMACRO{\QTR{frametitle}{Eroding flooding graphs}}%
%BeginExpansion
\frametitle{Eroding flooding graphs}%
%EndExpansion

Consider a flooding graph $G=(e,n).$ The erosion $\varepsilon_{n}%
=\varepsilon_{ne}\varepsilon_{en}$ enlarges the regional minima.\ Hence if $G$
is invariant by $\varphi_{n}$ it is still the case for the graph
$(e,\varepsilon_{n}n)$

On the contrary, after the erosion $\varepsilon_{e}=\varepsilon_{en}%
\varepsilon_{ne},$ the graph $(\varepsilon_{e}e,n)$ is not necessarily
invariant for $\gamma_{e}.$ One has to prune the edges which are not the
lowest edges of one of their extremities: $\downarrow\varepsilon_{e}e.$

Repeating the operator $\zeta G=$\ $(\downarrow\varepsilon_{e}e,\varepsilon
_{n}n)$ produces a decreasing series of partial graphs $\zeta^{(n)}%
G=\zeta\zeta^{(n-1)}G$ we will now characterize.%

%TCIMACRO{\TeXButton{EndFrame}{\end{frame}}}%
%BeginExpansion
\end{frame}%
%EndExpansion
%

%TCIMACRO{\TeXButton{BeginFrame}{\begin{frame}}}%
%BeginExpansion
\begin{frame}%
%EndExpansion
%

%TCIMACRO{\QTR{frametitle}{Two equivalent modes of pruning.}}%
%BeginExpansion
\frametitle{Two equivalent modes of pruning.}%
%EndExpansion

\begin{theorem}
For $m\geq1,$ and defining $\zeta^{(0)}=identity,$ the operators $\zeta$
applied to $\zeta^{(m-1)}G$ and $\downarrow^{m+1}$ applied to $\downarrow
^{m}G$ keep and discard the same edges.\ The operators $\zeta^{(k)}.$ and
$\downarrow^{k+1}$ produce partial spanning graphs with the same edges and nodes.
\end{theorem}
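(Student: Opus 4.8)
The plan is to prove the statement by induction on $m$, strengthening it so that the induction also carries the weights: I would show that for every $m\ge 0$ the weighted graph $\zeta^{(m)}G$ coincides with $\varepsilon^{(m)}\!\left(\downarrow^{m+1}\!G\right)$, the $m$-fold erosion of the $(m{+}1)$-steep graph extracted from the flooding graph $G$. This object is legitimate because, by the preceding theorem applied with $k=m+1$ and $l=m<k$, $\varepsilon^{(m)}\!\left(\downarrow^{m+1}\!G\right)$ is again a flooding graph, so the recursion never leaves the class on which $\zeta$ and the $\downarrow^{k}$ act. The base case $m=0$ is immediate ($\zeta^{(0)}$, $\varepsilon^{(0)}$ are identities and $\downarrow^{1}G=G$ since $\downarrow^{1}$ is trivial on a $\gamma_{e}$-invariant graph), and $m=1$ may also be verified directly: $\downarrow^{2}=\Downarrow$ keeps exactly the edges joining each node to its lowest neighbours, and one erosion turns $e$ into $\varepsilon_{en}n$, for which the lowest edge incident to a node carries the weight of that node's lowest neighbour.

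For the step, assume $\zeta^{(m-1)}G=\varepsilon^{(m-1)}\!\left(\downarrow^{m}\!G\right)$ and apply one more $\zeta$. Since the plain erosion removes no edge, $\zeta\big(\varepsilon^{(m-1)}\downarrow^{m}G\big)$ is obtained by first placing the weights of $\varepsilon^{(m)}\!\left(\downarrow^{m}\!G\right)$ on the partial graph $\downarrow^{m}G$, then applying $\downarrow$. Here I would invoke the gliding property established just above the previous theorem: on the $m$-steep graph $\downarrow^{m}G$ successive erosions push the weight of each flooding pair one notch up its steepest track, so after $m$ erosions an edge $(i,j)$ starting a minimal length-$m$ flooding track from $i$ carries the $(m{+}1)$-st term of the lexicographic weight of the one-step continuation of that track (tracks shorter than $m$ end in a minimum and have already stabilised at $0$). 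Taking the infimum over the edges incident to $i$, the pruning $\downarrow$ keeps precisely the edges that begin a length-$(m{+}1)$ flooding track of minimal lexicographic weight; by definition, and using $\downarrow^{m+1}=\downarrow^{m+1}\downarrow^{m}$, these are exactly the edges of $\downarrow^{m+1}G$. To match the surviving weights with those of $\varepsilon^{(m)}\!\left(\downarrow^{m+1}\!G\right)$ I would use the iterated form of the ``inert edges'' principle behind $\varepsilon_{ne}=\varepsilon_{ne}\gamma_{e}$: at every node and every erosion step the minimising neighbour already lies in $\downarrow^{m+1}G$, so the extra edges of $\downarrow^{m}G$ never lower an eroded value, and $\varepsilon^{(m)}\downarrow^{m}G$ and $\varepsilon^{(m)}\downarrow^{m+1}G$ agree on the common nodes and edges.

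Combining the two identities gives $\zeta^{(m)}G=\varepsilon^{(m)}\!\left(\downarrow^{m+1}\!G\right)$, completing the induction. Reading off underlying graphs yields the second assertion (same nodes and edges as $\downarrow^{m+1}G$), and comparing the passage $\zeta^{(m-1)}G\mapsto\zeta^{(m)}G$ with the passage $\downarrow^{m}G\mapsto\downarrow^{m+1}G$ shows that at each stage the same edges are kept and the same discarded, which is the first assertion.

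The step I expect to be the real obstacle is the weight bookkeeping: making the gliding argument rigorous for the composite erosion $\varepsilon_{e}=\varepsilon_{en}\varepsilon_{ne}$, which is an infimum over entire neighbourhoods rather than along one track, and proving that no ``lateral'' adjacency of a node can drive the eroded value down before the genuine steepest continuation has reached it. This is exactly where one must exploit both that $\downarrow^{m}G$ is $m$-steep, so that information propagates only along minimal-lexicographic-weight tracks, and that the non-$(m{+}1)$-steep edges are inert for the erosion; the companion check for short tracks ending in a regional minimum is routine by comparison.
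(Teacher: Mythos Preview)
Your induction is organised around a stronger invariant than the paper's: you carry the full weighted graph and aim for the identity $\zeta^{(m)}G=\varepsilon^{(m)}\bigl(\downarrow^{m+1}G\bigr)$. The paper, by contrast, inducts only on the \emph{edge set}, assuming that $\zeta^{(m-1)}G$ and $\downarrow^{m}G$ share the same edges, and then argues directly which edges the next $\zeta$ discards. It never commits to a closed formula for the weights; instead it uses the gliding description locally: on $\zeta^{(m-1)}G$ each flooding pair $(i,ij)$ heads an $m$-steep track, the next pair $(j,jk)$ heads an $(m-1)$-steep track, and one more erosion deposits on $ij$ the weight of the bottom pair of that continuation. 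The edge $ij$ is then kept by $\downarrow$ exactly when no competing pair $(i,il)$ receives a smaller deposited value, which is precisely the criterion for $ij$ to survive in $\downarrow^{m+1}G$.

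Both routes are valid, but they differ in where the work sits. The paper's lighter hypothesis means it never needs your claim (b), the comparison between $\varepsilon^{(m)}$ computed on $\downarrow^{m}G$ versus on $\downarrow^{m+1}G$. That ``inert edges'' step is the genuine extra burden of your approach: you must show that at each of the $m$ erosion stages, over each of the intermediate graphs $\downarrow^{k}G$, the minimising neighbour already lies in the smallest graph $\downarrow^{m+1}G$. This is plausible (and you correctly isolate it as the obstacle), but it is not a one-line consequence of $\varepsilon_{ne}=\varepsilon_{ne}\gamma_{e}$; it needs an inductive argument of its own, tracking why a non-$(m{+}1)$-steep edge can never be the minimiser at any intermediate stage. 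If you can make that precise, your proof yields the bonus formula for the weights on $\zeta^{(m)}G$; if not, you can fall back to the paper's edge-only induction, which reaches the stated theorem with the gliding argument alone and sidesteps the bookkeeping entirely.
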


\textbf{Proof: }We prove it by induction.

a) $m=1:$ $\zeta$ and $\downarrow^{2}$ select the edges linking a node with
its lowest node (this covers the case where the edge belongs to a regional minimum)%

%TCIMACRO{\TeXButton{EndFrame}{\end{frame}}}%
%BeginExpansion
\end{frame}%
%EndExpansion
%

%TCIMACRO{\TeXButton{BeginFrame}{\begin{frame}}}%
%BeginExpansion
\begin{frame}%
%EndExpansion
%

%TCIMACRO{\QTR{frametitle}{Two equivalent modes of pruning.}}%
%BeginExpansion
\frametitle{Two equivalent modes of pruning.}%
%EndExpansion

b) We suppose that $\zeta^{(m-1)}G$ and $\downarrow^{m}G$ have the same edges
and show that it is still the case for $m=m+1.$

Consider a couple $(i,ij)$ of $\zeta^{(m-1)}G$.\ It belongs to a
$m-1$-steepest track.\ The second pair $(j,jk)$ of this track also belongs to
a $\left(  m-1\right)  -$steepest track and holds the weight of the lowest
pair of the track.\ The operator $\zeta$ applied to $\zeta^{(m-1)}G$ assigns
this weight to the edge $ij$ but not necessarily to the node $i\ $if there
exists another pair $(i,il)$ with a lower weight after this erosion.\ In this
case the edge $ij$ is discarded by the operator $\zeta.\ \ $But it is also
discarded by the operator $\downarrow^{m+1}G,$ as no $\left(  m+1\right)
$-steepest path passes through $ij.$%

%TCIMACRO{\TeXButton{EndFrame}{\end{frame}}}%
%BeginExpansion
\end{frame}%
%EndExpansion
%

%TCIMACRO{\TeXButton{BeginFrame}{\begin{frame}}}%
%BeginExpansion
\begin{frame}%
%EndExpansion
%

%TCIMACRO{\QTR{frametitle}{Constructing k-steepest graphs}}%
%BeginExpansion
\frametitle{Constructing k-steepest graphs}%
%EndExpansion

The operator $\downarrow^{m}G$ is of theoretical interest but of poor
practical value, as it is based on a neighborhood of size $m.\ $On the
contrary the operator $\zeta$ is purely local and uses a neighborhood of size
1 : an erosion from node to node, an erosion from edge to edge and the
suppression of any edge which is not the smallest edge of one of its extremities.

Repeating $m$ times the operator $\zeta$ produces a graph which has the same
edges as the operator $\downarrow^{m+1}G.\ $It is then sufficient to restore
the original weights of edges and nodes of the graph $G\ $onto the graph
$\zeta^{(m)}G$ to obtain the same result as $\downarrow^{m+1}G.$%

%TCIMACRO{\TeXButton{EndFrame}{\end{frame}}}%
%BeginExpansion
\end{frame}%
%EndExpansion
%

%TCIMACRO{\TeXButton{BeginFrame}{\begin{frame}}}%
%BeginExpansion
\begin{frame}%
%EndExpansion
%

%TCIMACRO{\QTR{frametitle}{Characterizing k-steepest graphs}}%
%BeginExpansion
\frametitle{Characterizing k-steepest graphs}%
%EndExpansion

\begin{theorem}
A graph $G$ is a k-steepest graph if and only if for each $l<k$ ,
$\varepsilon^{(l)}G$ is a flooding graph
\end{theorem}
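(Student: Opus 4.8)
The plan is to prove both directions by leaning on the already-established equivalence between the global pruning $\downarrow^{k}$ and the iterate $\zeta^{(k-1)}$ of the local operator $\zeta$, together with the corollary that the edge distribution of a flooding graph lies in $\operatorname*{Inv}(\gamma_{e})$.

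For the implication "$G$ k-steepest $\Rightarrow$ $\varepsilon^{(l)}G$ is a flooding graph for every $l<k$" I would simply appeal to the earlier theorem: for any ordinary flooding graph $H$, $\downarrow^{k}H$ is a k-steep flooding graph and $\varepsilon^{(l)}\downarrow^{k}H$ is still a flooding graph whenever $l<k$. Since a k-steepest graph $G$ is a flooding graph with $\downarrow^{k}G=G$ (using $\downarrow^{k}\downarrow^{k}=\downarrow^{k\vee k}=\downarrow^{k}$), taking $H=G$ yields the claim for every $l$ with $1\le l<k$; and the case $l=0$ is just the remark that a k-steepest graph is, by definition, a flooding graph (recall $\varepsilon^{(0)}$ is the identity).

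For the converse, suppose $\varepsilon^{(l)}G$ is a flooding graph for every $l<k$; in particular $G=\varepsilon^{(0)}G$ is a flooding graph. The crucial step I would isolate as a one-step lemma: \emph{if $H$ is a flooding graph and $\varepsilon H$ is again a flooding graph, then $\zeta H=\varepsilon H$ as weighted graphs.} The point is that by the corollary the edge distribution of $\varepsilon H$ lies in $\operatorname*{Inv}(\gamma_{e})$, so each edge is the lowest adjacent edge of one of its extremities, so the pruning $\downarrow$ inside $\zeta H=(\downarrow\varepsilon_{e}e,\varepsilon_{n}n)$ discards no edge and $\zeta H$ carries exactly the edges and weights of $\varepsilon H$. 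Iterating this lemma along $G,\varepsilon G,\varepsilon^{(2)}G,\dots$ — legitimate precisely because $\varepsilon^{(l)}G$ is a flooding graph for each $l\le k-1$ — gives $\zeta^{(l)}G=\varepsilon^{(l)}G$ for $l=0,1,\dots,k-1$; in particular $\zeta^{(k-1)}G$ still carries every edge of $G$ and spans all of its nodes.

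Finally I would invoke the theorem on the two equivalent modes of pruning: $\zeta^{(k-1)}$ and $\downarrow^{k}$ produce partial spanning graphs with the same edges and nodes, and $\downarrow^{k}G$ is recovered from $\zeta^{(k-1)}G$ by keeping exactly that edge and node set while restoring the original weights of $G$. Since $\zeta^{(k-1)}G=\varepsilon^{(k-1)}G$ retains all edges and all nodes of $G$, restoring the original weights simply returns $G$ itself, so $\downarrow^{k}G=G$; together with $G$ being a flooding graph, this is exactly the statement that $G$ is k-steepest. The main obstacle is the one-step lemma and its iteration: I must be careful that at each stage $\zeta$ is applied to a genuine flooding graph (so that the characterisation of $\operatorname*{Inv}(\gamma_{e})$ applies and $\downarrow$ is inert), and that "restoring the original weights" on the pruned graph truly reproduces $G$ and not merely a graph on the same underlying edge set — both of which require using the hypothesis uniformly along the whole descent $G\to\varepsilon G\to\cdots\to\varepsilon^{(k-1)}G$ rather than at a single level.
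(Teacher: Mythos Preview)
Your proposal is correct and follows essentially the same route as the paper: the forward direction by invoking the earlier theorem that $\varepsilon^{(l)}\downarrow^{k}G$ remains a flooding graph, and the converse by observing that when $\varepsilon^{(l)}G$ is a flooding graph its edge distribution lies in $\operatorname*{Inv}(\gamma_{e})$, so $\downarrow$ is inert, whence $\zeta^{(l)}G=\varepsilon^{(l)}G$ by induction and $\zeta^{(k-1)}G$ retains every edge of $G$, which together with the equivalence $\zeta^{(k-1)}\leftrightarrow\downarrow^{k}$ forces $\downarrow^{k}G=G$. Your explicit isolation of the one-step lemma and your care about the $l=0$ case and the weight-restoration step are a bit more detailed than the paper's presentation, but the structure of the argument is the same.
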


\textbf{Proof: }If a graph $G$ is a k-steepest graph, then $G=\downarrow^{k}G$
and we have already established that for $l<k,$ $\varepsilon^{(l)}%
\downarrow^{k}G$ is a flooding graph.

Inversely suppose that for each $l<k$ , $\varepsilon^{(l)}G$ is a flooding
graph, then $\varepsilon^{(l)}G$ has the same edges as $G.\ $As a matter of
fact $\varepsilon^{(l)}G$ is invariant by $\gamma_{e}$ and each edge is the
lowest edge of a node: $\downarrow\varepsilon^{(l)}G=\varepsilon^{(l)}G.$

But $\varepsilon^{(l)}G=\varepsilon\varepsilon^{(l-1)}G$ and $\downarrow
\varepsilon\varepsilon^{(l-1)}G=(\downarrow\varepsilon_{e}e,\varepsilon
_{n}n)\varepsilon^{(l-1)}G=\zeta\varepsilon^{(l-1)}G$ (considering
$G=\varepsilon^{(0)}G$). It follows that for $l<k,$ we have $\varepsilon
^{(l)}G=\zeta^{(l)}G.$

The operator $\zeta$ applied to $k$ times to $G$ never suppresses an edge from
$G.$ But we know that $\zeta^{(k-1)}G$ and $\downarrow^{k}G$ have the same
nodes and edges, showing that indeed $G$ is a k-steepest graph.\ %

%TCIMACRO{\TeXButton{EndFrame}{\end{frame}}}%
%BeginExpansion
\end{frame}%
%EndExpansion
%

%TCIMACRO{\TeXButton{BeginFrame}{\begin{frame}}}%
%BeginExpansion
\begin{frame}%
%EndExpansion

\begin{center}
{\Large \alert{Deriving an algorithm for delineating the catchment basins}}
\end{center}

%

%TCIMACRO{\TeXButton{EndFrame}{\end{frame}}}%
%BeginExpansion
\end{frame}%
%EndExpansion
%

%TCIMACRO{\TeXButton{BeginFrame}{\begin{frame}}}%
%BeginExpansion
\begin{frame}%
%EndExpansion
%

%TCIMACRO{\QTR{frametitle}{The catchment basins of k-steepest paths}}%
%BeginExpansion
\frametitle{The catchment basins of k-steepest paths}%
%EndExpansion

It is now possible to imagine an algorithm associated to the operator
$\zeta.\ $If $G$ is a flooding graph, we assign a a distinct label to each
regional minimum and a weight $0$.\ The operator $\zeta$ propagates the
weights upwards along the steepest tracks as it is repeated. In parallel, we
also propagate the labels: every time a pair $(i,ij)$ takes the weight $0$
from a pair containing a labeled node $j,$ the node $i$ takes the label of
$j.$ Like that, as the labeled zones with weight $0$ expand, their labels also expand.

We now have to consider two cases.\ In the first case, we construct restricted
catchment basins separated by watershed zones.\ In the second, we create a
partition by assigning each node to one and only one basin, at the price of
arbitrary choices.%

%TCIMACRO{\TeXButton{EndFrame}{\end{frame}}}%
%BeginExpansion
\end{frame}%
%EndExpansion
%

%TCIMACRO{\TeXButton{BeginFrame}{\begin{frame}}}%
%BeginExpansion
\begin{frame}%
%EndExpansion
%

%TCIMACRO{\QTR{frametitle}{Catchment basins and watershed zones}}%
%BeginExpansion
\frametitle{Catchment basins and watershed zones}%
%EndExpansion

During the successive operations $\zeta,$ every time a pair $(i,ij)$ takes the
weight $0$ from a pair containing the node $j,$ the node $i$ takes the label
of $j$ provided the pair $(i,ij)$ is unique. If there are two equivalent pairs
$(i,ij)$ and $(i,il)$, it means that there exist two minimal lexicographic
tracks starting at $i$ towards one or two distinct minima. If $j$ and $k$ hold
the same label, this label is assigned to $i.\ $If on the contrary they are
distinct, we have 2 possibilities:

\begin{itemize}
\item we assign to $i$ a label $Z$, indicating that it belongs to a watershed
zone.\ The upstream of $i$ also will get this same label $Z.\ $The regions
with label $Z$ belong to the watershed zone and the other labeled regions are
restricted catchment basins.

\item we assign to $i$ one of the labels (either randomly, or by applying an
additional rule for braking the ties), producing a partition in catchment
basins with an empty watershed zone.\ The result is not unique and depends on
the succession of choices which have been made.\ 
\end{itemize}

%

%TCIMACRO{\TeXButton{EndFrame}{\end{frame}}}%
%BeginExpansion
\end{frame}%
%EndExpansion
%

%TCIMACRO{\TeXButton{BeginFrame}{\begin{frame}}}%
%BeginExpansion
\begin{frame}%
%EndExpansion
%

%TCIMACRO{\QTR{frametitle}{Creating a partition}}%
%BeginExpansion
\frametitle{Creating a partition}%
%EndExpansion
%

%TCIMACRO{\FRAME{ftbpF}{4.4865in}{1.2252in}{0pt}{}{}{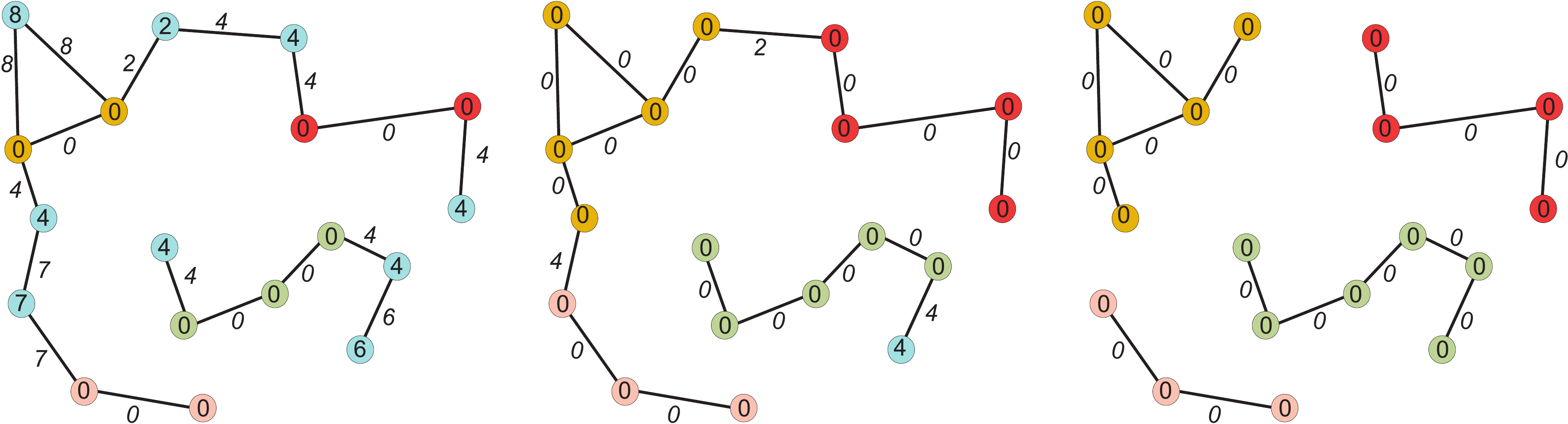}%
%{\special{ language "Scientific Word";  type "GRAPHIC";
%maintain-aspect-ratio TRUE;  display "USEDEF";  valid_file "F";
%width 4.4865in;  height 1.2252in;  depth 0pt;  original-width 14.3871in;
%original-height 3.8668in;  cropleft "0";  croptop "1";  cropright "1";
%cropbottom "0";  filename 'wfal61-eps-converted-to.pdf';file-properties "XNPEU";}} }%
%BeginExpansion
\begin{figure}
[ptb]
\begin{center}
\includegraphics[
height=1.2252in,
width=4.4865in
]%
{wfal61-eps-converted-to.pdf}%
\end{center}
\end{figure}
%EndExpansion

On the left a flooding graph where the minima have weights equal to $0$ and
their labels are indicated by distinct colors. The next two figures show the
propagation of the weights and of the labels as the operator $\zeta$ has been
applied twice in a sequence.%

%TCIMACRO{\TeXButton{EndFrame}{\end{frame}}}%
%BeginExpansion
\end{frame}%
%EndExpansion
%

%TCIMACRO{\TeXButton{BeginFrame}{\begin{frame}}}%
%BeginExpansion
\begin{frame}%
%EndExpansion
%

%TCIMACRO{\QTR{frametitle}{Partial conclusion}}%
%BeginExpansion
\frametitle{Partial conclusion}%
%EndExpansion

Starting with a node or edge weighted graph we have shown how to extract from
it a flooding graph where nodes and edges are weighted.

The operators $\downarrow^{m}G$ extract from the graph $G$ partial graphs
which are steeper and steeper flooding graphs. The series of partial graphs
$\downarrow^{m}G$ is decreasing with $m.$

The operator $\zeta$ permits an iterative construction of $\downarrow^{m}G$,
using only small local neighborhood transformations.\ %

%TCIMACRO{\TeXButton{EndFrame}{\end{frame}}}%
%BeginExpansion
\end{frame}%
%EndExpansion
%

%TCIMACRO{\TeXButton{BeginFrame}{\begin{frame}}}%
%BeginExpansion
\begin{frame}%
%EndExpansion

\begin{center}
{\Large \alert{The scissor operator and the watershed partitions}}
\end{center}

In the previous section we introduced pruning operators which extract from a
flooding graph k-steep flooding graphs. These operators do not make any choice
among the k-steep flooding graphs, they take them all. \bigskip

We now introduce operators aiming at creating partitions : they extract
minimum spanning forests from the flooding graph by pruning. Contrarily to the
preceding operators, they do make arbitrary choices among equivalent edges to
be suppressed.%

%TCIMACRO{\TeXButton{EndFrame}{\end{frame}}}%
%BeginExpansion
\end{frame}%
%EndExpansion
%

%TCIMACRO{\TeXButton{BeginFrame}{\begin{frame}}}%
%BeginExpansion
\begin{frame}%
%EndExpansion
%

%TCIMACRO{\QTR{frametitle}{The flooding pairs}}%
%BeginExpansion
\frametitle{The flooding pairs}%
%EndExpansion

The previous section has shown how to extract from a node or edge weighted
graph partial graphs which are flooding graphs. In a flooding graph, each node
has at least one adjacent edge with the same weight.\ We consider here the
nodes and edges outside the regional minima.\ There exists at least one (in
general several) one to one correspondance between each node outside a
regional minimum and one of its adjacent edges with the same weight.\ For each
such one to one correspondance, we call flooding pair, the couple of node and
edge which have been associated. They hold the same weight.\ %

%TCIMACRO{\TeXButton{EndFrame}{\end{frame}}}%
%BeginExpansion
\end{frame}%
%EndExpansion
%

%TCIMACRO{\TeXButton{BeginFrame}{\begin{frame}}}%
%BeginExpansion
\begin{frame}%
%EndExpansion
%

%TCIMACRO{\QTR{frametitle}{The flooding pairs}}%
%BeginExpansion
\frametitle{The flooding pairs}%
%EndExpansion

Let us show how to construct such a one to one correspondance.\ If an edge is
the lowest adjacent edge of only one of its extremities, then they form a
flooding pair ; this is in particular the case if the other extremity of the
edge has a lower weight.

If on the contrary, it has the same weight as both its extremities, it belongs
to a plateau.\ As this plateau is not a regional minimum, there exists a pair
of neighboring nodes, a node $s$ inside the plateau and a lower node $t$,
outside.\ This edge $(s,t)$ forms a flooding pair with the node $s.\ $As the
plateau is connected, there exists a tree spanning its nodes, having $s$ as
root.\ There exists a unique path between $s$ and each node $i$ of the
plateau.\ The last edge on this path before reaching $i$ and $i$ itself form a
flooding pair.%

%TCIMACRO{\TeXButton{EndFrame}{\end{frame}}}%
%BeginExpansion
\end{frame}%
%EndExpansion
%

%TCIMACRO{\TeXButton{BeginFrame}{\begin{frame}}}%
%BeginExpansion
\begin{frame}%
%EndExpansion
%

%TCIMACRO{\QTR{frametitle}{The scissor operator creates a partition of the
%nodes}}%
%BeginExpansion
\frametitle{The scissor operator creates a partition of the nodes}%
%EndExpansion

We have shown that there exists at least one (in general several) one to one
correspondance between each node outside a regional minimum and one of its
adjacent edges with the same weight.\ For each such one to one correspondance,
we call flooding pair, the couple of node and edge which have been associated.

To each such one to one correpondance we associate a a pruning operator $\chi
$, called scissor.$\ $This operator suppresses all edges outside a regional
minimum which do not form a flooding pair with one of their
extremities.\ \bigskip

After applying the operator $\chi\ $to a flooding graph, there exists one and
only one path from each node to a regional minimum: the catchment basins of
the minima partition the nodes, and the watershed zones are empty%

%TCIMACRO{\TeXButton{EndFrame}{\end{frame}}}%
%BeginExpansion
\end{frame}%
%EndExpansion
%

%TCIMACRO{\TeXButton{BeginFrame}{\begin{frame}}}%
%BeginExpansion
\begin{frame}%
%EndExpansion

\begin{center}
{\Large \alert{The drainage minimum spanning forest}}
\end{center}

%

%TCIMACRO{\TeXButton{EndFrame}{\end{frame}}}%
%BeginExpansion
\end{frame}%
%EndExpansion
%

%TCIMACRO{\TeXButton{BeginFrame}{\begin{frame}}}%
%BeginExpansion
\begin{frame}%
%EndExpansion
%

%TCIMACRO{\QTR{frametitle}{The drainage minimum spanning forest}}%
%BeginExpansion
\frametitle{The drainage minimum spanning forest}%
%EndExpansion

After contracting all edges in the regional minima of a flooding graph and
applying the scissor operator $\chi$ one gets a spanning forest, where each
tree is rooted in a minimum:

\begin{itemize}
\item the resulting graph is a partition where each connected component
contains a regional minimum node : there exists a path linking each node
outside the minima with a minimum.

\item each connected component is a tree as the number of nodes is equal to
the number of edges (the number of flooding pairs) plus one (the regional
minimum node).

\item it is a minimum spanning forest : each node is linked to its tree
through one of its lowest neighboring edge.\ The total weight of each tree is
thus equal to the total sum of the nodes outside the regional minima.\ This
total weight is independent of the particular choice made by $\chi.$
\end{itemize}%

%TCIMACRO{\TeXButton{EndFrame}{\end{frame}}}%
%BeginExpansion
\end{frame}%
%EndExpansion
%

%TCIMACRO{\TeXButton{BeginFrame}{\begin{frame}}}%
%BeginExpansion
\begin{frame}%
%EndExpansion
%

%TCIMACRO{\QTR{frametitle}{The drainage minimum spanning forest}}%
%BeginExpansion
\frametitle{The drainage minimum spanning forest}%
%EndExpansion

Expanding again the regional minima and replacing them by a MST of each
regional minimum creates again a minimum spanning forest, identical to the
preceding one outside the minima.

Its total weight is computed as follows:

\begin{itemize}
\item if $M_{i}$ is a regional minimum with $n$ nodes of weight $\lambda_{i},$
the weight of its MST is equal to $(n-1)\ast\lambda_{i}$

\item each other node contributes by its weight, which is also the weight of
its lowest adjacent edge.
\end{itemize}

Each particular forest is based on a particular scissor operator $\chi$ and of
a particular MST in each regional minimum.\ We call $\phi$ the operator which
extracts from a flooding graph such a MSF (minimum spanning forest).%

%TCIMACRO{\TeXButton{EndFrame}{\end{frame}}}%
%BeginExpansion
\end{frame}%
%EndExpansion
%

%TCIMACRO{\TeXButton{BeginFrame}{\begin{frame}}}%
%BeginExpansion
\begin{frame}%
%EndExpansion
%

%TCIMACRO{\QTR{frametitle}{The catchment basins}}%
%BeginExpansion
\frametitle{The catchment basins}%
%EndExpansion

$G$ = a flooding graph, $C=\phi G$ a minimum spanning forest:

\begin{itemize}
\item Each node belongs to a regional minimum or is the origin of a unique
never ascending path leading to a regional minimum

\item each tree of $C$ contains a unique regional minimum ; its nodes form the
catchment basin of this minimum.\ 
\end{itemize}

%

%TCIMACRO{\TeXButton{EndFrame}{\end{frame}}}%
%BeginExpansion
\end{frame}%
%EndExpansion
%

%TCIMACRO{\TeXButton{BeginFrame}{\begin{frame}}}%
%BeginExpansion
\begin{frame}%
%EndExpansion
%

%TCIMACRO{\QTR{frametitle}{Catchment basins of increasing steepness}}%
%BeginExpansion
\frametitle{Catchment basins of increasing steepness}%
%EndExpansion

$G$ = a flooding graph, $\downarrow^{m}G$ still is a flooding graph and
$C_{m}=\phi\downarrow^{m}G$ a minimum spanning forest of steepness $m.$

As for $m>l:$ $\downarrow^{m}G\subset\downarrow^{l}G$ , any minimum spanning
forest $C_{m}=\phi\downarrow^{m}G$ is also a minimum spanning forest of
$\downarrow^{l}G.\ $

For increasing values of $m,$ the number of forests of steepness $m$ decreases.%

%TCIMACRO{\TeXButton{EndFrame}{\end{frame}}}%
%BeginExpansion
\end{frame}%
%EndExpansion
%

%TCIMACRO{\TeXButton{BeginFrame}{\begin{frame}}}%
%BeginExpansion
\begin{frame}%
%EndExpansion

\begin{center}
{\Large \alert{Illustration}}
\end{center}

%

%TCIMACRO{\TeXButton{EndFrame}{\end{frame}}}%
%BeginExpansion
\end{frame}%
%EndExpansion
%

%TCIMACRO{\TeXButton{BeginFrame}{\begin{frame}}}%
%BeginExpansion
\begin{frame}%
%EndExpansion
%

%TCIMACRO{\QTR{frametitle}{A flooding graph associated to a distance
%function}}%
%BeginExpansion
\frametitle{A flooding graph associated to a distance function}%
%EndExpansion

We chose as topographic surface the distance function expressed on the nodes
of a hexagonal grid to two binary connected sets, encoded with the value
$0.\ $The edge weights are obtained by the dilation $\delta_{en}.\ $Like that
the lowest neighboring edges of a node connects it with its neighbors with
equal or lower weights. In the following figure, the edges with the same
weight have the same color (cyan = 4, magenta = 3, green = 2).%

%TCIMACRO{\FRAME{ftbpF}{1.2646in}{0.7933in}{0pt}{}{}{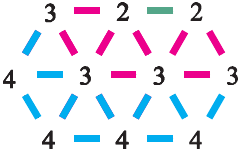}%
%{\special{ language "Scientific Word";  type "GRAPHIC";
%maintain-aspect-ratio TRUE;  display "USEDEF";  valid_file "F";
%width 1.2646in;  height 0.7933in;  depth 0pt;  original-width 0.9308in;
%original-height 0.5761in;  cropleft "0";  croptop "1";  cropright "1";
%cropbottom "0";  filename 'wfal52-eps-converted-to.pdf';file-properties "XNPEU";}} }%
%BeginExpansion
\begin{figure}
[ptb]
\begin{center}
\includegraphics[
height=0.7933in,
width=1.2646in
]%
{wfal52-eps-converted-to.pdf}%
\end{center}
\end{figure}
%EndExpansion
%

%TCIMACRO{\TeXButton{EndFrame}{\end{frame}}}%
%BeginExpansion
\end{frame}%
%EndExpansion
%

%TCIMACRO{\TeXButton{BeginFrame}{\begin{frame}}}%
%BeginExpansion
\begin{frame}%
%EndExpansion
%

%TCIMACRO{\QTR{frametitle}{The flooding graph of a distance function}}%
%BeginExpansion
\frametitle{The flooding graph of a distance function}%
%EndExpansion

As the minima have 2\ nodes each, the pixel graph is invariant by $\varphi
_{n}$.\ The dilation $\delta_{en}$ assigns weights to the edges and creates a
flooding graph.

\begin{figure}[ptb]
\begin{center}
\includegraphics[
natheight=2.938600in,
natwidth=3.496400in,
height=2.2206in,
width=2.6374in
]{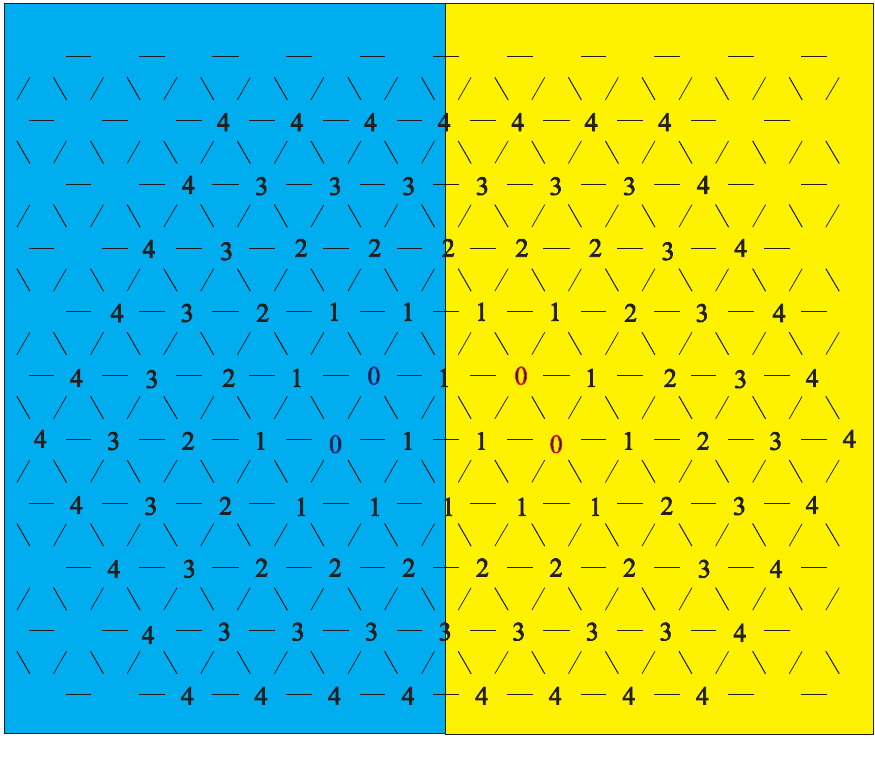}
\end{center}
\end{figure}
%EndExpansion
%

%TCIMACRO{\TeXButton{EndFrame}{\end{frame}}}%
%BeginExpansion
\end{frame}%
%EndExpansion
%

%TCIMACRO{\TeXButton{BeginFrame}{\begin{frame}}}%
%BeginExpansion
\begin{frame}%
%EndExpansion
%

%TCIMACRO{\QTR{frametitle}{The minimum spanning forest by keeping one lowest
%neighboring edge for each node}}%
%BeginExpansion
\frametitle{The minimum spanning forest by keeping one lowest neighboring edge
for each node}%
%EndExpansion

The scissor $\chi$ leaves one lowest neighboring edge for each node. There
exists a huge number of choices for $\chi.\ $The following illustration shows
a particular scissor $\chi$ producing an unexpected partition in two catchment
basins (the catchment basins should be the half plane separated by the
mediatrix of both binary sets, as illustrated in the previous slide).\ %

%TCIMACRO{\FRAME{ftbpF}{2.4965in}{1.8072in}{0pt}{}{}{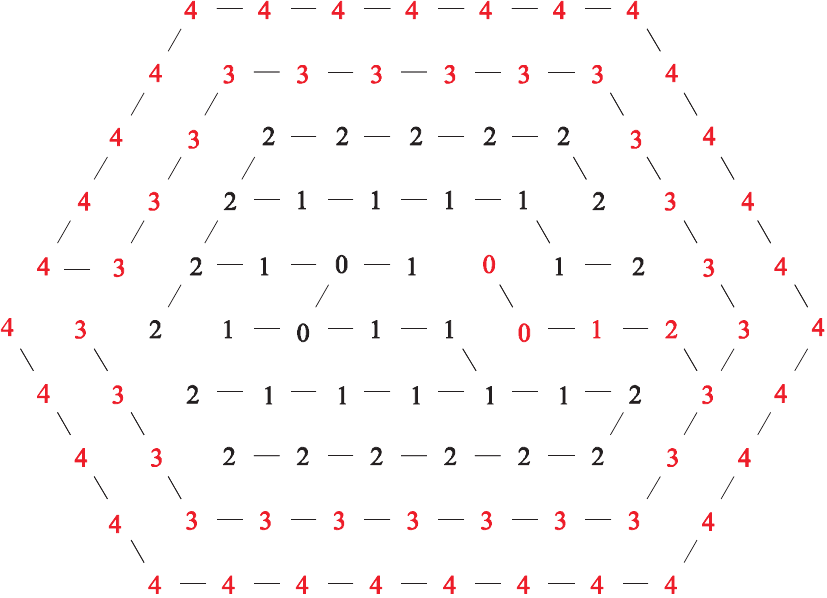}%
%{\special{ language "Scientific Word";  type "GRAPHIC";
%maintain-aspect-ratio TRUE;  display "USEDEF";  valid_file "F";
%width 2.4965in;  height 1.8072in;  depth 0pt;  original-width 3.2052in;
%original-height 2.2986in;  cropleft "0";  croptop "1";  cropright "1";
%cropbottom "0";  filename 'wfal59-eps-converted-to.pdf';file-properties "XNPEU";}} }%
%BeginExpansion
\begin{figure}
[ptb]
\begin{center}
\includegraphics[
height=1.8072in,
width=2.4965in
]%
{wfal59-eps-converted-to.pdf}%
\end{center}
\end{figure}
%EndExpansion
%

%TCIMACRO{\TeXButton{EndFrame}{\end{frame}}}%
%BeginExpansion
\end{frame}%
%EndExpansion
%

%TCIMACRO{\TeXButton{BeginFrame}{\begin{frame}}}%
%BeginExpansion
\begin{frame}%
%EndExpansion

Two geodesics for the flooding ultrametric distance, leading to an unexpected partition.\ 

\begin{figure}[ptb]
\begin{center}
\includegraphics[
natheight=2.608300in,
natwidth=3.373600in,
height=1.9749in,
width=2.5452in
]{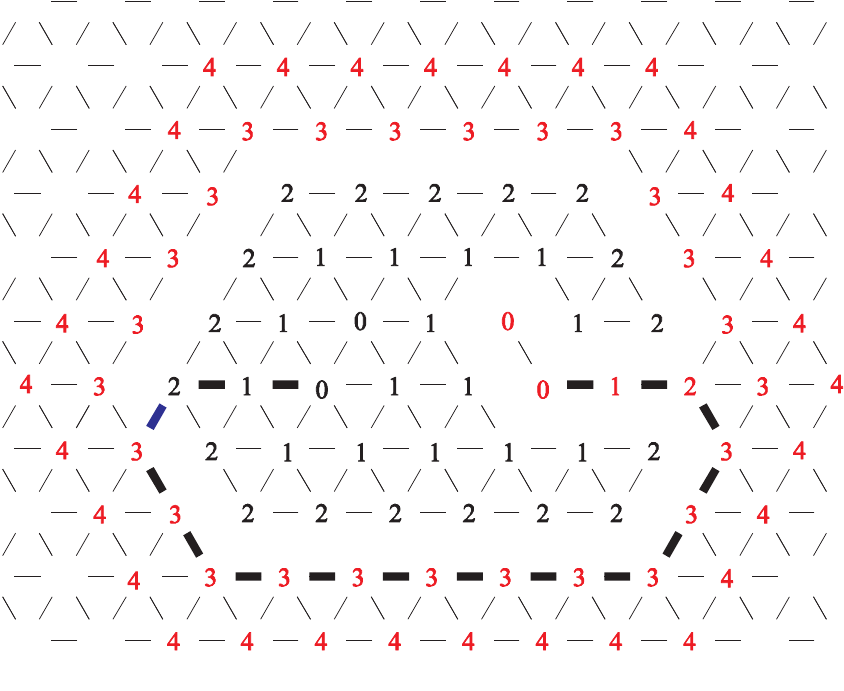}
\end{center}
\end{figure}
%EndExpansion In blue is represented an edge belonging to the cut, with valuation 3. On the black side is a descending path with edge valuations 21 towards a minimum ; on the red side another descending path with valuations 3333333333321 towards another minimum.
%

%TCIMACRO{\TeXButton{EndFrame}{\end{frame}}}%
%BeginExpansion
\end{frame}%
%EndExpansion
%

%TCIMACRO{\TeXButton{BeginFrame}{\begin{frame}}}%
%BeginExpansion
\begin{frame}%
%EndExpansion
%

%TCIMACRO{\QTR{frametitle}{Looking one node further}}%
%BeginExpansion
\frametitle{Looking one node further}%
%EndExpansion

The operator $\downarrow^{2}G$ leaves only the edges linking a node to its
lowest neighbors.\ The following figure shows the remaining edges.\ The green
zone represents a restricted catchment basin.\ The yellow zone is an extended
catchment basin containing the watershed zone.%

%TCIMACRO{\FRAME{ftbpF}{2.6324in}{2.2122in}{0pt}{}{}{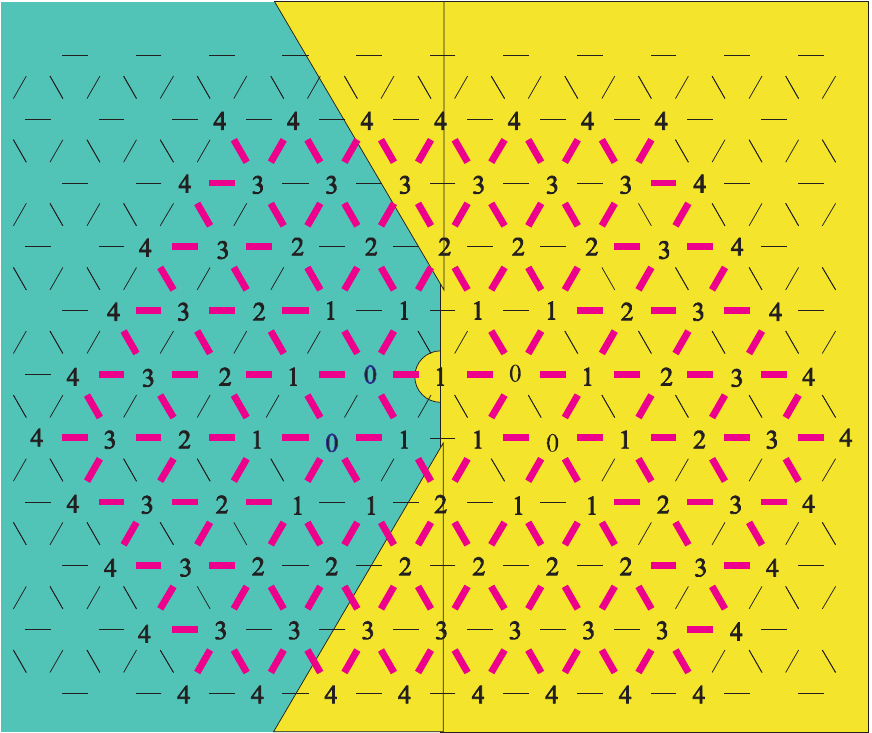}%
%{\special{ language "Scientific Word";  type "GRAPHIC";
%maintain-aspect-ratio TRUE;  display "USEDEF";  valid_file "F";
%width 2.6324in;  height 2.2122in;  depth 0pt;  original-width 3.3829in;
%original-height 2.8387in;  cropleft "0";  croptop "1";  cropright "1";
%cropbottom "0";  filename 'wfal51-eps-converted-to.pdf';file-properties "XNPEU";}} }%
%BeginExpansion
\begin{figure}
[ptb]
\begin{center}
\includegraphics[
height=2.2122in,
width=2.6324in
]%
{wfal51-eps-converted-to.pdf}%
\end{center}
\end{figure}
%EndExpansion
%

%TCIMACRO{\TeXButton{EndFrame}{\end{frame}}}%
%BeginExpansion
\end{frame}%
%EndExpansion
%

%TCIMACRO{\TeXButton{BeginFrame}{\begin{frame}}}%
%BeginExpansion
\begin{frame}%
%EndExpansion
%

%TCIMACRO{\QTR{frametitle}{The minimum spanning forest by keeping one edge
%towards a lowest neighboring node}}%
%BeginExpansion
\frametitle{The minimum spanning forest by keeping one edge towards a lowest
neighboring node}%
%EndExpansion

The pruning $\chi\Downarrow=\chi\downarrow^{2}$ leaves for each node one edge
towards a lowest neighboring node.\ The following figure shows one such
solution and the resulting partition.%

%TCIMACRO{\FRAME{ftbpF}{2.4856in}{1.808in}{0pt}{}{}{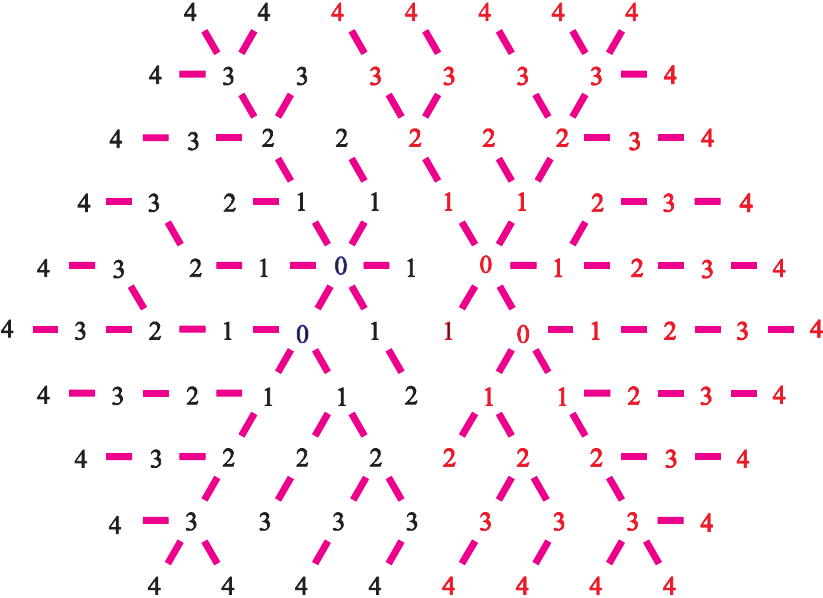}%
%{\special{ language "Scientific Word";  type "GRAPHIC";
%maintain-aspect-ratio TRUE;  display "USEDEF";  valid_file "F";
%width 2.4856in;  height 1.808in;  depth 0pt;  original-width 3.2052in;
%original-height 2.3137in;  cropleft "0";  croptop "1";  cropright "1";
%cropbottom "0";  filename 'wfal60-eps-converted-to.pdf';file-properties "XNPEU";}} }%
%BeginExpansion
\begin{figure}
[ptb]
\begin{center}
\includegraphics[
height=1.808in,
width=2.4856in
]%
{wfal60-eps-converted-to.pdf}%
\end{center}
\end{figure}
%EndExpansion
%

%TCIMACRO{\TeXButton{EndFrame}{\end{frame}}}%
%BeginExpansion
\end{frame}%
%EndExpansion
%

%TCIMACRO{\TeXButton{BeginFrame}{\begin{frame}}}%
%BeginExpansion
\begin{frame}%
%EndExpansion

\begin{center}
{\Large \alert{The catchment basins, skeletons by zone of influence for lexicographic distance functions}}%

\end{center}

%

%TCIMACRO{\TeXButton{EndFrame}{\end{frame}}}%
%BeginExpansion
\end{frame}%
%EndExpansion
%

%TCIMACRO{\TeXButton{BeginFrame}{\begin{frame}}}%
%BeginExpansion
\begin{frame}%
%EndExpansion
%

%TCIMACRO{\QTR{frametitle}{The geodesics of the k-steepest graphs}}%
%BeginExpansion
\frametitle{The geodesics of the k-steepest graphs}%
%EndExpansion

The operator $\downarrow^{k}G$ prunes the flooding graph and leaves only NAPs
with a steepness equal to $k$.\ From each node of the graph starts a NAP whose
$k$ first edges have a minimal lexicographic weight. If one follows such a
path, the next $k$ edges following each node as one goes downwards along the
path also has a minimal lexicographic weight. This shows that each NAP is a
geodesic line for a lexicographic distance function $\operatorname*{lexdist}%
_{k}$ we define below.

After the pruning $\downarrow^{k}$ some nodes belong to two or more catchment
basins.\ In order to obtain a partition, one applies the scissor operator
$\chi$ leaving for each node only one lowest adjacent edge.\ Like that the
thick watershed zones are suppressed and the catchment basins form a partition.%

%TCIMACRO{\TeXButton{EndFrame}{\end{frame}}}%
%BeginExpansion
\end{frame}%
%EndExpansion
%

%TCIMACRO{\TeXButton{BeginFrame}{\begin{frame}}}%
%BeginExpansion
\begin{frame}%
%EndExpansion
%

%TCIMACRO{\QTR{frametitle}{Constructing a watershed partition as the skeleton
%by zones of influence of the minima}}%
%BeginExpansion
\frametitle{Constructing a watershed partition as the skeleton by zones of
influence of the minima}%
%EndExpansion

It is possible to obtain the same result, if one labels the regional minima
and computes for the other nodes the shortest lexicographic distance
$\operatorname*{lexdist}_{k}$ to the minima.\ If one applies a greedy
algorithm, one may in addition propagate the labels of the minima all along
the geodesics and construct a partition of the space.\ If a node is at the
same lexicographic distance of two nodes, a greedy algorithm will arbitrarily
assign to it one of the labels of the minima. This is quite similar to the
operator $\chi$ which also does arbitrary choices.\ %

%TCIMACRO{\TeXButton{EndFrame}{\end{frame}}}%
%BeginExpansion
\end{frame}%
%EndExpansion
%

%TCIMACRO{\TeXButton{BeginFrame}{\begin{frame}}}%
%BeginExpansion
\begin{frame}%
%EndExpansion
%

%TCIMACRO{\QTR{frametitle}{Defining a lexicographic distance along non
%ascending paths.\ }}%
%BeginExpansion
\frametitle{Defining a lexicographic distance along non ascending paths.\ }%
%EndExpansion

Consider a flooding graph.\ In a NAP, each node except the last one forms with
the following edge a flooding pair, i.e.\ they have the same weights.\ And
these weights are decreasing as one follows the NAP downwards. The $k$ first
values, starting from the top, may be considered as a lexicographic distance
of depth $k.\ $In what follows we give a precise definition of such distances.\ 

The shortest distances and their geodesics may be computed wih classical
algorithms.\ Propagating the labels of the minima along the geodesics during
their construction constructs the zones of influence of the minima, i.e a
partition into catchment basins. The solution is not necessarily unique.
However the number of solutions decreases with the depth of the lexicographic
distance which is considered.%

%TCIMACRO{\TeXButton{EndFrame}{\end{frame}}}%
%BeginExpansion
\end{frame}%
%EndExpansion
%

%TCIMACRO{\TeXButton{BeginFrame}{\begin{frame}}}%
%BeginExpansion
\begin{frame}%
%EndExpansion

\begin{center}
{\Large \alert{Lexicographic distances of depth k}}
\end{center}

%

%TCIMACRO{\TeXButton{EndFrame}{\end{frame}}}%
%BeginExpansion
\end{frame}%
%EndExpansion
%

%TCIMACRO{\TeXButton{BeginFrame}{\begin{frame}}}%
%BeginExpansion
\begin{frame}%
%EndExpansion
%

%TCIMACRO{\QTR{frametitle}{Comparing NAPs with the lexicographic order.\ }}%
%BeginExpansion
\frametitle{Comparing NAPs with the lexicographic order.\ }%
%EndExpansion

Let $S$ be the set of sequences of edge or node weights, i.e.\ elements of
$\mathcal{T}$ . Let $S_{k}$ be the set of sequences with a maximal number $k$
of elements.\ For a sequence $s\in S_{k},$ we define the lexicographic weight
$w_{k}(s)$ : it is equal to $\infty$ if $s$ is not a NAP and equal to $s$
itself otherwise.\ 

We define an operator $\operatorname*{first}_{k}$ which keeps the $k$ first
edges and nodes of any NAP, or the NAP completely if its length is smaller
than $k.$ The operator $\operatorname*{first}_{k}$ maps any sequence of $S$
into $S_{k}.$

We define on the NAPs of $S_{k}$ the usual lexicographic order relation, which
we will note $\prec$, such that: $(\lambda_{1},\lambda_{2},\dots,\lambda
_{k})\prec(\mu_{1},\mu_{2},\dots,\mu_{k})$ if either $\lambda_{1}<\mu_{1}$ or
$\lambda_{i}=\mu_{i}$ until rank $s$ and $\lambda_{s+1}<\mu_{s+1}$. We define
$a\preceq b$ as $a\prec b$ or $a=b$.%

%TCIMACRO{\TeXButton{EndFrame}{\end{frame}}}%
%BeginExpansion
\end{frame}%
%EndExpansion
%

%TCIMACRO{\TeXButton{BeginFrame}{\begin{frame}}}%
%BeginExpansion
\begin{frame}%
%EndExpansion
%

%TCIMACRO{\QTR{frametitle}{Comparing NAPs with the lexicographic order.\ }}%
%BeginExpansion
\frametitle{Comparing NAPs with the lexicographic order.\ }%
%EndExpansion

Like that, it is possible to compare any two sequences $s_{1}$ and $s_{2}$ of
$S_{k}$ by comparing their weights:

\begin{itemize}
\item if $s_{1}$ and $s_{2}$ are not NAPs, then $w_{k}(s_{1})=w_{k}%
(s_{2})=\infty$ and we consider that $s_{1}\equiv s_{2}$ (they are equivalent)

\item if one of them, say $s_{1},$ is a NAP and not the other, then
$w_{k}(s_{1})\prec w_{k}(s_{2})=\infty$ and $s_{1}\prec s_{2}$

\item if both of them are NAP, then they compare as their weights: $s_{1}\prec
s_{2}\Leftrightarrow\left\{  w_{k}(s_{1})\prec w_{k}(s_{2})\right\}  $ and
$s_{1}\equiv s_{2}\Leftrightarrow\left\{  w_{k}(s_{1})=w_{k}(s_{2})\right\}  $
\end{itemize}

If $s_{1}$ and $s_{2}$ are NAPs belonging to $S,$ we compare them with:
$s_{1}\prec s_{2}\Leftrightarrow$ $w_{k}\left[  \operatorname*{first}%
_{k}(s_{1})\right]  \prec w_{k}\left[  \operatorname*{first}_{k}%
(s_{2})\right]  $%

%TCIMACRO{\TeXButton{EndFrame}{\end{frame}}}%
%BeginExpansion
\end{frame}%
%EndExpansion
%

%TCIMACRO{\TeXButton{BeginFrame}{\begin{frame}}}%
%BeginExpansion
\begin{frame}%
%EndExpansion
%

%TCIMACRO{\QTR{frametitle}{Defining an "addition" operator, comparing
%sequences}}%
%BeginExpansion
\frametitle{Defining an "addition" operator, comparing sequences}%
%EndExpansion

For NAPs of $S$ we define the operator $\boxplus_{k}$ called \textquotedblleft
addition\textquotedblright, which operates as a minimum:\newline$a\boxplus
_{k}b=%
\begin{array}
[c]{c}%
a\text{ if }a\preceq b\\
b\text{ if }b\preceq a
\end{array}
\qquad\forall a,b\in S$\newline The $\boxplus_{k}$ operation is associative,
commutative and has a neutral element $\infty$ called the zero element:
$a\boxplus_{k}\infty=\infty\boxplus_{k}a=a$%

%TCIMACRO{\TeXButton{EndFrame}{\end{frame}}}%
%BeginExpansion
\end{frame}%
%EndExpansion
%

%TCIMACRO{\TeXButton{BeginFrame}{\begin{frame}}}%
%BeginExpansion
\begin{frame}%
%EndExpansion
%

%TCIMACRO{\QTR{frametitle}{Defining a "multiply" operator, concatenating
%sequences}}%
%BeginExpansion
\frametitle{Defining a "multiply" operator, concatenating sequences}%
%EndExpansion

The operator $\boxtimes_{k}$, called \textquotedblleft
multiplication\textquotedblright,\ permits to compute the lexicographic length
$\Lambda(A)$ of a sequence, obtained by the concatenation of two sequences.
Let $a=(\lambda_{1},\lambda_{2},\dots,\lambda_{k})$ and $b=(\mu_{1},\mu
_{2},\dots,\mu_{k})$ we will define $a\boxtimes_{k}b$ by:

\begin{itemize}
\item if $a$ or $b$ is not a NAP then $a\boxtimes_{k}b=\infty$

\item if $a$ and $b$ are NAPs and $\lambda_{k}<\mu_{1}$ then $a\boxtimes
_{k}b=\infty$

\item if $a$ and $b$ are NAPs and $\lambda_{k}\geq\mu_{1}$ then $a\boxtimes
_{k}b=w_{k}\left[  \operatorname*{first}_{k}(a\rhd b)\right]  ,$ where $a\rhd
b$ is the concatenation of both sequences.
\end{itemize}

In particular $a\boxtimes_{k}\infty=\infty\boxtimes_{k}a=\infty$. so that the
zero element is an absorbing element for $\boxtimes_{k}$.%

%TCIMACRO{\TeXButton{EndFrame}{\end{frame}}}%
%BeginExpansion
\end{frame}%
%EndExpansion
%

%TCIMACRO{\TeXButton{BeginFrame}{\begin{frame}}}%
%BeginExpansion
\begin{frame}%
%EndExpansion

\begin{center}
{\Large \alert{Algebraic shortest paths algorithms}}
\end{center}

%

%TCIMACRO{\TeXButton{EndFrame}{\end{frame}}}%
%BeginExpansion
\end{frame}%
%EndExpansion
%

%TCIMACRO{\TeXButton{BeginFrame}{\begin{frame}}}%
%BeginExpansion
\begin{frame}%
%EndExpansion
%

%TCIMACRO{\QTR{frametitle}{A path algebra on a dio\"{\i}d structure}}%
%BeginExpansion
\frametitle{A path algebra on a dio\"{\i}d structure}%
%EndExpansion

The operator $\boxtimes_{k}$ is associative and has a neutral element $0$
called unit element: $a\boxtimes_{k}0=a.$ The multiplication is distributive
with respect to the addition both to the left and to the right.

The structure $(S,\boxplus_{k},\boxtimes_{k})$ forms a dio\"{\i}d, on which
Gondran and Minoux defined a path algebra \cite{gondranminoux}, where shortest
paths algorithms are expressed as solutions of linear systems.%

%TCIMACRO{\TeXButton{EndFrame}{\end{frame}}}%
%BeginExpansion
\end{frame}%
%EndExpansion
%

%TCIMACRO{\TeXButton{BeginFrame}{\begin{frame}}}%
%BeginExpansion
\begin{frame}%
%EndExpansion
%

%TCIMACRO{\QTR{frametitle}{Transposing the dio\"{\i}d to square matrices}}%
%BeginExpansion
\frametitle{Transposing the dio\"{\i}d to square matrices}%
%EndExpansion

Addition and multiplication of square matrices of size $n$ derives from the
laws $\boxplus$ and $\boxtimes_{k}$ : for $A=(a_{ij})$, $B=(b_{ij})$,
$i,j\in\left[  1,n\right]  $ :

\begin{center}
$C=A\boxplus B=(c_{ij})\Leftrightarrow c_{ij}=a_{ij}\boxplus b_{ij}%
\qquad\forall i,j$

$C=A\boxtimes_{k}B=(c_{ij})\Leftrightarrow c_{ij}=\sum\limits_{1\leq k\leq
n}^{\boxplus}a_{ik}\boxtimes_{k}b_{kj}\qquad\forall i,j$
\end{center}

where $\sum\limits^{\boxplus}$ is the sum relative to $\boxplus.$

As there is no ambiguity, for $\sum\limits_{1\leq k\leq n}^{\boxplus}%
a_{ik}\boxtimes_{k}b_{kj},$ we simply write $\sum\limits_{1\leq k\leq n}%
a_{ik}b_{kj}$%

%TCIMACRO{\TeXButton{EndFrame}{\end{frame}}}%
%BeginExpansion
\end{frame}%
%EndExpansion
%

%TCIMACRO{\TeXButton{BeginFrame}{\begin{frame}}}%
%BeginExpansion
\begin{frame}%
%EndExpansion
%

%TCIMACRO{\QTR{frametitle}{Unity and zero matrices }}%
%BeginExpansion
\frametitle{Unity and zero matrices }%
%EndExpansion

With these two laws, the square matrices also become a dio\"{\i}d with
\newline zero matrix $\widehat{\varepsilon}=\left[
\begin{array}
[c]{c}%
\varepsilon...........\varepsilon\\
.............\\
.............\\
.............\\
.............\\
\varepsilon...........\varepsilon
\end{array}
\right]  $\newline and unity matrix $E=\left[
\begin{array}
[c]{c}%
e...........\varepsilon\\
..e..........\\
....e........\\
......e.......\\
.........e....\\
\varepsilon...........e
\end{array}
\right]  $\newline We write $A^{0}=E$%

%TCIMACRO{\TeXButton{EndFrame}{\end{frame}}}%
%BeginExpansion
\end{frame}%
%EndExpansion
%

%TCIMACRO{\TeXButton{BeginFrame}{\begin{frame}}}%
%BeginExpansion
\begin{frame}%
%EndExpansion
%

%TCIMACRO{\QTR{frametitle}{Lexicographic length of paths in a graph}}%
%BeginExpansion
\frametitle{Lexicographic length of paths in a graph}%
%EndExpansion

If $G=\left[  X,U\right]  $ is a weighted graph with

\begin{itemize}
\item a set $X$ of nodes, numbered $i=1,......,N$.

\item a set $U$ of edges $u=(i,j$) with weights $s_{ij}\in S.$
\end{itemize}

The incidence matrix $A=(a_{ij})$ of the graph is given by:

\begin{center}
$a_{ij}=\left\{
\begin{array}
[c]{c}%
s_{ij}\qquad\text{si}\qquad(i,j)\in U\\
\varepsilon\qquad\text{sinon}%
\end{array}
\right\}  $
\end{center}

%

%TCIMACRO{\TeXButton{EndFrame}{\end{frame}}}%
%BeginExpansion
\end{frame}%
%EndExpansion
%

%TCIMACRO{\TeXButton{BeginFrame}{\begin{frame}}}%
%BeginExpansion
\begin{frame}%
%EndExpansion
%

%TCIMACRO{\QTR{frametitle}{A path algebra on a dio\"{\i}d structure}}%
%BeginExpansion
\frametitle{A path algebra on a dio\"{\i}d structure}%
%EndExpansion

To each path $\mu=\left(  i_{1},i_{2},....i_{k}\right)  $ of the graph, one
associated its k-lexicographic weight $w(\mu)=s_{i_{1}i_{2}}\boxtimes
_{k}s_{i_{2}i_{3}}\boxtimes_{k}....\boxtimes_{k}s_{i_{k-1}i_{k}},$ which is
different from $\infty$ only if the path is a NAP.

If $\pi$ is a never increasing sequence, then $w_{k}\left[
\operatorname*{first}_{k}(\pi)\right]  =\operatorname*{first}_{k}(\pi)$%

%TCIMACRO{\TeXButton{EndFrame}{\end{frame}}}%
%BeginExpansion
\end{frame}%
%EndExpansion
%

%TCIMACRO{\TeXButton{BeginFrame}{\begin{frame}}}%
%BeginExpansion
\begin{frame}%
%EndExpansion
%

%TCIMACRO{\QTR{frametitle}{Shortest paths in the graph}}%
%BeginExpansion
\frametitle{Shortest paths in the graph}%
%EndExpansion

The shortest paths for the lexicographic distance between any couple of nodes
may be computed thanks to $A^{n}$ or $A^{\left(  n\right)  }=E\boxplus
A^{1}\boxplus A^{2}\boxplus.........A^{n}$ .

\begin{lemma}
$A_{ij}^{n}=\sum\limits_{\mu\in C_{ij}^{n}}^{\boxplus}w(\mu)$ and
$A_{ij}^{\left(  n\right)  }=\sum\limits_{\mu\in C_{ij}^{\left(  n\right)  }%
}^{\boxplus}w(\mu)$
\end{lemma}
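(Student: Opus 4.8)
The plan is to prove the two identities $A^n_{ij}=\sum^{\boxplus}_{\mu\in C^n_{ij}}w(\mu)$ and $A^{(n)}_{ij}=\sum^{\boxplus}_{\mu\in C^{(n)}_{ij}}w(\mu)$ simultaneously by induction on $n$, where $C^n_{ij}$ denotes the set of paths of length exactly $n$ from $i$ to $j$ and $C^{(n)}_{ij}$ those of length at most $n$. The first statement is the genuinely inductive one; the second follows from it by summing, since $A^{(n)}=E\boxplus A^1\boxplus\cdots\boxplus A^n$ and $C^{(n)}_{ij}=\bigcup_{0\le m\le n}C^m_{ij}$, using associativity and commutativity of $\boxplus_k$ (and the convention $A^0=E$, which records the empty path at a node with weight the unit element $0$ and $\infty$ off-diagonal, matching $C^0_{ij}$).

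First I would settle the base case $n=1$: by definition of the incidence matrix $A=(a_{ij})$, $A^1_{ij}=a_{ij}=s_{ij}$ when $(i,j)\in U$ and $\infty$ otherwise, which is exactly $\sum^{\boxplus}_{\mu\in C^1_{ij}}w(\mu)$ since a length-one path is just an edge and its $k$-lexicographic weight $w(\mu)=s_{ij}$. For the inductive step, I would write $A^{n}=A^{n-1}\boxtimes_k A$, so by the matrix product definition $A^n_{ij}=\sum^{\boxplus}_{1\le \ell\le N}A^{n-1}_{i\ell}\boxtimes_k a_{\ell j}$. Applying the induction hypothesis to each $A^{n-1}_{i\ell}$ gives $A^n_{ij}=\sum^{\boxplus}_{\ell}\bigl(\sum^{\boxplus}_{\nu\in C^{n-1}_{i\ell}}w(\nu)\bigr)\boxtimes_k a_{\ell j}$; distributivity of $\boxtimes_k$ over $\boxplus_k$ (on the left and right, as noted in the dioïd section) lets me push the multiplication inside, yielding $\sum^{\boxplus}_{\ell}\sum^{\boxplus}_{\nu\in C^{n-1}_{i\ell}}\bigl(w(\nu)\boxtimes_k a_{\ell j}\bigr)$. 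Now every length-$n$ path $\mu$ from $i$ to $j$ factors uniquely as a length-$(n-1)$ path $\nu$ from $i$ to some $\ell$ followed by the edge $(\ell,j)$, so this double sum is indexed precisely by $C^n_{ij}$; it remains to check $w(\mu)=w(\nu)\boxtimes_k a_{\ell j}$, which is just the definition of the $k$-lexicographic weight of a path as the $\boxtimes_k$-product of its edge weights, together with the fact that a non-edge contributes $a_{\ell j}=\infty$ and $\infty$ is absorbing for $\boxtimes_k$ (so spurious terms where $(\ell,j)\notin U$ vanish, and terms where $w(\nu)=\infty$ also give $\infty$, consistent with $\mu$ failing to be a NAP).

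The main obstacle is bookkeeping rather than depth: one must make sure that the convention $w(\mu)=\infty$ for non-NAP paths is compatible with the algebra at every step, i.e.\ that $\boxtimes_k$ correctly "detects" when concatenating a NAP prefix with one more edge breaks the never-ascending condition (this is exactly the clause $\lambda_k<\mu_1\Rightarrow a\boxtimes_k b=\infty$ in the definition of $\boxtimes_k$), and that truncation to the first $k$ entries via $\operatorname{first}_k$ commutes with the associative product so that $w(\mu)$ is well-defined regardless of how the product is parenthesized. I would isolate these compatibility facts as a short sublemma — essentially that $w$ is a dioïd homomorphism from the free monoid of paths to $(S,\boxplus_k,\boxtimes_k)$ — and then the induction above goes through formally; the result then specializes standard algebraic shortest-path theory (Gondran--Minoux) to the lexicographic dioïd $(S,\boxplus_k,\boxtimes_k)$.
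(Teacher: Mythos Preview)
Your induction argument is correct and is the standard proof of this fact in path algebras. Note, however, that the paper does not actually supply a proof of this lemma: it is stated as a known result from the Gondran--Minoux path-algebra framework \cite{gondranminoux} and used immediately. Your write-up therefore fills in what the paper leaves implicit, and it does so along the expected lines (base case from the incidence matrix, inductive step via $A^{n}=A^{n-1}\boxtimes_{k}A$ plus distributivity, and the second identity by $\boxplus$-summing over $m\le n$). The bookkeeping caveat you raise---that associativity of $\boxtimes_{k}$ and the absorbing role of $\infty$ must hold so that $w(\mu)$ is well defined and non-NAP concatenations are killed---is exactly the content the paper records just before the lemma (``$\boxtimes_{k}$ is associative'', ``the zero element is absorbing for $\boxtimes_{k}$''), so no additional sublemma is really needed beyond invoking those stated properties.
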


where:

\begin{itemize}
\item $C_{ij}^{n}$ is the family of paths between $i$ and $j,$ containing
$n+1$ nodes (not necessarily distinct)

\item $C_{ij}^{\left(  n\right)  }$ is the family of paths between $i$ and
$j,$ containing at most $n+1$ nodes (not necessarily distinct)
\end{itemize}

Defining $A^{\prime}=E\boxplus A$, as $\boxplus$ est idempotent ($a\boxplus
a=a\qquad\forall a\in S$), we have: $A^{\prime n}=A^{\left(  n\right)  }$%

%TCIMACRO{\TeXButton{EndFrame}{\end{frame}}}%
%BeginExpansion
\end{frame}%
%EndExpansion
%

%TCIMACRO{\TeXButton{BeginFrame}{\begin{frame}}}%
%BeginExpansion
\begin{frame}%
%EndExpansion
%

%TCIMACRO{\QTR{frametitle}{A path algebra on a dio\"{\i}d structure}}%
%BeginExpansion
\frametitle{A path algebra on a dio\"{\i}d structure}%
%EndExpansion

In a graph with $N$ nodes, an elementary path has at most $N$ nodes, separated
by $N-1\ $edges. Hence, necessarily $A^{\left(  N\right)  }=A^{\left(
N-1\right)  }$ and $A^{\ast},$ the limit of $A^{\left(  n\right)  }$ for
increasing $n,$ is also equal to $A^{\left(  N-1\right)  }.$

Thanks to $A^{\ast},$ the computation of the catchment basins is
immediate.\ If $m_{1}$ is a regional minimum, then a node $i$ belongs to its
catchment basin if and only if $A_{im_{1}}^{\ast}\leq\sum\limits_{m_{j}\neq
m_{1}}^{\boxplus}A_{im_{j}}^{\ast}$

$A^{\ast}$ may be obtained by the successive multiplications :

$A,$ $A^{2}=A\boxtimes_{k}A,$ $A^{4}=A^{2}\boxtimes_{k}A^{2},...A^{2^{i}%
}=A^{2^{i-1}}\boxtimes_{k}A^{2^{i-1}}$ until $2^{i}\geq N-1$,, i.e. $i\geq
\log\left(  N-1\right)  $%

%TCIMACRO{\TeXButton{EndFrame}{\end{frame}}}%
%BeginExpansion
\end{frame}%
%EndExpansion
%

%TCIMACRO{\TeXButton{BeginFrame}{\begin{frame}}}%
%BeginExpansion
\begin{frame}%
%EndExpansion
%

%TCIMACRO{\QTR{frametitle}{A path algebra on a dio\"{\i}d structure}}%
%BeginExpansion
\frametitle{A path algebra on a dio\"{\i}d structure}%
%EndExpansion

$A^{\ast}$ verifies\ $A^{\ast}=E\boxplus A\boxtimes_{k}A^{\ast}.$

Multiplying by a matrix $B$: $A^{\ast}B=B\boxplus A\boxtimes_{k}A^{\ast}B$.
Defining $Y=A^{\ast}B$ shows that $A^{\ast}B$ is solution of the equation
$Y=B\boxplus A\boxtimes_{k}Y.\;$Furthermore, it is the smallest solution.\ 

With varying $B$ it is possible to solve all types of shortest distances:

\begin{itemize}
\item The smallest solution of $Y=E\boxplus A\boxtimes_{k}Y$ yields $A^{\ast
}E=A^{\ast}$

\item with $B=\left[
\begin{array}
[c]{c}%
\varepsilon\\
\vdots\\
0\\
\vdots\\
\varepsilon
\end{array}
\right]  ,$ one gets $A^{\ast}B$, the i-th column of the matrix $A^{\ast},$
that is the distance of all nodes to the node $i.$\ 
\end{itemize}

%

%TCIMACRO{\TeXButton{EndFrame}{\end{frame}}}%
%BeginExpansion
\end{frame}%
%EndExpansion
%

%TCIMACRO{\TeXButton{BeginFrame}{\begin{frame}}}%
%BeginExpansion
\begin{frame}%
%EndExpansion
%

%TCIMACRO{\QTR{frametitle}{Solving linear systems}}%
%BeginExpansion
\frametitle{Solving linear systems}%
%EndExpansion

Gondran and Minoux have shown that most of the classical algorithms solving
systems of linear equations (Gauss, Gauss-Seidel, etc.) are still valid in
this context and correspond often to known shortest paths algorithms defined
on graphs.\ We now give a few examples.%

%TCIMACRO{\TeXButton{EndFrame}{\end{frame}}}%
%BeginExpansion
\end{frame}%
%EndExpansion
%

%TCIMACRO{\TeXButton{BeginFrame}{\begin{frame}}}%
%BeginExpansion
\begin{frame}%
%EndExpansion
%

%TCIMACRO{\QTR{frametitle}{The Jacobi algorithm}}%
%BeginExpansion
\frametitle{The Jacobi algorithm}%
%EndExpansion

Setting $B=\left[
\begin{array}
[c]{c}%
0\\
\varepsilon\\
\vdots\\
\vdots\\
\varepsilon
\end{array}
\right]  \ $and $Y^{\left(  0\right)  }=\left[
\begin{array}
[c]{c}%
\varepsilon\\
\vdots\\
\varepsilon\\
\vdots\\
\varepsilon
\end{array}
\right]  $, the iteration $Y^{\left(  k\right)  }=A\ast Y^{\left(  k-1\right)
}\boxplus B$ converges to $Y^{\left(  N\right)  }=A^{\ast}\ast B$%

%TCIMACRO{\TeXButton{EndFrame}{\end{frame}}}%
%BeginExpansion
\end{frame}%
%EndExpansion
%

%TCIMACRO{\TeXButton{BeginFrame}{\begin{frame}}}%
%BeginExpansion
\begin{frame}%
%EndExpansion
%

%TCIMACRO{\QTR{frametitle}{The Gauss Seidel algorithm}}%
%BeginExpansion
\frametitle{The Gauss Seidel algorithm}%
%EndExpansion

$A$ is decomposed as $A=L\boxplus E\boxplus U$, where $L\ $is an inferior
triangular matrix, $E$ the unity matrix $E,$ and $U$ an upper triangular
matrix.\ The upper part of $L$ and lower part of $U$ have the value
$\varepsilon.$

The solution of $Y=A\ast Y\boxplus B$ is obtained by the iteration :
$Y^{(k)}=LY^{(k-1)}\boxplus UY^{(k)}\boxplus B$

This algorithm is faster as Jacobi's algorithm, as the product $UY^{(k)}$
already uses intermediate results, freshly computed during the current
iteration $Y^{(k)}.$%

%TCIMACRO{\TeXButton{EndFrame}{\end{frame}}}%
%BeginExpansion
\end{frame}%
%EndExpansion
%

%TCIMACRO{\TeXButton{BeginFrame}{\begin{frame}}}%
%BeginExpansion
\begin{frame}%
%EndExpansion
%

%TCIMACRO{\QTR{frametitle}{The Jordan algorithm}}%
%BeginExpansion
\frametitle{The Jordan algorithm}%
%EndExpansion

The Jordan algorithm is used in classical linear algebra for inverting
matrices, by successive pivoting.\ In our case, where the shortest paths are
elementary path the algorithms is:

For $k$ from $1$ to $N:$

\qquad For each $i$ and $j$ from $i$ to $N$, do: $a_{ij}=a_{ij}\boxplus
a_{ik}\ast a_{kj}$%

%TCIMACRO{\TeXButton{EndFrame}{\end{frame}}}%
%BeginExpansion
\end{frame}%
%EndExpansion
%

%TCIMACRO{\TeXButton{BeginFrame}{\begin{frame}}}%
%BeginExpansion
\begin{frame}%
%EndExpansion
%

%TCIMACRO{\QTR{frametitle}{The greedy algorithm of Moore-Dijkstra}}%
%BeginExpansion
\frametitle{The greedy algorithm of Moore-Dijkstra}%
%EndExpansion

Gondran established the algebraic counterpart of the famous shortest path
algorithm by Moore-Dijkstra.

\textbf{Theorem} (Gondran): Let $\overline{Y}=A^{\ast}B$ be the solution of
$Y=AY\boxplus B$, for an arbitrary matrix $B$.\ There exists then an index
$i_{0}$ such that $\overline{y_{i_{0}}}=\sum\limits^{\boxplus}b_{i}.$

The smallest $b$ is such solution : $\overline{y_{i_{0}}}=b_{i_{0}}$

Each element of $Y=AY\boxplus B$ is computed by $y_{k}=\sum\limits_{j\neq
k}^{\boxplus}a_{kj}\ast y_{j}\boxplus b_{k}=\sum\limits_{j\neq k,i_{0}%
}^{\boxplus}a_{kj}\ast y_{j}\boxplus a_{ki_{0}}y_{i_{0}}\boxplus b_{k}$

Suppressing the line and column of rank $i_{0}$ and taking for $b$ the vector
$b_{k}^{(1)}=a_{ki_{0}}y_{i_{0}}\boxplus b_{k},$ one gets a new system of size
$N-1$ to solve.

The Moore Dijkstra algorithm can also directly be computed on a flooding graph
$G,$ as presented below.%

%TCIMACRO{\TeXButton{EndFrame}{\end{frame}}}%
%BeginExpansion
\end{frame}%
%EndExpansion
%

%TCIMACRO{\TeXButton{BeginFrame}{\begin{frame}}}%
%BeginExpansion
\begin{frame}%
%EndExpansion

\begin{center}
{\Large \alert{Shortest paths algorithms on the graph}}
\end{center}

\begin{itemize}
\item The shortest path algorithm of Dijkstra is first presented
\cite{moore}.\ We show that for a lexicographic distance of depth $1,$ it
becomes algorithm for constructing the minimum spanning forest of Prim.

\item The core expanding algorithms take advantage of the particular structure
of the lexicographic distances.\ They are faster than the Dijkstra algorithm
and better suited to hardware implementations.\ For a lexicographic distance
of depth $2,$ they produce the same geodesics as the topographic distance.\ 
\end{itemize}

%

%TCIMACRO{\TeXButton{EndFrame}{\end{frame}}}%
%BeginExpansion
\end{frame}%
%EndExpansion
%

%TCIMACRO{\TeXButton{BeginFrame}{\begin{frame}}}%
%BeginExpansion
\begin{frame}%
%EndExpansion
%

%TCIMACRO{\QTR{frametitle}{Architecture of shortest path algorithms.}}%
%BeginExpansion
\frametitle{Architecture of shortest path algorithms.}%
%EndExpansion

The shortest path algorithms below are applied on a graph which is invariant
by the opening $\gamma_{e}$ (the regional minima may ad libitum be contracted
beforehand).\ A domain $D$ is used and expanded, containing at each stage of
the algorithms the nodes for which the shortest distance to the minima is
known.\ Initially the minima are labeled and put in $D$ with a value $0.$ We
say that a flooding pair $(j,jl)$ is on the boundary of the domain $D,$ if $j$
is outside $D$ and $l$ inside $D.$ In this case we say that "$j$ floods $l",$
or $"l$ is flooded by $j".$ We say that $j$ belongs to the outside boundary
$\partial^{+}D$ of $D$ and $l$ to the inside boundary $\partial^{-}D$ of $D$.

The domain $D$ is progressively expanded by progressive incorporation of nodes
in $\partial^{+}D$ belonging to flooding pairs on the boundary of $D.$%

%TCIMACRO{\TeXButton{EndFrame}{\end{frame}}}%
%BeginExpansion
\end{frame}%
%EndExpansion
%

%TCIMACRO{\TeXButton{BeginFrame}{\begin{frame}}}%
%BeginExpansion
\begin{frame}%
%EndExpansion
%

%TCIMACRO{\QTR{frametitle}{The Moore Dijkstra algorithm}}%
%BeginExpansion
\frametitle{The Moore Dijkstra algorithm}%
%EndExpansion

The shortest path algorithm by Moore-Dijkstra constructs the distances of all
nodes to the minima in a greedy manner. It uses a domain $D$ which contains at
each stage of the algorithm the nodes $i$ for which the shortest distance
$\delta_{k}^{\ast}(i)$ and label $\lambda(i)$ is known.

\textbf{Initialisation: }

The nodes of the regional minima are labeled and put in the\ domain $D.$

\textbf{Repeat until the domain }$D$\textbf{ contains all nodes: }

For each flooding pair $(j,(jl)$ on the boundary of $D,$ estimate the shortest
path as $\delta_{k}(j)=e_{jl}\boxtimes_{k}\delta_{k}^{\ast}(l).\ $

The node with the lowest estimate is correctly estimated. If the corresponding
flooding pair is $(s,st):$

\begin{itemize}
\item $D=D\cup\left\{  s\right\}  $

\item $\delta_{k}^{\ast}(s)=\delta_{k}(s)$

\item $\lambda(s)=\lambda(t)$
\end{itemize}

%

%TCIMACRO{\TeXButton{EndFrame}{\end{frame}}}%
%BeginExpansion
\end{frame}%
%EndExpansion
%

%TCIMACRO{\TeXButton{BeginFrame}{\begin{frame}}}%
%BeginExpansion
\begin{frame}%
%EndExpansion
%

%TCIMACRO{\QTR{frametitle}{Correctness of the Moore Dijkstra algorithm}}%
%BeginExpansion
\frametitle{Correctness of the Moore Dijkstra algorithm}%
%EndExpansion

The node with the lowest estimate is correctly estimated and is introduced in
the domain $D.\ $If is necessarily the shortest path, as any other path would
have to leave $D$ through another boundary flooding pair with a higher estimate.%

%TCIMACRO{\TeXButton{EndFrame}{\end{frame}}}%
%BeginExpansion
\end{frame}%
%EndExpansion
%

%TCIMACRO{\TeXButton{BeginFrame}{\begin{frame}}}%
%BeginExpansion
\begin{frame}%
%EndExpansion
%

%TCIMACRO{\QTR{frametitle}{Controlling the Moore Dijkstra algorithm}}%
%BeginExpansion
\frametitle{Controlling the Moore Dijkstra algorithm}%
%EndExpansion

The Moore Dijkstra may be advantageously controlled by a hierarchical queue
structure.\ Each node, as it gets is estimate is put into the HQ. As long the
HQ\ is not empty, the extracted node is among the nodes with the lowest
estimate, the one which has been introduced in the HQ first.

The HQ has thus the advantage to correctly sequence the treatment in the
presence of plateaus: it treats the nodes in the plateaus from their lower
boundary inwards.\ The processing order is proportional to the distance of
each node to the lower boundary of the plateau.%

%TCIMACRO{\TeXButton{EndFrame}{\end{frame}}}%
%BeginExpansion
\end{frame}%
%EndExpansion
%

%TCIMACRO{\TeXButton{BeginFrame}{\begin{frame}}}%
%BeginExpansion
\begin{frame}%
%EndExpansion
%

%TCIMACRO{\QTR{frametitle}{The Moore Dijkstra algorithm : case where $k=1$}}%
%BeginExpansion
\frametitle{The Moore Dijkstra algorithm : case where $k=1$}%
%EndExpansion

The algorithms remains the same but the computations are simplified as
$\delta_{k}(j)=e_{jl}\boxtimes_{1}\delta_{1}^{\ast}(l)$ is simply the weight
of the node and the edge in the flooding pair $(j,(jl).\ $In other terms, the
domain $D$ is expanded by introducing into $D$ the flooding pair on the
boundary of $D$ with the lowest weight.\ This corresponds exactly to the
algorithm of PRIM\ for constructing minimum spanning forests.\ The same
algorithm has been used in \cite{waterfallsbs} for constructing the waterfall hierarchy.

This is not surprising, as for $k=1,$ the lexicographic weight of a NAP simply
is the weight of the first edge.\ The distance is in this case the ultrametric
flooding distance.\ 

\textbf{Remark:} There is a complete freedom in the choice of the flooding
pairs which are introduced at any time into $D.$ A\ huge number of solutions
are compatible with this distance, some of them quite unexpected as
illustrated by an example given above.%

%TCIMACRO{\TeXButton{EndFrame}{\end{frame}}}%
%BeginExpansion
\end{frame}%
%EndExpansion
%

%TCIMACRO{\TeXButton{BeginFrame}{\begin{frame}}}%
%BeginExpansion
\begin{frame}%
%EndExpansion
%

%TCIMACRO{\QTR{frametitle}{The core expanding shortest distance algorithm}}%
%BeginExpansion
\frametitle{The core expanding shortest distance algorithm}%
%EndExpansion

Due to the particular structure of lexicographic distances, it is possible to
identify another type of nodes for which the shortest distance may immediately
be computed.\ Let $\underbrace{\partial^{-}D}$ be the set of nodes in the
boundary $\partial^{-}D$ with the lowest valuation $\operatorname*{first}%
_{k-1}(d_{k}^{\ast})$.\ For each $t\in\underbrace{\partial^{-}D}$ and for each
$s\in\partial^{+}D$ flooding $t$ we do $e_{ts}\boxtimes_{k}%
\operatorname*{first}_{k-1}\delta_{k}^{\ast}(t)$ producing a NAP of length
$k.$ The value of $\delta_{k}^{\ast}(t)\operatorname*{first}_{k-1}$ is simply
obtained by appending the weight of $s$ to $\operatorname*{first}_{k-1}%
\delta_{k}^{\ast}(t).\ $As $t$ belongs to $\underbrace{\partial^{-}D},$ this
value is the smallest possible.\ 

This analysis shows that each node of $\underbrace{\partial^{-}D}$ permits to
introduce into $D,$ all its neighbors by which it is flooded, whatever their
weight.\ This algorithm is more "active" as Dijkstra's algorithm, as each node
inside $D_{k}$ is able to label and introduce into $D$ all its flooding
neighbors at the same time.\ %

%TCIMACRO{\TeXButton{EndFrame}{\end{frame}}}%
%BeginExpansion
\end{frame}%
%EndExpansion
%

%TCIMACRO{\TeXButton{BeginFrame}{\begin{frame}}}%
%BeginExpansion
\begin{frame}%
%EndExpansion
%

%TCIMACRO{\QTR{frametitle}{The core expanding shortest distance algorithm}}%
%BeginExpansion
\frametitle{The core expanding shortest distance algorithm}%
%EndExpansion

The algorithm is the following;

\textbf{Initialisation: }

The nodes of the regional minima are labeled and put in the\ domain $D.$

\textbf{Repeat until the domain }$D$\textbf{ contains all nodes: }

Let $\underbrace{\partial^{-}D}$ be the subdomain of $D$ of nodes flooded by
nodes outside $D$ with the lowest valuation $\operatorname*{first}_{k-1}%
(d_{k}^{\ast})$.\ For each $t\in\underbrace{\partial^{-}D}$ and $t$ is flooded
by $\in\partial^{+}D$ :

\begin{itemize}
\item $D=D\cup\left\{  s\right\}  $

\item $\delta_{k}^{\ast}(s)=e_{ts}\boxtimes_{k}\operatorname*{first}%
_{k-1}\delta_{k}^{\ast}(t)$

\item $\lambda(s)=\lambda(t)$
\end{itemize}

%

%TCIMACRO{\TeXButton{EndFrame}{\end{frame}}}%
%BeginExpansion
\end{frame}%
%EndExpansion
%

%TCIMACRO{\TeXButton{BeginFrame}{\begin{frame}}}%
%BeginExpansion
\begin{frame}%
%EndExpansion
%

%TCIMACRO{\QTR{frametitle}{Controlling the core expanding shortest distance
%algorithm}}%
%BeginExpansion
\frametitle{Controlling the core expanding shortest distance algorithm}%
%EndExpansion

The core expanding shortest distance algorithm may also be advantageously
controlled by a hierarchical queue structure.\ Each node, as it is introduced
in $D$ is put into the HQ. As long the HQ\ is not empty, the extracted node is
among the nodes with the lowest estimate, the one which has been introduced in
the HQ first. This node gives its label and the correct distances to all its
neighbors outside $D$ by which it is flooded.\ 

If a node extracted from the HQ belongs to a plateau, as it has been
introduced in the HQ, it is closer to the lower boundary of the plateau than
other nodes which may have been introduced in the HQ later.\ 

The algorithm is particularly suitable for a hardware implementation: each
node is entered in the HQ only once.\ On the contrary, with the Moore-Dijkstra
algorithm a node may be introduced several times in the HQ, as its estimate
may vary before it gets its definitive value.\ %

%TCIMACRO{\TeXButton{EndFrame}{\end{frame}}}%
%BeginExpansion
\end{frame}%
%EndExpansion
%

%TCIMACRO{\TeXButton{BeginFrame}{\begin{frame}}}%
%BeginExpansion
\begin{frame}%
%EndExpansion
%

%TCIMACRO{\QTR{frametitle}{The core expanding shortest distance algorithm: case
%where $k=1$}}%
%BeginExpansion
\frametitle{The core expanding shortest distance algorithm: case where $k=1$}%
%EndExpansion

The domain $\underbrace{\partial^{-}D}$ has been defined as the subdomain of
$D$ of nodes flooded by nodes outside $D$ with the lowest valuation
$\operatorname*{first}_{k-1}(d_{k}^{\ast}).\ $For $k=1,$ we have
$\operatorname*{first}_{k-1}(d_{k}^{\ast})$ is empty, and $\underbrace
{\partial^{-}D}$ occupies the whole domain $\partial^{-}D.\ \ $This means that
any node in $\partial^{-}D$ can be expanded by appending one of the outside
node through which it is flooded. The algorithms for constructing graph cuts
by J.Cousty find also their place in this context \cite{Coustywshedcut}

We have illustrated this situation earlier, showing that the minimum spanning
forests with $0$ steepness may be absolutely unexpected.

Controling the algorithm with a hierarchical queue limits the anarchy to some extend.%

%TCIMACRO{\TeXButton{EndFrame}{\end{frame}}}%
%BeginExpansion
\end{frame}%
%EndExpansion
%

%TCIMACRO{\TeXButton{BeginFrame}{\begin{frame}}}%
%BeginExpansion
\begin{frame}%
%EndExpansion
%

%TCIMACRO{\QTR{frametitle}{The core expanding shortest distance algorithm: case
%where $k=2$}}%
%BeginExpansion
\frametitle{The core expanding shortest distance algorithm: case where $k=2$}%
%EndExpansion

If $k=2,$ then the valuation $\operatorname*{first}_{k-1}(d_{k}^{\ast\prime
}i))=\operatorname*{first}_{1}(d_{2}^{\ast}(i))$ is the valuation of the node
$i$ itself.\ This means that $\underbrace{\partial^{-}D}$ contains the nodes
with lowest weight belonging to $\partial^{-}D.\ $If $i$ is such a node, it
introduces into $D$ each of its neighbors $j$ belonging to $\partial^{+}D,$
each with a valuation $\operatorname*{first}_{1}(d_{2}^{\ast}(j))$ equal to
its weight $n_{j}.$\ 

If in addition, one uses a HQ\ for controlling the process, we get the
classical algorithm for constructing catchment basins \ \cite{meyer91}%
,\cite{meyer94}.%

%TCIMACRO{\TeXButton{EndFrame}{\end{frame}}}%
%BeginExpansion
\end{frame}%
%EndExpansion
%

%TCIMACRO{\TeXButton{BeginFrame}{\begin{frame}}}%
%BeginExpansion
\begin{frame}%
%EndExpansion
%

%TCIMACRO{\QTR{frametitle}{The classical watershed algorithm}}%
%BeginExpansion
\frametitle{The classical watershed algorithm}%
%EndExpansion

Label the nodes of the minima and them in the domain $D,$ each with a priority
with a weight.\ 

As long as the HQ\ is not empty, extract the node $j$ with the highest
priority from the HQ:

For each unlabeled neighboring (on the flooding graph) node $i$ of $j:$

\qquad* $label(i)=label(j)$

\qquad* put $i$ in the queue with priority $\ \nu_{i}$

As a matter of fact, this algorithm has first been derived from the watershed
line, as zone of influence of the minima for the topographic distance, defined below.\ %

%TCIMACRO{\TeXButton{EndFrame}{\end{frame}}}%
%BeginExpansion
\end{frame}%
%EndExpansion
%

%TCIMACRO{\TeXButton{BeginFrame}{\begin{frame}}}%
%BeginExpansion
\begin{frame}%
%EndExpansion
%

%TCIMACRO{\QTR{frametitle}{The topographic distance}}%
%BeginExpansion
\frametitle{The topographic distance}%
%EndExpansion

Consider an arbitrary path $\pi=(x_{1},x_{2},...,x_{p})$ between two nodes
$x_{1}$ and $x_{n}.$ The weight $\nu_{p}$ at node $x_{p}$ can be written:

$\nu_{p}=\nu_{p}-\nu_{p-1}+\nu_{p-1}-\nu_{p-2}+\nu_{p-2}-\nu_{p-3}%
+....+\nu_{2}-\nu_{1}+\nu_{1}$

The node $k-1$ is not necessarily the lowest node of node $k,$ therefore
$\nu_{k-1}\geq\varepsilon_{n}\nu_{k}$ and $\nu_{k}-\nu_{k-1}\leq\nu
_{k}-\varepsilon_{n}\nu_{k}.$

Replacing each increment $\nu_{k}-\nu_{k-1}$ by $\nu_{k}-\varepsilon_{n}%
\nu_{k}$ will produce a sum $\nu_{p}-\varepsilon_{n}\nu_{p}+\nu_{p-1}%
-\varepsilon_{n}\nu_{p-1}+....+\nu_{2}-\varepsilon_{n}\nu_{2}+\nu_{1}$ which
is larger than $\nu_{p}.\ $It is called the topographic length of the path
$\pi=(x_{1},x_{2},...,x_{p}).$ The path with the shortest topographic length
between two nodes is called the topographic distance between these nodes.\ %

%TCIMACRO{\TeXButton{EndFrame}{\end{frame}}}%
%BeginExpansion
\end{frame}%
%EndExpansion
%

%TCIMACRO{\TeXButton{BeginFrame}{\begin{frame}}}%
%BeginExpansion
\begin{frame}%
%EndExpansion
%

%TCIMACRO{\QTR{frametitle}{The topographic distance}}%
%BeginExpansion
\frametitle{The topographic distance}%
%EndExpansion

The path with the shortest topographic length between two nodes is called the
topographic distance between these nodes \cite{Najman199499},\cite{meyer94}%
.\ It will only be equal to $\nu_{p}$ in the case where the path $(x_{1}%
,x_{2},...,x_{p})$ precisely is a path of steepest descent, from each node to
its lowest neighbor. In other terms the topographic distance and the distance
$\operatorname*{lexdist}_{2}$ have the same geodesics.\ 

If we define the toll to pay along a path $\pi=(x_{1},x_{2},...,x_{p})$ as
$\nu_{1}$ for the first node and $\nu_{i}-\varepsilon_{n}\nu_{i}$ for the
others, then the lowest toll distance for a node $x_{p}$ to a regional minimum
will be $\nu_{p}$ if there exists a path of steepest descent from $x_{p}$ to
$x_{1}.\ $In other terms, $x_{p}$ and $x_{1}$ belong to the same catchment
basins if one considers the topographic distance.\ But they also belong to the
same catchment basin if one considers the depth $2$ lexicographic distance,
as, by construction, $x_{k}$ is the lowest neighbor of $x_{k-1}.$%

%TCIMACRO{\TeXButton{EndFrame}{\end{frame}}}%
%BeginExpansion
\end{frame}%
%EndExpansion
%

%TCIMACRO{\TeXButton{BeginFrame}{\begin{frame}}}%
%BeginExpansion
\begin{frame}%
%EndExpansion
%

%TCIMACRO{\QTR{frametitle}{Distance on a node weighted graph based on the cost
%for travelling along the cheapest path}}%
%BeginExpansion
\frametitle{Distance on a node weighted graph based on the cost for travelling
along the cheapest path}%
%EndExpansion

We recall the toll distance presented earlier.

The weights are assigned to the nodes and not to the edges. Each node may be
considered as a town where a toll has to be paid.

\textbf{Cost of a path:} The cost of a path is equal to the sum of the tolls
to be paid in all towns encountered along the path (including or not one or
both ends).

\textbf{Cost between two nodes:} The cost for reaching node $y$ from node $x$
is equal to the minimal cost of all paths between $x$ and $y$. We write
$\operatorname*{tolldist}(x,y).\ $If there is no path between them, the cost
is equal to $\infty$.%

%TCIMACRO{\TeXButton{EndFrame}{\end{frame}}}%
%BeginExpansion
\end{frame}%
%EndExpansion
%

%TCIMACRO{\TeXButton{BeginFrame}{\begin{frame}}}%
%BeginExpansion
\begin{frame}%
%EndExpansion
%

%TCIMACRO{\QTR{frametitle}{Illustration of the cheapest path}}%
%BeginExpansion
\frametitle{Illustration of the cheapest path}%
%EndExpansion

\textbf{%
%TCIMACRO{\FRAME{ftbpF}{1.9976in}{1.4944in}{0pt}{}{}{wfal64-eps-converted-to.pdf}%
%{\special{ language "Scientific Word";  type "GRAPHIC";
%maintain-aspect-ratio TRUE;  display "USEDEF";  valid_file "F";
%width 1.9976in;  height 1.4944in;  depth 0pt;  original-width 2.5586in;
%original-height 1.9053in;  cropleft "0";  croptop "1";  cropright "1";
%cropbottom "0";  filename 'wfalhierar/wfal64-eps-converted-to.pdf';file-properties "XNPEU";}}
%}%
%BeginExpansion
\begin{figure}
[ptb]
\begin{center}
\includegraphics[
height=1.4944in,
width=1.9976in
]%
{wfal64-eps-converted-to.pdf}%
\end{center}
\end{figure}
%EndExpansion
}

In this figure, the shortest chain between $x$ and $y$ is in red and the total
toll to pay is $1+1+2+2=6$%

%TCIMACRO{\TeXButton{EndFrame}{\end{frame}}}%
%BeginExpansion
\end{frame}%
%EndExpansion
%

%TCIMACRO{\TeXButton{BeginFrame}{\begin{frame}}}%
%BeginExpansion
\begin{frame}%
%EndExpansion
%

%TCIMACRO{\QTR{frametitle}{Reconstruction of an image by integration}}%
%BeginExpansion
\frametitle{Reconstruction of an image by integration}%
%EndExpansion

Finally, any image $f$ may be considered as the global toll of its pixel graph
if one takes:

\begin{itemize}
\item as reference nodes the regional minima of the image. Each of them has as
toll its altitude.

\item as local toll for all other nodes, the difference between their altitude
and the altitude or their lowest neighbor: $g=f-\varepsilon f$
\end{itemize}

If in addition, we give a different label to each regional minimum, we may as
previously propagate this label along each smallest toll path. We obtain like
that a tessellation: to each minimum is ascribed a catchment basin: the set of
all nodes which are closer to this minimum then to any other minimum.%

%TCIMACRO{\TeXButton{EndFrame}{\end{frame}}}%
%BeginExpansion
\end{frame}%
%EndExpansion
%

%TCIMACRO{\TeXButton{BeginFrame}{\begin{frame}}}%
%BeginExpansion
\begin{frame}%
%EndExpansion
%

%TCIMACRO{\QTR{frametitle}{Inversely: the catchment basins of the cheapest
%paths for a distribution of tolls. }}%
%BeginExpansion
\frametitle{Inversely: the catchment basins of the cheapest paths for a
distribution of tolls. }%
%EndExpansion

Inversely we may chose a number of starting nodes called roots, with a toll to
pay and for all other nodes, the toll to pay for reaching or crossing this
node. The toll to pay constitutes a topographic surface where each root is a
regional minimum.\ In addition we propagate the labels of the minima along the
geodesics of the cheapest distance, and get a partition of the nodes. As a
result one get a partition of the space, where each region with a given label
is the catchment basin of the root with the same label for this distance function.\ %

%TCIMACRO{\TeXButton{EndFrame}{\end{frame}}}%
%BeginExpansion
\end{frame}%
%EndExpansion
%

%TCIMACRO{\TeXButton{BeginFrame}{\begin{frame}}}%
%BeginExpansion
\begin{frame}%
%EndExpansion
%

%TCIMACRO{\QTR{frametitle}{Assigning to each node the global toll for reaching
%it, starting from one of the roots.}}%
%BeginExpansion
\frametitle{Assigning to each node the global toll for reaching it, starting
from one of the roots.}%
%EndExpansion
%

%TCIMACRO{\FRAME{ftbpF}{4.422in}{2.1904in}{0pt}{}{}{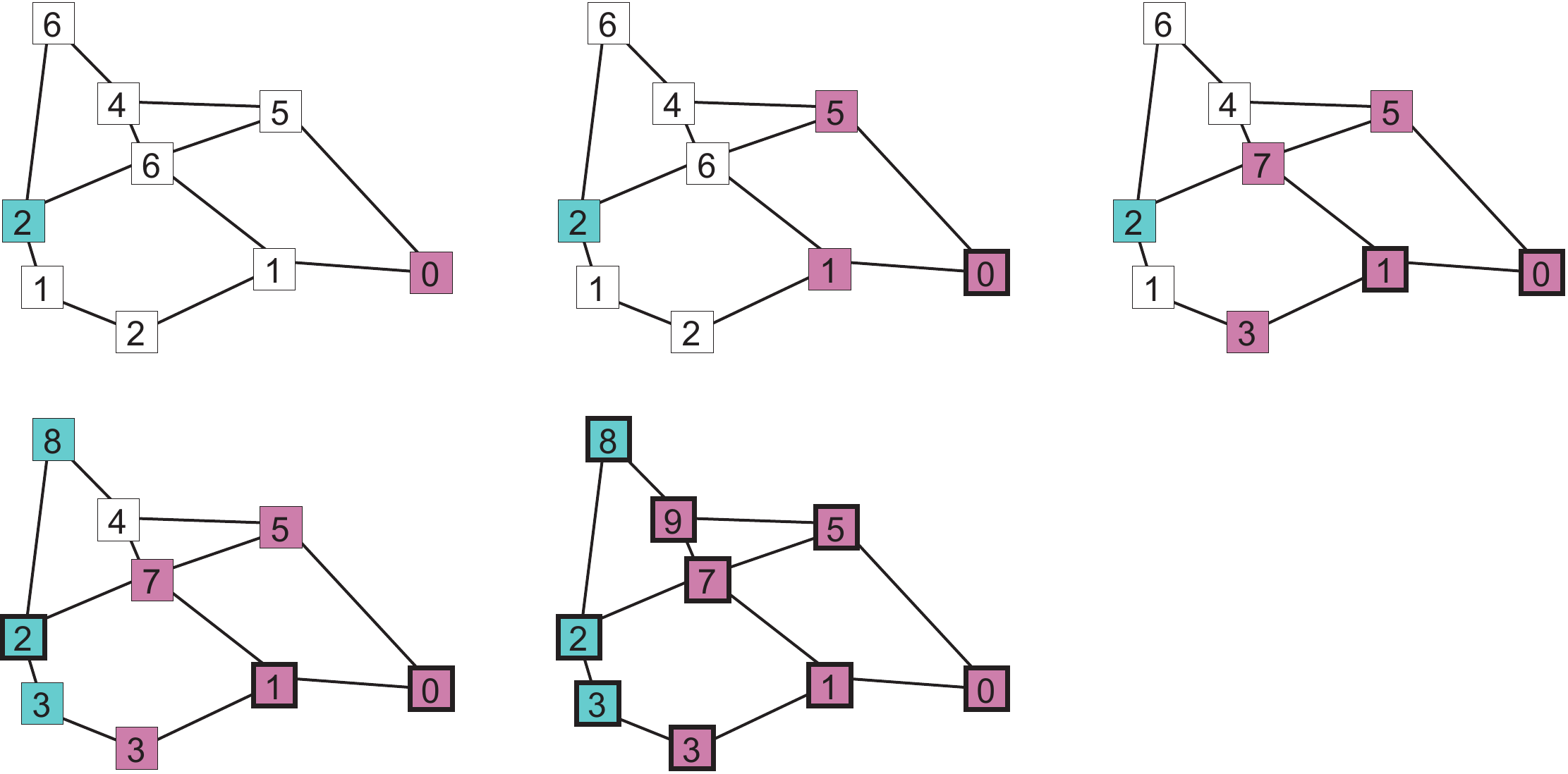}%
%{\special{ language "Scientific Word";  type "GRAPHIC";
%maintain-aspect-ratio TRUE;  display "USEDEF";  valid_file "F";
%width 4.422in;  height 2.1904in;  depth 0pt;  original-width 8.6175in;
%original-height 4.2408in;  cropleft "0";  croptop "1";  cropright "1";
%cropbottom "0";  filename 'wfalhierar/wfal65-eps-converted-to.pdf';file-properties "XNPEU";}}
%}%
%BeginExpansion
\begin{figure}
[ptb]
\begin{center}
\includegraphics[
height=2.1904in,
width=4.422in
]%
{wfal65-eps-converted-to.pdf}%
\end{center}
\end{figure}
%EndExpansion
%

%TCIMACRO{\TeXButton{EndFrame}{\end{frame}}}%
%BeginExpansion
\end{frame}%
%EndExpansion
%

%TCIMACRO{\TeXButton{BeginFrame}{\begin{frame}}}%
%BeginExpansion
\begin{frame}%
%EndExpansion
%

%TCIMACRO{\QTR{frametitle}{Illustration of the cheapest path}}%
%BeginExpansion
\frametitle{Illustration of the cheapest path}%
%EndExpansion

For each node we have indicated the local toll value (left value) and the
global toll to pay to reach the closest reference node, who shares the same
color (label):

\begin{itemize}
\item Each reference node became a regional minimum of the graph.

\item The local toll of any other node is equal to the difference between its
global toll and the global toll of its lowest neighbor.

\item Each non reference node got its value and label from one of its lowest neighbors.

\item The values of the nodes are computed and the labels propagated along a
path of steepest descent.
\end{itemize}

%

%TCIMACRO{\FRAME{ftbpF}{1.429in}{1.0147in}{0pt}{}{}{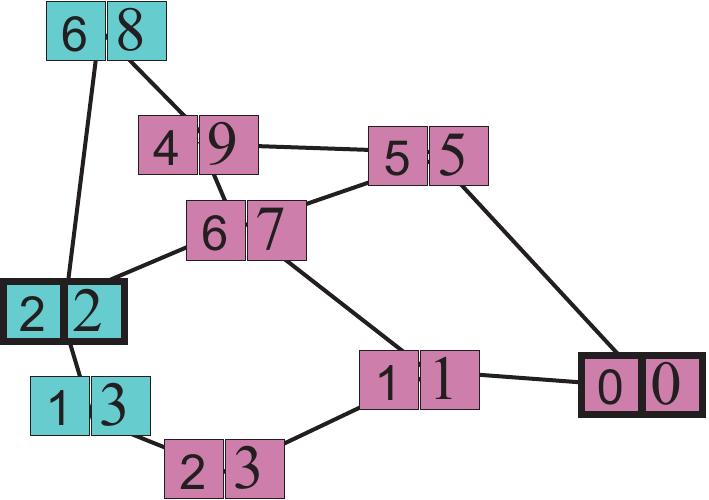}%
%{\special{ language "Scientific Word";  type "GRAPHIC";
%maintain-aspect-ratio TRUE;  display "USEDEF";  valid_file "F";
%width 1.429in;  height 1.0147in;  depth 0pt;  original-width 2.7481in;
%original-height 1.9347in;  cropleft "0";  croptop "1";  cropright "1";
%cropbottom "0";  filename 'wfalhierar/wfal66-eps-converted-to.pdf';file-properties "XNPEU";}}
%}%
%BeginExpansion
\begin{figure}
[ptb]
\begin{center}
\includegraphics[
height=1.0147in,
width=1.429in
]%
{wfal66-eps-converted-to.pdf}%
\end{center}
\end{figure}
%EndExpansion

The catchment basins of this surface are the SKIZ of the minima, both for the
topographic distance and for the depth 2 lexicographic distance.%

%TCIMACRO{\TeXButton{EndFrame}{\end{frame}}}%
%BeginExpansion
\end{frame}%
%EndExpansion
%

%TCIMACRO{\TeXButton{BeginFrame}{\begin{frame}}}%
%BeginExpansion
\begin{frame}%
%EndExpansion
%

%TCIMACRO{\QTR{frametitle}{Illustration of the cheapest path}}%
%BeginExpansion
\frametitle{Illustration of the cheapest path}%
%EndExpansion
%

%TCIMACRO{\FRAME{ftbpF}{1.429in}{1.0147in}{0pt}{}{}{wfal66-eps-converted-to.pdf}%
%{\special{ language "Scientific Word";  type "GRAPHIC";
%maintain-aspect-ratio TRUE;  display "USEDEF";  valid_file "F";
%width 1.429in;  height 1.0147in;  depth 0pt;  original-width 2.7481in;
%original-height 1.9347in;  cropleft "0";  croptop "1";  cropright "1";
%cropbottom "0";  filename 'wfalhierar/wfal66-eps-converted-to.pdf';file-properties "XNPEU";}}
%}%
%BeginExpansion
\begin{figure}
[ptb]
\begin{center}
\includegraphics[
height=1.0147in,
width=1.429in
]%
{wfal66-eps-converted-to.pdf}%
\end{center}
\end{figure}
%EndExpansion

The catchment basins of this surface are the SKIZ of the minima, both for the
topographic distance and for the depth 2 lexicographic distance. Each node is
the extremity of a geodesic line, which is the same both for the toll distance
and for the lexicographic distance of depth $2$ computed on the same
topographic surface.\ %

%TCIMACRO{\TeXButton{EndFrame}{\end{frame}}}%
%BeginExpansion
\end{frame}%
%EndExpansion
%

%TCIMACRO{\TeXButton{BeginFrame}{\begin{frame}}}%
%BeginExpansion
\begin{frame}%
%EndExpansion

\begin{center}
{\Large \alert{Top down or bottom up}}
\end{center}

%

%TCIMACRO{\TeXButton{EndFrame}{\end{frame}}}%
%BeginExpansion
\end{frame}%
%EndExpansion
%

%TCIMACRO{\TeXButton{BeginFrame}{\begin{frame}}}%
%BeginExpansion
\begin{frame}%
%EndExpansion
%

%TCIMACRO{\QTR{frametitle}{The influence of the depth k}}%
%BeginExpansion
\frametitle{The influence of the depth k}%
%EndExpansion

For increasing values of $k,$ the domain $\underbrace{\partial^{-}D}$ becomes
smaller and smaller, indicating that the number of equivalent catchment basins
compatible with a given lexicographic depth is reduced as the value of $k$
becomes bigger. This is in accordance with the fact that with increasing
values of $k,$ the pruning $\downarrow^{k}$ becomes more and more severe.%

%TCIMACRO{\TeXButton{EndFrame}{\end{frame}}}%
%BeginExpansion
\end{frame}%
%EndExpansion
%

%TCIMACRO{\TeXButton{BeginFrame}{\begin{frame}}}%
%BeginExpansion
\begin{frame}%
%EndExpansion
%

%TCIMACRO{\QTR{frametitle}{Obtaining catchment basins with $k-steepness$}}%
%BeginExpansion
\frametitle{Obtaining catchment basins with $k-steepness$}%
%EndExpansion

After applying the operator $\downarrow^{k}$ on a flooding graph $G,$ there
remain only NAP with $k$ steepness.\ 

The same is true if we apply the operator $\zeta^{(k-1)}$ in order to prune
edges of $G$ and subsequently restore the initial weights of the edges onto
the remaining edges.\ 

If the only NAPs\ remaining in the graph have a $k$ steepness, any shortest
path algorithm with a lower steepness will extract them In particular the most
simple algorithms for the distances $d_{1}$ or $d_{2}$ will extract catchment
basins of steepness $k.\ $%

%TCIMACRO{\TeXButton{EndFrame}{\end{frame}}}%
%BeginExpansion
\end{frame}%
%EndExpansion
%

%TCIMACRO{\TeXButton{BeginFrame}{\begin{frame}}}%
%BeginExpansion
\begin{frame}%
%EndExpansion
%

%TCIMACRO{\QTR{frametitle}{Illustration : lexicographic distances }}%
%BeginExpansion
\frametitle{Illustration : lexicographic distances }%
%EndExpansion

The follownig 3 images show respectively the shortest lexicographic distances
of depth 1, 2 and 3.\ If a path is the shortest path for a lexicographic
distance of depth $k,$ it also is a shortest path for a lexicographic distance
of smaller depth.\ The following three figures present the lexicographic
distances of depth 1, 2 and 3.\ The partition of catchment basins for the
distance 3 is also solution for the distance \ 2 and 1.\ Similarly the
partition for distance 2 is also solution for distance 1. The number of
solutions decreases with the lexicographic depth.%

%TCIMACRO{\TeXButton{EndFrame}{\end{frame}}}%
%BeginExpansion
\end{frame}%
%EndExpansion
%

%TCIMACRO{\TeXButton{BeginFrame}{\begin{frame}}}%
%BeginExpansion
\begin{frame}%
%EndExpansion
%

%TCIMACRO{\QTR{frametitle}{Illustration : lexicographic distance of depth 1}}%
%BeginExpansion
\frametitle{Illustration : lexicographic distance of depth 1}%
%EndExpansion
%

%TCIMACRO{\FRAME{ftbpF}{3.7102in}{0.9564in}{0pt}{}{}{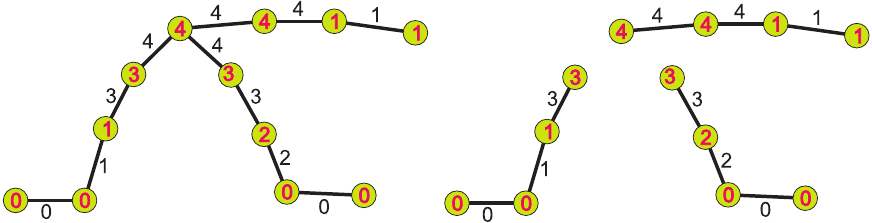}%
%{\special{ language "Scientific Word";  type "GRAPHIC";
%maintain-aspect-ratio TRUE;  display "USEDEF";  valid_file "F";
%width 3.7102in;  height 0.9564in;  depth 0pt;  original-width 3.3466in;
%original-height 0.8443in;  cropleft "0";  croptop "1";  cropright "1";
%cropbottom "0";  filename 'illu1-eps-converted-to.pdf';file-properties "XNPEU";}} }%
%BeginExpansion
\begin{figure}
[ptb]
\begin{center}
\includegraphics[
height=0.9564in,
width=3.7102in
]%
{illu1-eps-converted-to.pdf}%
\end{center}
\end{figure}
%EndExpansion
%

%TCIMACRO{\TeXButton{EndFrame}{\end{frame}}}%
%BeginExpansion
\end{frame}%
%EndExpansion
%

%TCIMACRO{\TeXButton{BeginFrame}{\begin{frame}}}%
%BeginExpansion
\begin{frame}%
%EndExpansion
%

%TCIMACRO{\QTR{frametitle}{Illustration : lexicographic distance of depth 2}}%
%BeginExpansion
\frametitle{Illustration : lexicographic distance of depth 2}%
%EndExpansion
%

%TCIMACRO{\FRAME{ftbpF}{3.7102in}{0.9564in}{0pt}{}{}{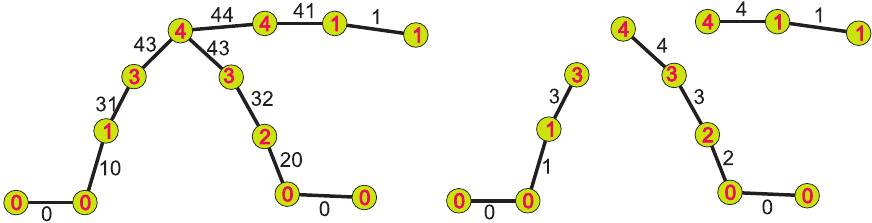}%
%{\special{ language "Scientific Word";  type "GRAPHIC";
%maintain-aspect-ratio TRUE;  display "USEDEF";  valid_file "F";
%width 3.7102in;  height 0.9564in;  depth 0pt;  original-width 3.3466in;
%original-height 0.8443in;  cropleft "0";  croptop "1";  cropright "1";
%cropbottom "0";  filename 'illu2-eps-converted-to.pdf';file-properties "XNPEU";}} }%
%BeginExpansion
\begin{figure}
[ptb]
\begin{center}
\includegraphics[
height=0.9564in,
width=3.7102in
]%
{illu2-eps-converted-to.pdf}%
\end{center}
\end{figure}
%EndExpansion
%

%TCIMACRO{\TeXButton{EndFrame}{\end{frame}}}%
%BeginExpansion
\end{frame}%
%EndExpansion
%

%TCIMACRO{\TeXButton{BeginFrame}{\begin{frame}}}%
%BeginExpansion
\begin{frame}%
%EndExpansion
%

%TCIMACRO{\QTR{frametitle}{Illustration : lexicographic distance of depth 3}}%
%BeginExpansion
\frametitle{Illustration : lexicographic distance of depth 3}%
%EndExpansion
%

%TCIMACRO{\FRAME{ftbpF}{3.6945in}{0.9597in}{0pt}{}{}{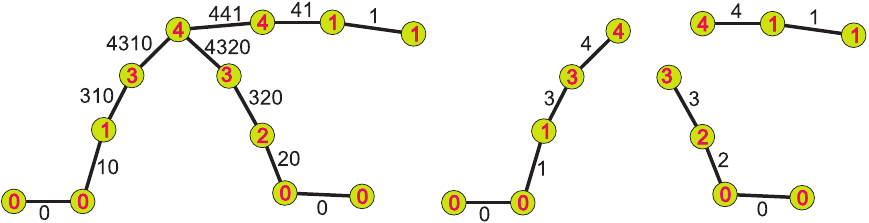}%
%{\special{ language "Scientific Word";  type "GRAPHIC";
%maintain-aspect-ratio TRUE;  display "USEDEF";  valid_file "F";
%width 3.6945in;  height 0.9597in;  depth 0pt;  original-width 3.3325in;
%original-height 0.8468in;  cropleft "0";  croptop "1";  cropright "1";
%cropbottom "0";  filename 'illu3-eps-converted-to.pdf';file-properties "XNPEU";}} }%
%BeginExpansion
\begin{figure}
[ptb]
\begin{center}
\includegraphics[
height=0.9597in,
width=3.6945in
]%
{illu3-eps-converted-to.pdf}%
\end{center}
\end{figure}
%EndExpansion
%

%TCIMACRO{\TeXButton{EndFrame}{\end{frame}}}%
%BeginExpansion
\end{frame}%
%EndExpansion
%

%TCIMACRO{\TeXButton{BeginFrame}{\begin{frame}}}%
%BeginExpansion
\begin{frame}%
%EndExpansion
%

%TCIMACRO{\QTR{frametitle}{Erosion and pruning}}%
%BeginExpansion
\frametitle{Erosion and pruning}%
%EndExpansion

Repeating the operator $\zeta G=$\ $(\downarrow\varepsilon_{e}e,\varepsilon
_{n}n)$ produces a decreasing series of partial graphs $\zeta^{(n)}%
G=\zeta\zeta^{(n-1)}G,$ which are steeper and steeper.\ In the following
figures, we present in red the edge which is not the lowest edge of one of its
extremities.\ After pruning this edge, the operator is applied
again.\ Applying $\zeta$ a number $n$ of times is equivalent to constructing
the partitions compatible with a SKIZ for a lexicographic distance of depth
$n+1.$ In our case, $\zeta^{(3)}G\ $produces a graph with the same edges as
the pruning $\chi$ applied to the graph where each edge has been weighted by
its lexicographic distance of depth $4$ to the nearest minimum, illustrated in
the previous figure.\ %

%TCIMACRO{\TeXButton{EndFrame}{\end{frame}}}%
%BeginExpansion
\end{frame}%
%EndExpansion
%

%TCIMACRO{\TeXButton{BeginFrame}{\begin{frame}}}%
%BeginExpansion
\begin{frame}%
%EndExpansion
%

%TCIMACRO{\QTR{frametitle}{Erosion and pruning}}%
%BeginExpansion
\frametitle{Erosion and pruning}%
%EndExpansion
%

%TCIMACRO{\FRAME{ftbpF}{3.6247in}{1.9244in}{0pt}{}{}{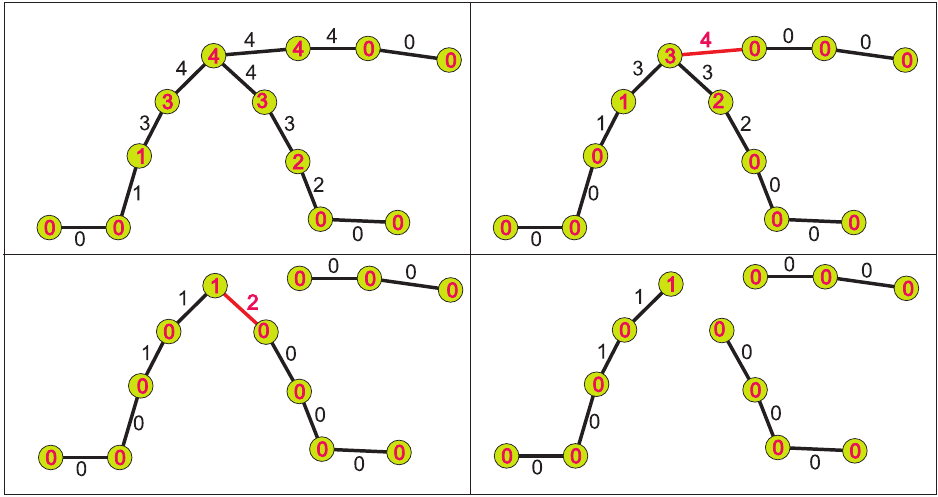}%
%{\special{ language "Scientific Word";  type "GRAPHIC";
%maintain-aspect-ratio TRUE;  display "USEDEF";  valid_file "F";
%width 3.6247in;  height 1.9244in;  depth 0pt;  original-width 3.5965in;
%original-height 1.8962in;  cropleft "0";  croptop "1";  cropright "1";
%cropbottom "0";  filename 'illu4-eps-converted-to.pdf';file-properties "XNPEU";}} }%
%BeginExpansion
\begin{figure}
[ptb]
\begin{center}
\includegraphics[
height=1.9244in,
width=3.6247in
]%
{illu4-eps-converted-to.pdf}%
\end{center}
\end{figure}
%EndExpansion
%

%TCIMACRO{\TeXButton{EndFrame}{\end{frame}}}%
%BeginExpansion
\end{frame}%
%EndExpansion
%

%TCIMACRO{\TeXButton{BeginFrame}{\begin{frame}}}%
%BeginExpansion
\begin{frame}%
%EndExpansion

\begin{center}
{\Large \alert{The hierarchy of  nested catchment basins }}
\end{center}

%

%TCIMACRO{\TeXButton{EndFrame}{\end{frame}}}%
%BeginExpansion
\end{frame}%
%EndExpansion
%

%TCIMACRO{\TeXButton{BeginFrame}{\begin{frame}}}%
%BeginExpansion
\begin{frame}%
%EndExpansion
%

%TCIMACRO{\QTR{frametitle}{Watershed and waterfalls}}%
%BeginExpansion
\frametitle{Watershed and waterfalls}%
%EndExpansion

The waterfall hierarchy has been introduced by S.Beucher in order to obtain a
multiscale segmentation of an image \cite{beucher90},\cite{waterfalls94}%
,\cite{waterfalls94}. Given a topographic surface $S_{1}$, typically a
gradient image of the image to segment, a first watershed transform produces a
first partition $\pi_{1}$.\ 

The waterfall flooding of $S_{1}$ consists in flooding each catchment basin up
to its lowest neighboring pass point, producing like that a topographic
surface with less minima.\ The watershed segmentation of this surface produces
a coarser partition $\pi_{2}$ where each region is the union of a number of
regions of $\pi_{1}.$%

%TCIMACRO{\TeXButton{EndFrame}{\end{frame}}}%
%BeginExpansion
\end{frame}%
%EndExpansion
%

%TCIMACRO{\TeXButton{BeginFrame}{\begin{frame}}}%
%BeginExpansion
\begin{frame}%
%EndExpansion
%

%TCIMACRO{\QTR{frametitle}{Chaining the waterfall floodings}}%
%BeginExpansion
\frametitle{Chaining the waterfall floodings}%
%EndExpansion

Flooding each catchment basin of $S_{1}$ up to its lowest pass point produces
a new topographic surface $S_{2}$ which will be submitted to the same
treatment as the initial surface $S_{1}.$ Its watershed transform produces a
second partition $\pi_{2}.\ $The partition $\pi_{2}$ is coarser than $\pi_{1}$
as each of its tile is a union of tiles of $\pi_{1}.${}The following figure
shows how the flooding of $S_{1}$ produces $S_{2},$ which, in the second
figure, has also been flooded.

\begin{center}
\includegraphics[
natheight=1.127700in,
natwidth=5.732800in,
height=61.2727pt,
width=303.2121pt
]{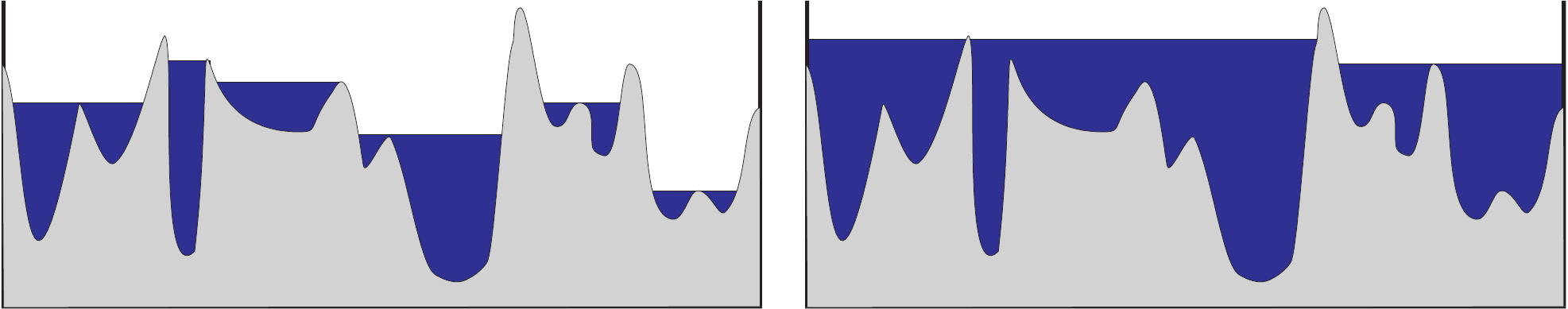}
\end{center}

The same process can be repeated several times until a completely flat surface
is created.\ The partitions obtained by the watershed construction on the
successive waterfall floodings are coarser and coarser : they form a hierarchy.%

%TCIMACRO{\TeXButton{EndFrame}{\end{frame}}}%
%BeginExpansion
\end{frame}%
%EndExpansion
%

%TCIMACRO{\TeXButton{BeginFrame}{\begin{frame}}}%
%BeginExpansion
\begin{frame}%
%EndExpansion
%

%TCIMACRO{\QTR{frametitle}{The waterfall hierarchy}}%
%BeginExpansion
\frametitle{The waterfall hierarchy}%
%EndExpansion

A\ 1 dimensional topographic surface is represented through the altitude of
its pass points in the figure below.\ The first flooding assigns weights to
the nodes and its watershed construction produces 4 catchment basins,
separated by 3 watershed lines. The second flooding has only 2 catchment
basins separated by 1 watershed line.\ The last image orders the watershed
lines of the initial image into 3 categories, in cyan the watershed lines
which disappeared during the first flooding, in dark blue those which
disappeared after the second flooding and in red the one which survived the
second flooding. \
%TCIMACRO{\FRAME{dtbpFU}{3.6555in}{1.3602in}{0pt}{\Qcb{{}}}{}{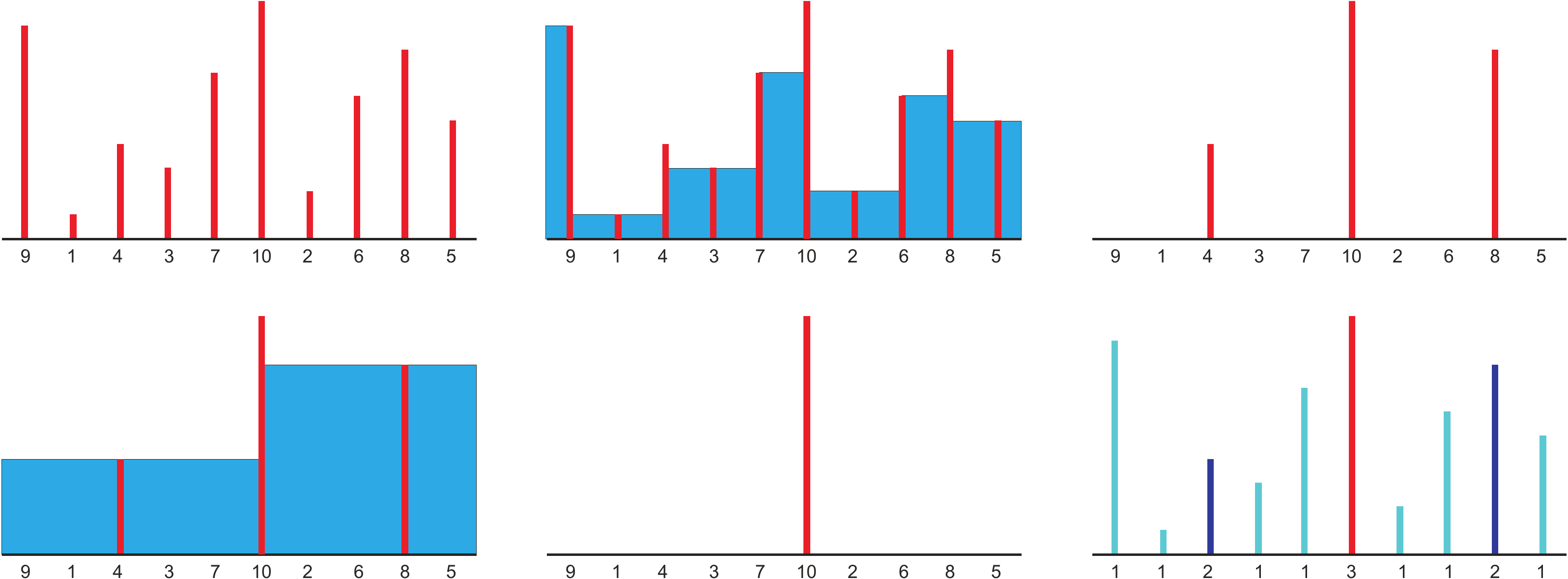}%
%{\special{ language "Scientific Word";  type "GRAPHIC";
%maintain-aspect-ratio TRUE;  display "USEDEF";  valid_file "F";
%width 3.6555in;  height 1.3602in;  depth 0pt;  original-width 12.7664in;
%original-height 4.6916in;  cropleft "0";  croptop "1";  cropright "1";
%cropbottom "0";  filename 'stocha/waterfal2-eps-converted-to.pdf';file-properties "XNPEU";}} }%
%BeginExpansion
\begin{center}
\includegraphics[
height=1.3602in,
width=3.6555in
]%
{waterfal2-eps-converted-to.pdf}%
\\
{}%
\end{center}
%EndExpansion
%

%TCIMACRO{\TeXButton{EndFrame}{\end{frame}}}%
%BeginExpansion
\end{frame}%
%EndExpansion
%

%TCIMACRO{\TeXButton{BeginFrame}{\begin{frame}}}%
%BeginExpansion
\begin{frame}%
%EndExpansion
%

%TCIMACRO{\QTR{frametitle}{Watershed and waterfalls}}%
%BeginExpansion
\frametitle{Watershed and waterfalls}%
%EndExpansion

Let us come back to the watershed on weighted graphs. The watershed of the
topographic surface produces a partition $\pi_{1}$ into catchment basins,
represented by its region adjacency graph $RAG_{1}$. The first flooding floods
each catchment basin up to its lowest neighboring pass point.\ This
corresponds to the erosion $\varepsilon_{ne}$ of the graph $RAG_{1}.$ The
theory of the watershed on weighted graphs can now be applied on this
graph.\ The resulting watershed appears in form of minimum spanning forest
$MSF_{1}$; each tree of the forest spans a catchment basin of the partition
$\pi_{2}.$

The next level of the hierarchy may be represented by a new region adjacency
graph $RAG_{2},$ whose nodes are obtained by contracting each tree of the
forest $MSF_{1}$ in the graph $RAG_{1}.$ Repeating the same treatment to the
graph $RAG_{2}$ produces the next level of the hierarchy, where each tree of
the minimum spanning forest $MSF_{2}$ has been obtained by merging several
trees of the $MSF_{1}.$ The process is then repeated until a graph is created
with only one regional minimum.\ %

%TCIMACRO{\TeXButton{EndFrame}{\end{frame}}}%
%BeginExpansion
\end{frame}%
%EndExpansion
%

%TCIMACRO{\TeXButton{BeginFrame}{\begin{frame}}}%
%BeginExpansion
\begin{frame}%
%EndExpansion
%

%TCIMACRO{\QTR{frametitle}{Construction of the level 1 of the hierarchy}}%
%BeginExpansion
\frametitle{Construction of the level 1 of the hierarchy}%
%EndExpansion

We start with an arbitrary node or edge weighted and connected graph $G.\ $As
explained earlier, we extract a graph flooding graph $G^{\prime}.\ $For a
steepness $k,$ we prune $G^{\prime}$ and get $\downarrow^{k}G\prime$.\ The
scissor operator $\chi$ produces a minimum spanning forest $F_{1}%
=\chi\downarrow^{k}G\prime$, spanning the finest watershed partition, the
lowest level of the hierarchy.\ 

The graph representing the second level of the hierarchy is obtained by
contracting all edges of the forest $F_{1}$ ; each tree becomes one node.\ The
nodes are connected by edges of the graph $G.\ $The result of this contraction
$G_{1}^{\prime}=\kappa(G,F_{1}).$ is again a connected graph, to which the
same treatment as previously can be applied.\ %

%TCIMACRO{\TeXButton{EndFrame}{\end{frame}}}%
%BeginExpansion
\end{frame}%
%EndExpansion
%

%TCIMACRO{\TeXButton{BeginFrame}{\begin{frame}}}%
%BeginExpansion
\begin{frame}%
%EndExpansion
%

%TCIMACRO{\QTR{frametitle}{Construction of the level 2 of the hierarchy}}%
%BeginExpansion
\frametitle{Construction of the level 2 of the hierarchy}%
%EndExpansion

Only the nodes of the graph $G_{1}^{\prime}$ are weighted : $G_{1}^{\prime}=$
$(e_{1},\diamond).\ $The nodes will be weighted with $\varepsilon_{ne}$ and we
get the graph $(e_{1},\varepsilon_{ne}e_{1}),$ which becomes a flooding graph
of steepness $k$ in $\downarrow^{k}(e_{1},\varepsilon_{ne}e_{1}).\ $A final
scissor operator creates a forest $F_{2}$ spanning nodes of $G_{1}^{\prime
}.\ F_{2}$ represents the level 2 of the hierarchy, and the nodes of $G$
spanned by each of its trees constitutes a catchment basin of level \ 2.\ 

The contraction $\kappa(G_{1}^{\prime},F_{1})$ produces again a connected
graph $G_{2}^{\prime}=(e_{1},\diamond)$ to which the same treatment may be applied.

This process is repeated until the graph $\kappa(G_{m}^{\prime},F_{m})$
contains a unique regional minimum of edges.\ %

%TCIMACRO{\TeXButton{EndFrame}{\end{frame}}}%
%BeginExpansion
\end{frame}%
%EndExpansion
%

%TCIMACRO{\TeXButton{BeginFrame}{\begin{frame}}}%
%BeginExpansion
\begin{frame}%
%EndExpansion
%

%TCIMACRO{\QTR{frametitle}{Emergence of a minimum spanning tree}}%
%BeginExpansion
\frametitle{Emergence of a minimum spanning tree}%
%EndExpansion

Each new minimum spanning forest $F_{n}$ makes use of new edges of the initial
graph.\ The union of all MSF constructed up to the iteration $m$, $%
%TCIMACRO{\tbigcup \limits_{k\leq m}}%
%BeginExpansion
{\textstyle\bigcup\limits_{k\leq m}}
%EndExpansion
F_{k},$ is a minimum spanning forest of the initial graph $G,$ which converges
to a MST of $G$ as $m$ increases.\ Hence the union of all edges present in the
series $F_{n}$ is a minimum spanning tree of the graph $G.$

The particular minimum spanning tree which emerges depends upon the depth $k$
of the pruning operator $\downarrow^{k}$, and of the particular choices among
alternative solutions made by the pruning operators $\chi$ used at each
step.\ We call the operator extracting the minimum spanning tree $\mu(G).$

The figure in the next slide presents a RAG, its flooding graph in the first
row, and in the second row the MSFs $%
%TCIMACRO{\tbigcup \limits_{k\leq m}}%
%BeginExpansion
{\textstyle\bigcup\limits_{k\leq m}}
%EndExpansion
F_{k}$ for $m=1,2,3.\ $The last one being the MST.%

%TCIMACRO{\TeXButton{EndFrame}{\end{frame}}}%
%BeginExpansion
\end{frame}%
%EndExpansion
%

%TCIMACRO{\TeXButton{BeginFrame}{\begin{frame}}}%
%BeginExpansion
\begin{frame}%
%EndExpansion
%

%TCIMACRO{\QTR{frametitle}{Emergence of a minimum spanning tree}}%
%BeginExpansion
\frametitle{Emergence of a minimum spanning tree}%
%EndExpansion
%

%TCIMACRO{\FRAME{ftbpF}{4.1293in}{2.6902in}{0pt}{}{}{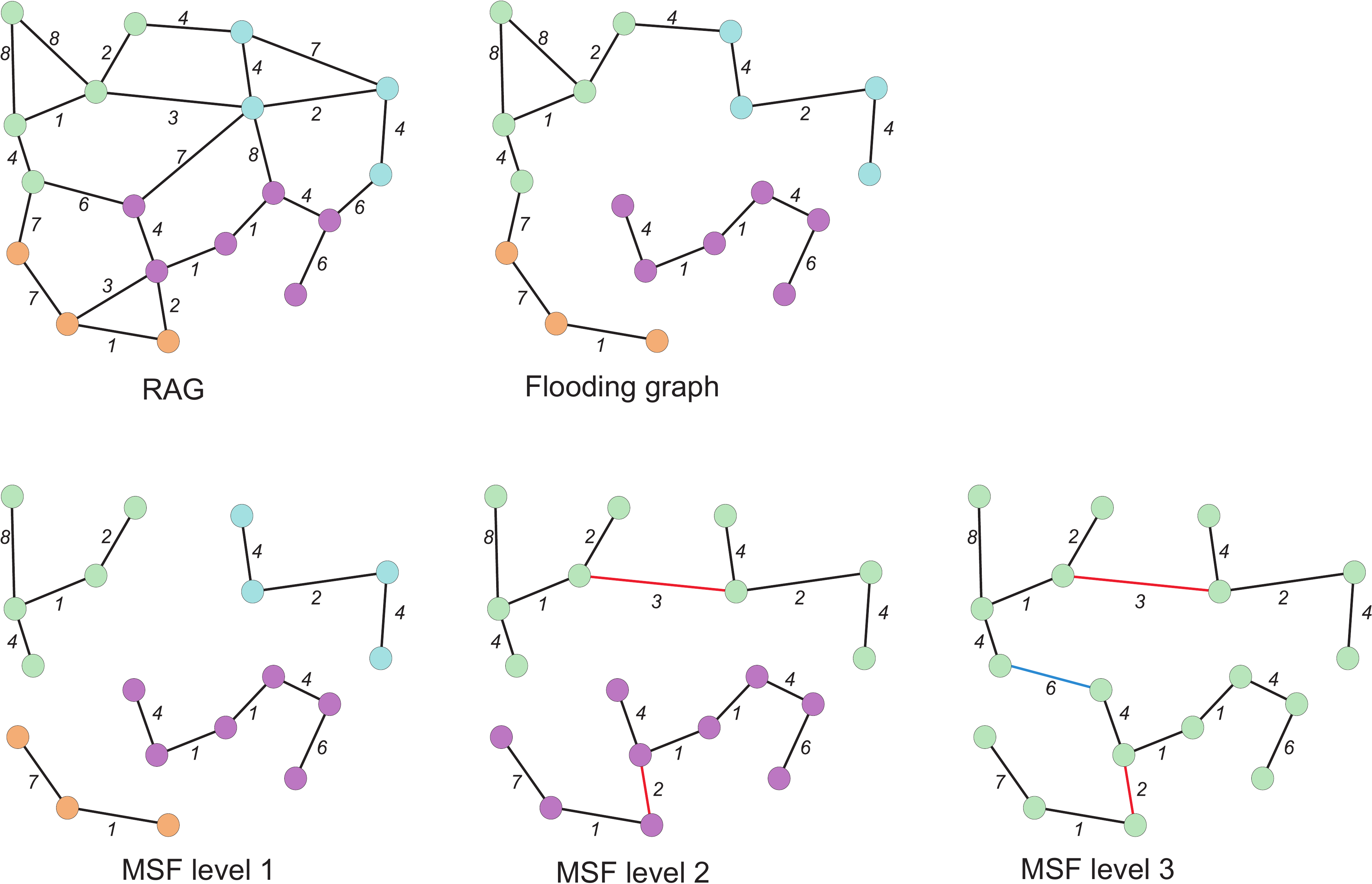}%
%{\special{ language "Scientific Word";  type "GRAPHIC";
%maintain-aspect-ratio TRUE;  display "USEDEF";  valid_file "F";
%width 4.1293in;  height 2.6902in;  depth 0pt;  original-width 15.1972in;
%original-height 9.862in;  cropleft "0";  croptop "1";  cropright "1";
%cropbottom "0";  filename 'wfal48-eps-converted-to.pdf';file-properties "XNPEU";}} }%
%BeginExpansion
\begin{figure}
[ptb]
\begin{center}
\includegraphics[
height=2.6902in,
width=4.1293in
]%
{wfal48-eps-converted-to.pdf}%
\end{center}
\end{figure}
%EndExpansion
%

%TCIMACRO{\TeXButton{EndFrame}{\end{frame}}}%
%BeginExpansion
\end{frame}%
%EndExpansion
%

%TCIMACRO{\TeXButton{BeginFrame}{\begin{frame}}}%
%BeginExpansion
\begin{frame}%
%EndExpansion
%

%TCIMACRO{\QTR{frametitle}{Emergence of a minimum spanning tree}}%
%BeginExpansion
\frametitle{Emergence of a minimum spanning tree}%
%EndExpansion

If $G$ and $G^{\prime}$ are two flooding trees, $G^{\prime}$ being a partial
tree of $G$ included in $G,$ then the pruning operators $\chi$ have less
choices for pruning $G\prime$ as for pruning $G.\ \ $For this reason the
family of MST derived for $G^{\prime}$ is included in the family of MST
derived for $G$: $\left\{  \mu(G^{\prime})\right\}  \subset\left\{
\mu(G)\right\}  .$

In particular if one considers the decreasing sequence of minimum spanning
forests $\chi\downarrow^{k}G$, one obtains decreasing families of MST : for
$k<l,$ we have $\left\{  \mu(\downarrow^{l}G)\right\}  \subset\left\{
\mu(\downarrow^{k}G)\right\}  .$ The choices are more and more
constrained.\ One gets minimum spanning trees which are steeper and steeper
for increasing $k$ in $\downarrow^{k}G.$%

%TCIMACRO{\TeXButton{EndFrame}{\end{frame}}}%
%BeginExpansion
\end{frame}%
%EndExpansion
%

%TCIMACRO{\TeXButton{BeginFrame}{\begin{frame}}}%
%BeginExpansion
\begin{frame}%
%EndExpansion
%

%TCIMACRO{\QTR{frametitle}{Discussion }}%
%BeginExpansion
\frametitle{Discussion }%
%EndExpansion

All MSTs of an edge weighted graph share a fundamental property : between any
two nodes of the graph, there exists a unique path in the MST\ which links
them, and this path has a minimal flooding weight. Thanks to this property,
replacing the graph $G$ by its MST $T$ permits to construct the catchment
basins linked to the $\operatorname*{lexdist}_{1}.$ This procedure is fast but
has no control on the quality of the result if one chooses an MST at random
(the one which is produced by the preferred MST extraction algorithm).\ Fast,
as there are as the number of edges is strongly reduced (in a tree : number of
edges = number of nodes -1). Limiting, as one has no control on the steepness
of the watershed which is ultimately extracted.\ %

%TCIMACRO{\TeXButton{EndFrame}{\end{frame}}}%
%BeginExpansion
\end{frame}%
%EndExpansion
%

%TCIMACRO{\TeXButton{BeginFrame}{\begin{frame}}}%
%BeginExpansion
\begin{frame}%
%EndExpansion
%

%TCIMACRO{\QTR{frametitle}{Discussion }}%
%BeginExpansion
\frametitle{Discussion }%
%EndExpansion

In order to get high quality partitions and segmentations, one has to
carefully chose the MST. The preceding slide has shown that the MSTs form a
decreasing family of trees as their steepness increases. Higher quality
segmentations will be obtained for a higher steepnes.\ In order to improve
things, one may prune the graph $G$ with the operator $\zeta_{k}$ before
extracting one of its MST, yielding a MST of k-steepness.

One has then to consider a trade-off between the required speed and the
quality of results one targets. This obviously also depends on the type of
graphs. Some graphs contain only very few MST ; in the extreme case, a graph
where all edges have distinct weights, contains a unique MST.%

%TCIMACRO{\TeXButton{EndFrame}{\end{frame}}}%
%BeginExpansion
\end{frame}%
%EndExpansion
%

%TCIMACRO{\TeXButton{BeginFrame}{\begin{frame}}}%
%BeginExpansion
\begin{frame}%
%EndExpansion

\begin{center}
{\Large \alert{Conclusion}}
\end{center}

%

%TCIMACRO{\TeXButton{EndFrame}{\end{frame}}}%
%BeginExpansion
\end{frame}%
%EndExpansion
%

%TCIMACRO{\TeXButton{BeginFrame}{\begin{frame}}}%
%BeginExpansion
\begin{frame}%
%EndExpansion
%

%TCIMACRO{\QTR{frametitle}{Outcome}}%
%BeginExpansion
\frametitle{Outcome}%
%EndExpansion

Starting with the flooding adjunction, we have introduced the flooding graphs,
for which node and edge weights may be deduced one from the other.

Each node weighted or edge weighted graph may be transformed in a flooding
graph, showing that there is no superiority in using one or the other, both
being equivalent.

We have introduced pruning operators $\downarrow^{k}$ and $\zeta^{(k)}$ which
extract subgraphs of increasing steepness.\ For an increasing steepness, the
number of never ascending paths becomes smaller and smaller.\ This reduces the
watershed zone, where catchment basins overlap.

The scissor operator $\chi,$ associating to each node outside the regional
minima one and only one edge choses a particular watershed partition. Again,
with an increasing steepness, the number of equivalent solutions becomes
smaller. Ultimaterly, for natural image, an infinite steepness leads to a
unique solution, as it is not likely that two absolutely identical non
ascending paths of infinite steepness connect a node with two distinct minima.%

%TCIMACRO{\TeXButton{EndFrame}{\end{frame}}}%
%BeginExpansion
\end{frame}%
%EndExpansion
%

%TCIMACRO{\TeXButton{BeginFrame}{\begin{frame}}}%
%BeginExpansion
\begin{frame}%
%EndExpansion
%

%TCIMACRO{\QTR{frametitle}{Outcome}}%
%BeginExpansion
\frametitle{Outcome}%
%EndExpansion

Finally, we have shown that the NAP paths remaining after the pruning
$\downarrow^{k}$ or $\zeta^{(k)}$ are identical with geodesics of
lexicographic distances.

We have shown how the path algebra may be adapted to lexicographic distances.

We have presented the Moore-Dijkstra algorithm for constructing skeletons by
zone of influence for lexicographic distances.\ 

For the depth 1, the lexicographic distance becomes the flooding distance and
the algorithms become classical algorithms for constructing minimum spanning forests.

For the depth 2, it is equivalent with the topographic distance.\ 

The beneficial effect of hierarchical queue algorithms is highlighted, as it
permits to correctly divide plateaus among neighboring catchment basins.

We also presented the core expanding algorithms which are particularly
efficient for lexicographic distances.%

%TCIMACRO{\TeXButton{EndFrame}{\end{frame}}}%
%BeginExpansion
\end{frame}%
%EndExpansion
%

%TCIMACRO{\TeXButton{BeginFrame}{\begin{frame}}}%
%BeginExpansion
\begin{frame}%
%EndExpansion
%

%TCIMACRO{\QTR{frametitle}{Outcome}}%
%BeginExpansion
\frametitle{Outcome}%
%EndExpansion

The waterfall hierarchy is obtained by contracting the trees of a first
watershed construction and the new graph submitted to a novel watershed
construction.\ The process is iterated until only one region remains.

The union of the edges of all forests produces constitute a minimum spanning
tree of the initial graph.

MST and MSF\ defined before only for edge weighted graphs may be extended to
node weighted graphs.

MST and MSF\ may be ordered into nested classes according to their steepness.

The classical order is thus inverted : classically a MST is extracted from the
graph as a first step, and a MSF derived from it, by cutting a number of its
edges.\ This way of doing leaves not much control as the quality of the result
depends upon the choice of the MST.\ 

As MST and MSF\ may be ordered into nested classes according to their steepness.%

%TCIMACRO{\TeXButton{EndFrame}{\end{frame}}}%
%BeginExpansion
\end{frame}%
%EndExpansion
%

%TCIMACRO{\TeXButton{BeginFrame}{\begin{frame}}}%
%BeginExpansion
\begin{frame}%
%EndExpansion
%

%TCIMACRO{\QTR{frametitle}{Outcome}}%
%BeginExpansion
\frametitle{Outcome}%
%EndExpansion

The algebraic approach to the watershed presented here sets the stage in which
a number of earlier definitions and algorithms may be reinterpreted : 

\begin{itemize}
\item the waterfall algorithm \cite{waterfalls94}

\item the watershed line defined a zone of influence of the minima for various
distances, in particular the flooding distance and the topographic distance
\cite{meyer94},\cite{Najman199499}.\ In the case of the flooding distance
applied on an edge weighted graph, one finds the graph cuts
\cite{Coustywshedcut}, and the algorithm for constructing a waterfall
hierarchy described in \cite{waterfallsbs}

\item the role of the hierarchical queues for a correct division of plateaus
between catchment basins if one uses myopic distances \cite{meyer91}
\end{itemize}

We also explored the place of the choice for constructing a watershed
partition.\ The choice becomes more and more restricted as one considers
lexicographic distances with increasing depth. 

The waterfall hierarchy transposed  to MST provides a hierarchical
decomposition of the MST. %

%TCIMACRO{\TeXButton{EndFrame}{\end{frame}}}%
%BeginExpansion
\end{frame}%
%EndExpansion
%

%TCIMACRO{\TeXButton{BeginFrame}{\begin{frame}}}%
%BeginExpansion
\begin{frame}%
%EndExpansion
%

%TCIMACRO{\QTR{frametitle}{Outcome}}%
%BeginExpansion
\frametitle{Outcome}%
%EndExpansion

Old algorithms may be reinterpreted, in particular those relying on pruning
and labeling graphs \cite{Bieniek2000907}, \cite{lemonth}, \cite{Noguet}%
,\cite{marcinwshed}.\ 

The classical algorithm for constructing watersheds derived from the
hierarchical queues is a particular case of core expanding algorithm. 

In order to reduce the number of equivalent watershed partitions one is able
to extract from a flooding graph, one may use a mixture of pruning
$\zeta^{(k)}$ up to a given depth, and on the resulting graph apply a myopic
algorithm, with a lexicographic depth of 1\ or 2.%

%TCIMACRO{\TeXButton{EndFrame}{\end{frame}}}%
%BeginExpansion
\end{frame}%
%EndExpansion
%

%TCIMACRO{\TeXButton{BeginFrame}{\begin{frame}}}%
%BeginExpansion
\begin{frame}%
%EndExpansion
%

%TCIMACRO{\QTR{frametitle}{References}}%
%BeginExpansion
\frametitle{References}%
%EndExpansion

The watershed transform introduced by S.Beucher and C.\ Lantuejoul
\cite{beucher79} is one of the major image segmentation tools, used in the
community of mathematical morphology and beyond. If the watershed is a
successful concept, there is another side of the coin: a number of definitions
and algorithms coexist, claiming to construct a wartershed line or catchment
basins, although they obviously are not equivalent. We have presented how the
idea was conceptualized and implemented as algorithms or hardware solutions in
a brief note :" The watershed concept and its use in segmentation : a brief
history" (arXiv:1202.0216v1), which contains an extensive bibliography. Here
we give a more restricted list of references.%

%TCIMACRO{\TeXButton{EndFrame}{\end{frame}}}%
%BeginExpansion
\end{frame}%
%EndExpansion

%\begin{thebibliography}{99}

\setbeamertemplate{bibliography item}[text]

\end{document}